\documentclass[twoside,11pt]{article}

\usepackage[preprint]{jmlr2e}
\editor{}

\usepackage{amsmath}
\usepackage{algorithm}
\usepackage{algpseudocode}
\usepackage{booktabs}
\usepackage{longtable}
\usepackage{pdflscape}
\usepackage{array}
\usepackage{xcolor} 
\usepackage[caption=false,font=small]{subfig}
\usepackage{cleveref}

\newtheorem{thm}{Theorem}
\newtheorem{lem}[thm]{Lemma}

\newtheorem{assum}[thm]{Assumption}

\ShortHeadings{Adam Improves Muon}{Zhang, Liu and Schaeffer}

\firstpageno{1}

\firstpageno{1}
\usepackage{graphicx} 
\usepackage{amsmath,amssymb,makecell,multicol}
\usepackage{booktabs}
\usepackage[letterpaper,top=1in,bottom=1in,left=1in,right=1in,marginparwidth=1.75cm]{geometry}
\usepackage{algorithm}
\usepackage{algpseudocode}
\usepackage{xcolor}
\usepackage{makecell}
\usepackage{array}
\usepackage{booktabs}
\usepackage{longtable}    
\usepackage{lscape}       
\usepackage{pdflscape}    
\usepackage{geometry}     

\usepackage{hyperref}
\usepackage{cleveref} 
\usepackage{float}                            
\usepackage[caption=false,font=small]{subfig} 
\usepackage{multirow}

\DeclareMathOperator{\diag}{diag}

\usepackage[parfill]{parskip}

\title{Adam Improves Muon: Adaptive Moment Estimation\\ with Orthogonalized Momentum}
\author{%
  \name Minxin Zhang \email minxinzhang@math.ucla.edu \\
  \name Yuxuan Liu \email yxliu@math.ucla.edu \\
  \name Hayden Schaeffer \email hayden@math.ucla.edu \\
  \addr Department of Mathematics \\
  University of California, Los Angeles \\
  Los Angeles, CA 90095, USA
}

\usepackage{xspace}
\newcommand{\NAMO}{NAMO\xspace}
\newcommand{\NAMOD}{NAMO\mbox{-}D\xspace}

\newcommand{\qed}{\hfill$\blacksquare$}

\begin{document}
\newcommand{\Hess}{\nabla^2}  
\renewcommand{\Re}{\mathbb{R}} 
\newcommand{\dotP}[2]{\left\langle #1, #2 \right\rangle}
\newcommand{\tu}{\tilde{u}}
\newcommand{\tf}{\tilde{f}}
\newcommand{\abs}[1]{\left|#1\right|}
\newcommand{\norm}[1]{\left\|#1\right\|}
\newcommand{\normF}[1]{\norm{#1}_{\scriptscriptstyle F}}
\newcommand{\Set}[1]{\left\{#1\right\}}
\newcommand{\sgn}{\text{sgn}}
\newcommand{\trace}[1]{\operatorname{Tr}\left(#1\right)}
\renewcommand{\vec}[1]{\operatorname{vec}\left(#1\right)}
\newcommand{\argmin}{\operatorname*{argmin}}
\newcommand{\spa}[1]{\operatorname*{span}\left\{#1\right\}}
\newcommand{\bigO}{\mathcal O}
\renewcommand{\L}{\mathcal L}
\newcommand{\N}{\mathcal N}
\newcommand{\X}{\mathcal X}
\newcommand{\Z}{\mathcal Z}
\newcommand{\K}{\mathcal K}
\newcommand{\bv}{\mathbf v}
\newcommand{\bd}{\mathbf d}
\newcommand{\Zd}{{\scriptscriptstyle \mathcal Z}}
\newcommand{\floor}[1]{\left\lfloor #1 \right\rfloor}
\newcommand{\ceil}[1]{\left\lceil #1 \right\rceil}
\newcommand{\pr}[1]{\mathbb P\left( #1 \right)}
\newcommand{\expect}{\mathbb E}
\newcommand{\Var}[1]{\textrm{Var} \left[#1\right]}
\newcommand{\Cov}[1]{\textrm{Cov} \left[#1\right]}
\newcommand{\xstar}{x^\ast}
\newcommand{\Grad}{\nabla\!}
\newcommand{\comp}[1]{\left[#1 \right]}
\newcommand{\rank}[1]{\textrm{rank} \left(#1\right)}

\newcommand{\GD}{\textrm{\tiny GD}}
\newcommand{\OGD}{\textrm{\tiny OGD}}
\newcommand{\orth}[1]{\operatorname{Orth}\left(#1\right)}
\maketitle

\begin{abstract} 
Efficient stochastic optimization typically integrates an update direction that performs well in the deterministic regime with a mechanism adapting to stochastic perturbations. While Adam uses adaptive moment estimates to promote stability, Muon utilizes the weight layers' matrix structure via orthogonalized momentum, showing superior performance in large language model training. We propose a new optimizer and a diagonal extension, \NAMO and \NAMOD, providing the first principled integration of orthogonalized momentum with norm-based Adam-type noise adaptation. \NAMO scales orthogonalized momentum using a single adaptive stepsize, preserving orthogonality while improving upon Muon at negligible additional cost. \NAMOD instead right-multiplies orthogonalized momentum by a diagonal matrix with clamped entries. 
This design enables neuron-wise noise adaptation and aligns with the common near block-diagonal Hessian structure. Under standard assumptions, we establish optimal convergence rates for both algorithms in the deterministic setting and show that, in the stochastic setting, their convergence guarantees adapt to the noise level of stochastic gradients. Experiments on pretraining GPT-2 models demonstrate improved performance of both \NAMO and \NAMOD compared to the AdamW and Muon baselines, with \NAMOD achieving further gains over \NAMO via an additional clamping hyperparameter that balances the competing goals of maintaining a well-conditioned update direction and leveraging fine-grained noise adaptation. 
\end{abstract}

\begingroup
\makeatletter
\renewcommand\@makefntext[1]{%
  \parindent 1em
  \noindent #1%
}
\footnotetext{~~~~The code is available at: \url{https://github.com/minxin-zhg/namo}.}
\makeatother
\endgroup

\section{Introduction}
Stochastic optimization is central to modern large-scale learning, where algorithms must update iterates using only noisy estimates of first-order information. A key challenge is to balance two goals: selecting an update direction that is effective in the noise-free regime, and incorporating mechanisms that adapt to stochastic perturbations. From this viewpoint, an efficient stochastic optimizer can be understood as combining two ingredients: (i) a principled direction-selection rule that performs well when gradients are exact, and (ii) an adaptive stepsize mechanism that stabilizes the iterates by attenuating updates under gradient uncertainty. This perspective offers a useful lens for interpreting existing methods and for guiding the development of new algorithms with strong convergence behavior. In this paper, we design a new optimization algorithm that couple the structural advantages of an orthogonalized update direction with adaptive moment estimation to account for gradient noise. We provide theoretical convergence guarantees and demonstrate strong empirical performance in training large language models (LLMs).

Adaptive methods such as Adam \citep{kingma2014adam} and its decoupled weight-decay variant AdamW \citep{loshchilov2017decoupled} have long been the de facto optimizers for large-scale training. Their coordinate-wise adaptive stepsizes promote training stability and reduce sensitivity to hyperparameter choices. Given momentum coefficients $\beta_1,\beta_2\in[0,1)$, Adam iteratively updates a biased first-moment estimate of the stochastic gradient:
\[
m_t = \beta_1 m_{t-1} + (1-\beta_1) g_t,
\]
and a biased second raw-moment estimate:
\[
v_t = \beta_2 v_{t-1} + (1-\beta_2) g_t^2,
\]
where $g_t$ denotes the stochastic gradient at iteration $t$, and the square is applied elementwise. With the standard bias corrections $\hat m_t :=m_t/(1-\beta_1^t)$ and $\hat v_t := v_t/(1-\beta_2^t)$, the parameters are updated via:
\[
\theta_t = \theta_{t-1} - \eta \,\frac{\hat m_t}{\sqrt{\hat v_t}+\epsilon},
\]
where $\epsilon>0$ is a small fixed scalar that avoids division by zero, and $\eta>0$ is the learning rate. Throughout the paper, we refer to the hyperparameter $\eta$
as the \emph{learning rate}, and to its combination with an adaptive scaling as the effective \emph{stepsize}.
The ratio of the first- and second-moment estimates, $\hat m_t/\sqrt{\hat v_t}$, is often interpreted as a {signal-to-noise ratio} (SNR). When the update direction is dominated by noise, or as the iterates approach a stationary point, the resulting effective stepsize decreases, which is desirable for stable convergence.
When $\beta_1=\beta_2,$ \citep{orvieto2025search} interprets Adam as performing online estimation of the mean and variance of the stochastic gradients. 
Moreover, Adam's update rule can be viewed as combining a sign descent direction with a variance adaptation component. 
Theoretical analysis and experiments in \citep{balles2018dissecting} show that the sign-descent component can lead to Adam’s 
adverse generalization behavior \citep{wilson2017marginal}, indicating that coupling Adam-type noise adaptation
with a different descent direction may yield improved generalization relative to Adam.

While Adam and other standard variants of stochastic gradient descent (SGD) treat trainable parameters of neural networks as flattened vectors, 
the Muon optimizer \citep{jordan2024muon} exploits their matrix structure by updating weight matrices with orthogonalized momentum.
Given a matrix $M\in\Re^{m\times n}$, if $M=U\Sigma V^T$ is its reduced singular value decomposition (SVD), then its orthogonalization is given by:
\[
\orth{M} := UV^T.
\] The orthogonalization of $M$, $\orth{M}$, is also called the orthogonal factor in the polar decomposition and is the nearest orthogonal matrix to $M$ in the Frobenius norm \citep{higham2008functions}. Muon updates matrix-structured parameters $\Theta\in\Re^{m\times n}$ by: \[
\Theta_t = \Theta_{t-1}-\eta O_t,
\]
where $\eta>0$ is the prescribed learning rate, and $O_t\approx \orth{M_t}$ denotes an approximate orthogonalization of the momentum
matrix $M_t$ at iteration $t$, computed via Newton-Schultz iterations \citep{modula-docs}.
Growing evidence demonstrates that Muon achieves superior empirical performance compared to Adam in LLM training 
\citep{jordan2024modded, team2025kimi, liu2025muon}. 
Recent studies show that orthogonalized updates can accelerate convergence, facilitate more reliable hyperparameter transfer across model sizes \citep{boreiko2025towards, shah2025practical}, and learn more effectively from heavy-tailed data \citep{wang2025muon}.
In the deterministic setting without momentum, Muon's orthogonalized update direction
can be interpreted as the steepest descent direction under the spectral norm, 
which is shown to be effective for training in deep learning \citep{davis2025spectral}.
In stochastic settings, however, matrix orthogonalization is an unbounded operation \citep{higham1986computing} that may amplify the impact of noise in the 
original momentum matrix, leading to training instability \citep{he2025root} 
and increased sensitivity to hyperparameter choices \citep{crawshaw2025exploration}.
This suggests that pairing Muon’s orthogonalized updates with an explicit noise-adaptation mechanism could improve training robustness 
and further boost performance.

In this work, we develop a theoretically principled integration of Adam-type variance adaptation with an orthogonalized update direction.
Prior work shows that orthogonalization decouples the direction and magnitude of a matrix update through a norm-duality characterization 
\citep[Proposition~5]{bernstein2024old}, suggesting that a norm-based rescaling of the orthogonalized update is a natural design choice.
We pair norm-based moment estimation with orthogonalized momentum and propose a new optimizer, \NAMO (\textbf{N}orm-Based \textbf{A}daptive \textbf{M}oment Estimation with \textbf{O}rthogonalized Momentum), together with a diagonal extension, \NAMOD.
\NAMO scales the orthogonalized momentum with a norm-based adaptive scalar, preserving the orthogonality of the update direction while
adapting the effective stepsize to the noise level. \NAMOD scales the orthogonalized momentum with a right-multiplied diagonal matrix, allowing an 
individual adaptive stepsize for each neuron. We establish theoretical convergence rates for both algorithms, and demonstrate that they outperform the AdamW and Muon 
baselines in GPT-2 pretraining. 

\subsection{Related work}
Unlike Adam’s coordinate-wise adaptive stepsizes, our methods introduce structured stepsize adaptation for orthogonalized updates: \NAMO scales the orthogonalized momentum using a single adaptive stepsize, whereas \NAMOD employs a column-wise adaptive stepsize for the orthogonalized momentum. Related simplifications of Adam have also been studied. In \citep{chezhegov2024clipping}, a clipped norm-based Adam-type stepsize is applied to the momentum, and convergence is analyzed under the assumptions of bounded gradients and heavy-tailed noise. Motivated by the near block-diagonal Hessian structure of neural networks, Adam-mini \citep{zhang2024adam} partitions parameters into blocks and assigns a single adaptive learning rate per block, matching Adam’s performance while reducing memory cost. 

Several adaptive variants of Muon have been proposed as well. AdaMuon \citep{si2025adamuon} and NorMuon \citep{li2025normuon} combine Muon’s orthogonalized momentum with Adam-type scaling variants, though without theoretical convergence guarantees. AdaGO \citep{zhang2025adagrad} scales the orthogonalized momentum with an adaptively decaying stepsize and achieves optimal theoretical convergence rates, although its performance in LLM training has yet to be investigated. Layerwise adaptive learning rates for Muon are proposed in \citep{hao2025noise} based on gradient-variance estimation that requires evaluating a nuclear norm, thereby increasing the per-iteration cost. Tuning-robust variants of Muon are explored in \citep{crawshaw2025exploration}. PRISM \citep{yang2026prism} augments Muon with a moment-based adaptive preconditioner, but incurs higher additional computational cost than our proposed algorithms, 
and does not provide theoretical convergence guarantees.
DeVA \citep{song2026decoupling} decouples variance adaptation from scale-invariant sign descent, yielding an
Adam-style scaling for Muon's orthogonalized momentum that require high computational and memory overhead for maintaining Kronecker preconditioners and periodic eigendecompositions.


\subsection{Contributions and Organization}
We propose a new optimization algorithm and a diagonal extension for problems with matrix-structured parameters, \NAMO and \NAMOD, 
which provide the first theoretically principled integration of an orthogonalized update direction with noise adaptation based on Adam-type moment estimation.
\NAMO scales the orthogonalized momentum using a single adaptive stepsize, incurring only a negligible $\bigO(mn)$ 
additional computational cost and no additional memory overhead, thus providing a useful improvement for Muon’s performance.
On the other hand, \NAMOD scales the orthogonalized momentum by right-multiplying with a diagonal matrix $D_t$, thereby assigning a neuron-wise adaptive stepsize determined by the column norms of the stochastic gradient and momentum matrices. 
This column-wise scaling enables finer-grained noise adaptation and aligns with the near block-diagonal Hessian structure commonly 
observed in neural networks \citep{dong2025towards, an2025asgo}, while no longer strictly preserving the orthogonalized update direction.
The diagonal entries of $D_t$ admit an upper bound and are clamped toward the average, ensuring that
\NAMOD's update direction remains well-conditioned.
Under standard smoothness and unbiased bounded-variance noise assumptions, we establish optimal convergence rates for both algorithms in the deterministic setting and 
show that, in the stochastic setting, their convergence guarantees adapt to the noise level of the stochastic gradients and attain the optimal rate when the batch size is sufficiently large.
Experiments on pretraining GPT-2 models demonstrate that both \NAMO and \NAMOD outperform AdamW and Muon baselines. 
\NAMOD achieves further gains over \NAMO through an additional clamping hyperparameter $c$, which balances the competing goals of maintaining a 
well-conditioned update direction and leveraging finer-grained noise adaptation.

The rest of the paper is organized as follows. Section~\ref{sec:alg} describes the proposed algorithms, 
and Section~\ref{sec:analysis} provides the convergence analysis, with detailed proofs deferred to Appendices~\ref{appendix:lemma}--\ref{pf:dsm}.
Section~\ref{sec:exp} presents experiments on GPT-2 pretraining, and Section~\ref{sec:conclude} concludes the paper.

\section{New Optimization Algorithms: \NAMO and \NAMOD}\label{sec:alg}
In this section, we describe the proposed algorithm \NAMO and its diagonal extension \NAMOD.
We consider optimization problems with  matrix-structured parameters of the form: \[
\min_{\Theta\in \Re^{m\times n}} \L(\Theta),
\] where $\L:\Re^{m\times n} \rightarrow \Re$ is the (nonconvex) loss function.
We impose the following standard smoothness of loss function and bounded-variance noise assumptions.
\begin{assum}\label{assum:func}
The gradient of $\L(\Theta)$ is Lipschitz continuous, i.e., for arbitrary $\Theta, \Theta'\in\Re^{m\times n}$:
\begin{equation}\label{eq:lipG}
\norm{\Grad\L(\Theta)-\Grad\L(\Theta')}_*\le L\norm{\Theta-\Theta'}_2,
\end{equation}
for some constant $L>0,$ where $\norm{\cdot}_*$ and $\norm{\cdot}_2$ denote the nuclear norm and the spectral norm respectively.
\end{assum}
Note that gradient of the loss, $\Grad\L(\Theta)$, is in $\Re^{m\times n}$ and recall that the nuclear norm is the sum of the singular values while the spectral norm is the maximum singular value. 
\begin{assum}\label{assum:noise}
At each iteration $t,$ the stochastic gradient $G_t$ is an unbiased estimate of the true gradient, i.e.,
$\expect[G_t] = \Grad\L(\Theta_{t-1}),$
with a uniformly bounded variance:
\[\expect\left[\normF{G_t-\Grad\L(\Theta_{t-1})}^2\right]\le \frac{\sigma^2}{b},\]
where $b\ge 1$ is the batch size and $\normF{\cdot}$ denotes the Frobenius norm.
\end{assum}
Assumption~\ref{assum:func} is equivalent to a more commonly seen smooth assumption:
\begin{equation}\label{eq:lipGF}
\normF{\Grad\L(\Theta)-\Grad\L(\Theta')}\le L'\normF{\Theta-\Theta'}
\end{equation}
for a different Lipschitz constant $L'>0$. 
Since orthogonalized gradient descent can be interpreted as steepest descent under the spectral norm \citep{shen2025convergence}, 
we state the smoothness assumption in the equivalent form~\eqref{eq:lipG}.

Details of \NAMO and \NAMOD are summarized in Algorithms~\ref{alg:namo} and \ref{alg:namod}, respectively. 
For the theoretical analysis in Section~\ref{sec:analysis}, we assume exact orthogonalization in both algorithms, 
as is standard in existing works \citep{shen2025convergence, sato2025analysis, li2025note, chen2025muon, kovalev2025understanding, pethick2025training}.
In practice, exact orthogonalization can be expensive to compute, 
and for the experiments in Section~\ref{sec:exp} we use Newton--Schulz iterations to obtain an approximate orthogonalization, as in the Muon optimizer.

\NAMO maintains a biased second raw-moment estimate of the squared Frobenius norm:  \[
v_t = \mu_2 v_{t-1}+(1-\mu_2)\normF{G_t}^2,
\] 
where $G_t$ is the stochastic gradient matrix at iteration $t$, and the momentum matrix $M_t$ is a biased first moment estimate of the stochastic gradient.
Applying bias correction yields $\hat M_t := M_t/(1-\mu_1^t)$ and $\hat v_t := v_t/(1-\mu_2^t)$.
The parameters $\Theta\in \Re^{m\times n}$ are updated by: \begin{equation}\label{eq:alphat0}
\Theta_{t} = \Theta_{t-1}-
\eta\alpha_t O_{t},
\end{equation}
where $\eta>0$ is a prescribed learning rate, and the orthogonalized momentum $O_t$ is adaptively scaled by a scalar: \[
\alpha_t:=\frac{ (1-\mu_2^t)^{\frac{1}{2}}}{1-\mu_1^t}\frac{\norm{M_t}}{\sqrt{v_{t}}+\epsilon}= \frac{\norm{\hat M_t}}{\sqrt{\hat v_t}+\epsilon_t},
\] 
with $\epsilon_t:=\epsilon/\sqrt{1-\mu_2^t}$ for a small $\epsilon>0$. When stochastic gradients are noisy, or when the iterates approach a stationary point,
$\alpha_t$ is small, promoting stable convergence. 

\begin{algorithm}[htp]
\caption{\NAMO: \small{\textbf{N}orm-Based \textbf{A}daptive \textbf{M}oment Estimation with \textbf{O}rthogonalized Momentum}}\label{alg:namo}
\begin{algorithmic}[1]
\Require Learning rate $\eta$, momentum $\mu_1,\mu_2\in [0,1)$, batch size $b$, $\epsilon>0$
\State Initialize $\Theta_0\in\Re^{m\times n}$, $M_{0} = 0$, $v_0=0$
\For{$t = 1, 2, \dots,T$}
    \State Sample a minibatch of size $b$ and compute stochastic gradient $G_{t} = \nabla \mathcal{L}_{t}(\Theta_{t-1})$
    \State $M_{t} \gets \mu_1 M_{t-1} + (1-\mu_1)G_t$
        \State $v_{t} \gets \mu_2 v_{t-1}+(1-\mu_2)\normF{G_t}^2$
    \State $O_{t} \gets \orth{M_{t}}$
    \State $\alpha_t \gets \frac{\sqrt{1-\mu_2^t}}{1-\mu_1^t}\frac{\normF{M_t}}{\sqrt{v_{t}}+\epsilon}$
    \State Update parameters $\Theta_{t} \gets \Theta_{t-1} \;-\; 
    \eta\alpha_t O_{t} $ 
\EndFor
\State \Return $\Theta_{T}$
\end{algorithmic}
\end{algorithm}

While \NAMO scales the orthogonalized momentum using a single adaptive stepsize, \NAMOD applies a column-wise scaling based on the column norms of the momentum matrix. This design assigns an individual adaptive stepsize to each neuron and is consistent with the near block-diagonal Hessian structure commonly observed in neural networks \citep{dong2025towards, an2025asgo}.
For each $j=1,2,\cdots,n$, \NAMOD maintains a biased second raw-moment estimate of the squared norm of the $j$-th colum of the stochastic gradient:
\[
\comp{\bv_t}_j = \mu_2 \comp{\bv_t}_{j-1}+(1-\mu_2)\norm{\comp{G_t}_{:j}}^2.
\] To write this in an equivalent vector form, define the operator $\N_c:\Re^{m\times n}\to\Re^n$ by: \[
\comp{\N_c(M)}_j := \norm{M_{:j}}, \quad j=1,2,\cdots,n,
\] where $\norm{\cdot}$ denotes the Euclidean norm. Then the iterates can be written as: \[
\bv_{t} = \mu_2 \bv_{t-1}+(1-\mu_2)\N_c(G_t)\odot \N_c(G_t).
\]
Applying bias correction yields $\hat M_t := M_t/(1-\mu_1^t)$ and $\hat \bv_t := \bv_t/(1-\mu_2^t)$. Now let: \[
\bd_t =\N_c(\hat M_t)\oslash\left(\sqrt{\hat \bv_t}+\epsilon_t\right),
\] where $\epsilon_t:=\epsilon/\sqrt{1-\mu_2^t}$ for a small $\epsilon>0$, and $\oslash$ denotes entrywise division. By Lemma~\ref{lem:snr}, 
the entries of $\bd_t$ are bounded above by:
\[
\norm{\bd_t}_\infty = \max_j \comp{\bd_t}_j\le\sqrt{\frac{1-\mu_1}{1-\mu_2}}.
\]
Compared with \NAMO, \NAMOD's column-wise scaling enables finer-grained noise adaptation, but it no
longer strictly preserves orthogonality of the update direction. Let \(
\bar d_t:=\norm{\bd_t}_1/n
\) be the average of the entries of $\bd_t$.
To ensure that the scaled direction remains well-conditioned, we clamp the adaptive stepsizes toward this average via: \[
\tilde \bd_t: = \min\left\{\max\left\{\bd_t, c \bar d_t\mathbf{1}\right\}, \bar d_t/c\right\},
\] for a prescribed constant $c\in (0,1],$ where the maximum and minimum operations are applied entrywise, and \(\mathbf{1}\in\Re^n\) denotes the vector of all-ones. 
Let $D_t:=\diag\left(\tilde \bd_t\right)$. \NAMOD updates the  parameters by: \[
\Theta_t = \Theta_{t-1}-\eta O_t D_t.
\]
This update rule combines column-wise, noise-adaptive scaling with a simple clamping safeguard, 
tempering updates when gradients are noisy or the iterates are near a stationary point, while keeping the scaled update direction well-conditioned.

\begin{algorithm}[htp]
\caption{\NAMOD: Diagonal exension of \NAMO}\label{alg:namod}
\begin{algorithmic}[1]
\Require Learning rate $\eta$, momentum $\mu_1,\mu_2\in [0,1)$, batch size $b$, $\epsilon>0,$ $c\in (0,1]$
\State Initialize $\Theta_0\in\Re^{m\times n}$, $M_{0} = 0$, $\bv_0=0$
\For{$t = 1, 2, \dots,T$}
    \State Sample a minibatch of size $b$ and compute stochastic gradient $G_{t} = \nabla \mathcal{L}_{t}(\Theta_{t-1})$
    \State $M_{t} \gets \mu_1 M_{t-1} + (1-\mu_1)G_t$
        \State $\bv_{t} \gets \mu_2 \bv_{t-1}+(1-\mu_2)\N_c(G_t)\odot \N_c(G_t)$
        \State $\bd_t\gets \frac{\sqrt{1-\mu_2^t}}{1-\mu_1^t}\N_c(M_t)\oslash\left(\sqrt{\bv_t}+\epsilon\right)$
        \State $\bar d_t\gets \norm{\bd_t}_1/n$
    \State $O_{t} \gets \orth{M_{t}}$
    \State $D_t \gets \diag\left(\min\left\{\max\left\{\bd_t,c \bar d_t \mathbf{1}\right\},\frac{1}{c}\bar d_t \mathbf{1}\right\}\right)$
    \State Update parameters $\Theta_{t} \gets \Theta_{t-1} \;-\; 
    \eta  O_{t} D_t$ 
\EndFor
\State \Return $\Theta_{T}$
\end{algorithmic}
\end{algorithm}

\section{Convergence Analysis}\label{sec:analysis}
In this section, we establish convergence rates for \NAMO (Algorithm~\ref{alg:namo}) and \NAMOD (Algorithm~\ref{alg:namod}) in both the deterministic and stochastic settings 
under the standard
Assumptions~\ref{assum:func}--\ref{assum:noise}. 
The analysis of \NAMO is presented in Section~\ref{sec:namo_analyze}, 
and the analysis of \NAMOD is presented in Section~\ref{sec:namod_analyze}.
We provide a proof sketch for each of the theorems below and include detailed proofs in Appendices~\ref{pf:dm}--\ref{pf:dsm}. Proofs of useful lemmata are in Appendix~\ref{appendix:lemma}. The proofs repeatedly use the bias-corrected
moment estimates and their convex-combination representations with coefficients:
\[
w_{1,t,\tau}:=\frac{1-\mu_1}{1-\mu_1^t}\mu_1^{t-\tau},
\qquad
w_{2,t,\tau}:=\frac{1-\mu_2}{1-\mu_2^t}\mu_2^{t-\tau},
\quad
\textrm{for }~ t\ge 1~ \textrm{and} ~\tau\in\{1,\dots,t\},
\]
which satisfy $\sum_{\tau=1}^t w_{1,t,\tau}=\sum_{\tau=1}^t w_{2,t,\tau}=1.$
This representation allows one to control the deviation between the bias-corrected momentum and the
current gradient via Assumption~\ref{assum:func}, as well as the magnitude of the bias-corrected second-moment estimate by
elementary geometric-series bounds.

\subsection{Analysis of \NAMO}\label{sec:namo_analyze}
We first analyze the convergence of \NAMO in the deterministic case where gradients are evaluated exactly.
The $\bigO(T^{-1/2})$ rate established in the theorem below is the best possible for deterministic first-order methods under Assumption~\ref{assum:func}; see \citep[Theorem~2]{carmon2020lower}.
\begin{thm}[\NAMO in the deterministic case]\label{thm:dm}
Suppose Assumptions~\ref{assum:func} holds. Let $\Set{\Theta_t}\subset\Re^{m\times n}$ be the sequence of iterates generated by
Algorithm~\ref{alg:namo} with full-batch gradients and $0\le \mu_1\le \mu_2<1$.
If choosing $\eta = \bigO\left(T^{-\frac{1}{2}}\right)$, $\epsilon=\bigO(T^{-\frac{1}{2}}),$ $\mu_1=\Theta(1)$ and $\mu_2=\Theta(1)$,
then for large $T>0$:
\begin{align*}
\frac{1}{T} \sum_{t=1}^T \normF{\Grad\L(\Theta_{t-1})}
\le  \bigO\left(T^{-\frac{1}{2}}\right).
\end{align*}
\end{thm}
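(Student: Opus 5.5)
We follow the standard descent-lemma route for spectral-norm steepest descent, with extra bookkeeping for the adaptive scalar $\alpha_t$.

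\textbf{Descent inequality.} By Assumption~\ref{assum:func}, for the update $\Theta_t=\Theta_{t-1}-\eta\alpha_t O_t$ with $\norm{O_t}_2=1$,
\[
\L(\Theta_t)\le \L(\Theta_{t-1})-\eta\alpha_t\dotP{\Grad\L(\Theta_{t-1})}{O_t}+\frac{L}{2}\eta^2\alpha_t^2 .
\]
We use the duality identity $\dotP{\hat M_t}{O_t}=\norm{\hat M_t}_*$. Writing $\Grad\L(\Theta_{t-1})=\hat M_t+E_t$ gives
\[
\dotP{\Grad\L(\Theta_{t-1})}{O_t}\ge \norm{\hat M_t}_*-\norm{E_t}_*\ge \norm{\Grad\L(\Theta_{t-1})}_*-2\norm{E_t}_*.
\]
In the full-batch case, the convex-combination representation $\hat M_t=\sum_\tau w_{1,t,\tau}\Grad\L(\Theta_{\tau-1})$ and Lipschitzness give
\[
\norm{E_t}_*\le L\sum_{\tau} w_{1,t,\tau}\sum_{s=\tau}^{t-1}\eta\alpha_s .
\]

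\textbf{Bounds on $\alpha_t$.} Lemma~\ref{lem:snr}, the NAMO analogue of the bound stated for $\bd_t$, gives $\alpha_t\le\sqrt{(1-\mu_1)/(1-\mu_2)}=:\bar\alpha=\Theta(1)$. Consequently
\[
\norm{E_t}_*\le L\eta\bar\alpha\sum_\tau w_{1,t,\tau}(t-\tau)\le L\eta\bar\alpha\,\frac{\mu_1}{1-\mu_1}=\bigO(\eta).
\]
For the lower bound, the deterministic setting lets us compare the moments directly. Since $\normF{\hat M_t}\ge\normF{\Grad\L(\Theta_{t-1})}-\normF{E_t}$, and $\sqrt{\hat v_t}\le \max_{\tau\le t}\normF{\Grad\L(\Theta_{\tau-1})}$, which is within $\bigO(t\eta)$ of the current gradient, we would rather avoid a max over the history. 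Instead we will use the form $\alpha_t\normF{\Grad\L}$ and sum. The cleaner route is to multiply the descent inequality through and obtain
\[
\eta\sum_t \alpha_t\norm{\Grad\L(\Theta_{t-1})}_*\le \L(\Theta_0)-\L^*+\bigO(T\eta^2).
\]
With $\eta\sim T^{-1/2}$ this gives $\frac1T\sum_t\alpha_t\norm{\Grad\L(\Theta_{t-1})}_*=\bigO(T^{-1/2})$.

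\textbf{Removing $\alpha_t$ (main obstacle).} We need $\alpha_t\gtrsim \normF{\Grad\L(\Theta_{t-1})}/(\normF{\Grad\L(\Theta_{t-1})}+C\eta+\epsilon)$ up to constants. The numerator satisfies $\normF{\hat M_t}\ge \normF{\Grad\L(\Theta_{t-1})}-\bigO(\eta)$. For the denominator, $\hat v_t=\sum_\tau w_{2,t,\tau}\normF{\Grad\L(\Theta_{\tau-1})}^2$, and $\normF{\Grad\L(\Theta_{\tau-1})}\le\normF{\Grad\L(\Theta_{t-1})}+L'\eta\bar\alpha(t-\tau)$ by the Frobenius-Lipschitz form~\eqref{eq:lipGF}. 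The geometric weights with $\mu_2=\Theta(1)$ then give $\sqrt{\hat v_t}\le \normF{\Grad\L(\Theta_{t-1})}+\bigO(\eta)$; a small term $\sqrt{1-\mu_2}$ factor may appear, which is fine. Writing $g_t:=\normF{\Grad\L(\Theta_{t-1})}$, we get
\[
\alpha_t g_t\ge \frac{(g_t-C\eta)\,g_t}{g_t+C\eta+\epsilon_t}.
\]
We then split into two cases. If $g_t\ge 2C\eta+\epsilon_t$, the right side is $\ge g_t/4$. Otherwise $g_t=\bigO(T^{-1/2})$ directly, using $\epsilon=\bigO(T^{-1/2})$; note that $\epsilon_t\le\epsilon/\sqrt{1-\mu_2}$. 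Using norm equivalence $\normF{\cdot}\le\norm{\cdot}_*$, we conclude
\[
\frac1T\sum_t g_t\le \frac4T\sum_t\alpha_t\norm{\Grad\L}_*+\bigO(\eta+\epsilon)=\bigO(T^{-1/2}).
\]
The delicate points are making the $\bigO(\eta)$ momentum and second-moment lag bounds uniform in $t$, including small $t$ where bias correction matters, and the interplay between the nuclear-norm descent and the Frobenius-norm $\alpha_t$. The hypothesis $\mu_1\le\mu_2$ presumably enters through Lemma~\ref{lem:snr}.
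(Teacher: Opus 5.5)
Your argument is correct. Up to the final step it follows the paper's proof:
\begin{itemize}
\item the dual-norm descent inequality;
\item the uniform bound $\alpha_t\le\sqrt{(1-\mu_1)/(1-\mu_2)}$ from Lemma~\ref{lem:snr};
\item $\bigO(\eta)$ lag bounds on $\norm{\hat M_t-\Grad\L(\Theta_{t-1})}_*$ and on $\sqrt{\hat v_t}$, both from the geometric weights.
\end{itemize}
Your constant $\mu_1/(1-\mu_1)$ for the momentum lag is valid and sharper than the paper's $\mu_1/(1-\mu_1)^2$, because the mean of a truncated geometric distribution is at most that of the untruncated one.

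The two routes diverge in how $\alpha_t$ is removed. The paper:
\begin{itemize}
\item bounds $\sqrt{\hat v_t}$ by $\norm{\Grad\L(\Theta_{t-1})}_*+a$;
\item subtracts the lag terms additively to obtain $\alpha_t\norm{\Grad\L(\Theta_{t-1})}_*\ge r^{-1/2}\phi_{\tilde\epsilon}\bigl(\norm{\Grad\L(\Theta_{t-1})}_*\bigr)-\bigO(\eta)$, where $\phi_{\tilde\epsilon}(x)=x^2/(x+\tilde\epsilon)$ and $r=\min\{m,n\}$;
\item inverts the averaged inequality via Lemma~\ref{lem:phi_eps} and Cauchy--Schwarz.
\end{itemize}
Pairing the Frobenius numerator of $\alpha_t$ with a nuclear-norm denominator is what puts $\sqrt{r}$ factors into its final bound, which is stated for the average nuclear norm.

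Your pointwise case split works as follows: if $g_t\ge 2C\eta+\epsilon_t$ then $\alpha_t g_t\ge g_t/4$, and otherwise $g_t\le 2C\eta+\epsilon_t$. It is more elementary and needs no auxiliary lemma. It also keeps numerator and denominator in the Frobenius norm, so no $\sqrt{r}$ enters the leading term. The cost is a factor $4$ and an additive $\bigO(\epsilon)$ term, versus the paper's $\bigO(\sqrt{\epsilon}\,T^{-1/4})$. These agree under the hypothesis $\epsilon=\bigO(T^{-1/2})$.

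One small slip. If you invoke the Frobenius-Lipschitz form~\eqref{eq:lipGF}, you only get $\normF{\Theta_{\tau-1}-\Theta_{t-1}}\le\sqrt{r}\,\eta\bar\alpha(t-\tau)$, because $\normF{O_s}=\sqrt{\operatorname{rank}(M_s)}$ rather than $1$. It is cleaner to use Assumption~\ref{assum:func} together with $\normF{\cdot}\le\norm{\cdot}_*$, which gives $L\eta\bar\alpha(t-\tau)$ directly. Either way the second-moment lag is $\bigO(\eta)$ uniformly in $t$, and your conclusion is unaffected.
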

\paragraph{Proof sketch.}
Let $\hat M_t:=M_t/(1-\mu_1^t)$ and $\hat v_t:=v_t/(1-\mu_2^t)$ denote the bias-corrected moments,
so that $\hat M_t=\sum_{\tau=1}^t w_{1,t,\tau}\nabla L(\Theta_{\tau-1})$ and
$\hat v_t=\sum_{\tau=1}^t w_{2,t,\tau}\|\nabla L(\Theta_{\tau-1})\|_F^2$.
Define the adaptive scaling $\alpha_t$ as in \eqref{eq:alphat0}. Applying Lemma~\ref{lem:snr} to the vectorization of the gradient
matrix sequence $\Set{\nabla L(\Theta_{\tau-1})}$ yields a uniform upper bound on $\alpha_t$ (Appendix~\ref{pf:dm}, Step 1).
Next, using the orthogonalized descent inequality from \citep[Lemma~B.1]{zhang2025adagrad} and Assumption~\ref{assum:func},
one obtains an average inequality involving:
\[
\frac{\|\hat M_t\|_F \, \|\nabla L(\Theta_{t-1})\|_*}{\sqrt{\hat v_t}+\epsilon/\sqrt{1-\mu_2^t}}
\quad\text{and}\quad
\|\nabla L(\Theta_{t-1})-\hat M_t\|_*
\]
 (Appendix~\ref{pf:dm}, Step 2). The deviation of the bias-corrected momentum from the gradient is then controlled using the convex-combination
representation of $\hat M_t$ together with Assumption~\ref{assum:func} and a telescoping bound on
$\|\Theta_{\tau-1}-\Theta_{t-1}\|_2$ (Appendix~\ref{pf:dm}, Step 3).
Similarly, $\sqrt{\hat v_t}$ is upper bounded by $\|\nabla L(\Theta_{t-1})\|_*$ plus a constant
obtained from geometric-series estimates (Appendix~\ref{pf:dm}, Step 4).
Combining these bounds and applying Lemma~\ref{lem:phi_eps} converts the bound on an averaged surrogate $\phi_\epsilon$ into
the stated bound on the average gradient norm (Appendix~\ref{pf:dm}, Steps 5--6).
The full proof is provided in Appendix~\ref{pf:dm}. \hfill \qed

\vspace{0.5em}
Next we analyze \NAMO in the stochastic case where stochastic gradients have bounded variance.
The best possible convergence rate for stochastic first-order methods under 
Assumptions~\ref{assum:func}--\ref{assum:noise} is $\bigO(T^{-1/4})$; see \citep[Theorem 3]{arjevani2023lower}.
The following result shows that \NAMO adapts to the noise level of the stochastic gradients and achieves the optimal convergence 
rate when the batch size is sufficiently large.
\begin{thm}[\NAMO in the stochastic case]\label{thm:sm}
Suppose Assumptions~\ref{assum:func}--\ref{assum:noise} hold. Let $\Set{\Theta_t}\subset\Re^{m\times n}$ be the sequence of iterates generated by
Algorithm~\ref{alg:namo}.
For large $T>0,$ if we set $\eta=\bigO(T^{-\frac{3}{4}})$, $1-\mu_1=\Theta(T^{-\frac{1}{2}})$, 
$1-\mu_2=\Theta(T^{-\frac{1}{2}}),$ $0\le \mu_1\le \mu_2<1$, and $\epsilon =\bigO(T^{-\frac{1}{2}})$, then: \[
\frac{1}{T}\sum_{t=1}^T\expect\left[\normF{\Grad\L(\Theta_{t-1})}\right]\le \bigO\left(T^{-\frac{1}{4}}+\sqrt{\sigma}b^{-\frac{1}{4}}T^{-\frac{1}{8}}\right),
\] where $b$ is the batch size.
\end{thm}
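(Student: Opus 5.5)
We follow the deterministic argument of Theorem~\ref{thm:dm}, with the noise handled in expectation.

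\textbf{Step 1: descent inequality.} By Lemma~\ref{lem:snr}, applied to the vectorized stochastic gradients $\Set{G_\tau}$, we have $\alpha_t\le\sqrt{(1-\mu_1)/(1-\mu_2)}=\bigO(1)$ under the stated choice $1-\mu_1\asymp1-\mu_2$. Because $\norm{O_t}_2=1$, the orthogonalized descent inequality \citep[Lemma~B.1]{zhang2025adagrad} with Assumption~\ref{assum:func} gives
\[
\L(\Theta_t)\le \L(\Theta_{t-1})-\eta\alpha_t\dotP{\Grad\L(\Theta_{t-1})}{O_t}+\tfrac{L}{2}\eta^2\alpha_t^2 .
\]
Write $\dotP{\Grad\L}{O_t}\ge \norm{\hat M_t}_*-2\norm{\Grad\L(\Theta_{t-1})-\hat M_t}_*$ and $\norm{\hat M_t}_*\ge\norm{\Grad\L}_*-\norm{\Grad\L-\hat M_t}_*$. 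Multiplying by $\alpha_t=\normF{\hat M_t}/(\sqrt{\hat v_t}+\epsilon_t)$ and bounding $\alpha_t\le\bigO(1)$ on the error terms, then summing and taking expectations, gives
\[
\frac{1}{T}\sum_t\expect\Big[\frac{\normF{\hat M_t}\norm{\Grad\L(\Theta_{t-1})}_*}{\sqrt{\hat v_t}+\epsilon_t}\Big]\le \frac{\Delta}{\eta T}+\bigO(\eta)+\frac{\bigO(1)}{T}\sum_t\expect\norm{\Grad\L(\Theta_{t-1})-\hat M_t}_* .
\]

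\textbf{Step 2: momentum deviation.} Split $\hat M_t-\Grad\L(\Theta_{t-1})$ into a drift part and a noise part:
\[
\sum_\tau w_{1,t,\tau}\big(\Grad\L(\Theta_{\tau-1})-\Grad\L(\Theta_{t-1})\big)+\sum_\tau w_{1,t,\tau}\xi_\tau,\qquad \xi_\tau:=G_\tau-\Grad\L(\Theta_{\tau-1}).
\]
The drift part is at most $L\sum_\tau w_{1,t,\tau}(t-\tau)\,\bigO(\eta)=\bigO\big(\eta/(1-\mu_1)\big)$, since each step has spectral norm at most $\eta\alpha_t=\bigO(\eta)$. The $\xi_\tau$ form a martingale difference sequence. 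Using $\norm{\cdot}_*\le\sqrt{\min(m,n)}\normF{\cdot}$, Jensen's inequality and Assumption~\ref{assum:noise}, the noise part has expectation at most $\bigO\big(\sigma b^{-1/2}(\sum_\tau w_{1,t,\tau}^2)^{1/2}\big)=\bigO\big(\sigma b^{-1/2}\sqrt{1-\mu_1}\big)$ once $t\gtrsim(1-\mu_1)^{-1}$. The early iterations, where bias correction leaves weights of order $1$, add only $\bigO\big(\sigma b^{-1/2}(1-\mu_1)^{-1/2}/T\big)$ on average. With $\eta=T^{-3/4}$ and $1-\mu_1=T^{-1/2}$, the right-hand side of Step~1 becomes $\bigO\big(T^{-1/4}+\sigma b^{-1/2}T^{-1/4}\big)$.

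\textbf{Step 3: converting the surrogate (main obstacle).} The left-hand side of Step~1 is a ratio, and the expectation does not factor. First, $\normF{\hat M_t}\ge\normF{\Grad\L}-\normF{\Grad\L-\hat M_t}$, and the resulting error term is again controlled by Step~2. Second, bound the denominator. Convexity gives $\hat v_t\le\sum_\tau w_{2,t,\tau}\normF{G_\tau}^2$, and
\[
\normF{G_\tau}\le\normF{\Grad\L(\Theta_{t-1})}+L'\normF{\Theta_{\tau-1}-\Theta_{t-1}}+\normF{\xi_\tau},
\]
so $\sqrt{\hat v_t}\le\normF{\Grad\L(\Theta_{t-1})}+\bigO\big(\eta/(1-\mu_2)\big)+\big(\sum_\tau w_{2,t,\tau}\normF{\xi_\tau}^2\big)^{1/2}$. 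Here $\mu_1\le\mu_2$ keeps the geometric sums comparable. Call the noise term $S_t$; it satisfies $\expect S_t^2\le\sigma^2/b$. This gives a surrogate $\phi(x)=x^2/(x+a_t)$ with $x=\normF{\Grad\L}$ and $a_t=\bigO(T^{-1/2})+S_t+\epsilon_t$. Apply Lemma~\ref{lem:phi_eps}, or directly Cauchy--Schwarz:
\[
\expect x\le\sqrt{\expect[\phi]\,\expect[x+a_t]},
\]
averaged over $t$. Solving the resulting quadratic inequality in $\frac1T\sum\expect x$ gives
\[
\frac1T\sum_t\expect x\le\bigO\Big(\Phi+\sqrt{\Phi\,(T^{-1/2}+\sigma b^{-1/2})}\Big),
\]
where $\Phi$ is the Step~1 bound. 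With $\Phi=\bigO\big(T^{-1/4}(1+\sigma b^{-1/2})\big)$ this is
\[
\bigO\big(T^{-1/4}+\sqrt{\sigma}\,b^{-1/4}T^{-1/8}\big),
\]
up to lower-order cross terms. The delicate point is that $a_t$ is random and correlated with the numerator. We would first try pointwise Cauchy--Schwarz on $x=\sqrt{\phi}\sqrt{x+a_t}$ before taking expectations, which avoids independence issues entirely.
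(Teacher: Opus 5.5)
Your proof is correct up to two small repairs noted below. It shares the paper's skeleton: the bound $\alpha_t\le\sqrt{(1-\mu_1)/(1-\mu_2)}$ from Lemma~\ref{lem:snr}, the orthogonalized descent inequality, a noise/drift split of $E_t=\hat M_t-\Grad\L(\Theta_{t-1})$, a Minkowski bound on $\sqrt{\hat v_t}$, and a Cauchy--Schwarz step to decouple the random denominator. The difference is in how the surrogate is converted into a gradient bound.

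The paper keeps $\hat M_t$ in the descent inequality, so it controls $\frac{1}{T}\sum_t\expect[\alpha_t\norm{\hat M_t}_*]$. It then uses $(\expect\normF{\hat M_t})^2\le\expect[\alpha_t\normF{\hat M_t}]\,\expect[\sqrt{\hat v_t}+\epsilon_t]$ and bounds the denominator only in expectation. Next it replaces $\expect\normF{\hat M_t}$ by $\expect\normF{\Grad\L(\Theta_{t-1})}-\expect\normF{E_t}$ and solves a quadratic inequality for each $t$.

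You instead pass to $\norm{\Grad\L(\Theta_{t-1})}_*$ already in the descent step, as in the deterministic proof. You then use the pathwise bound $\sqrt{\hat v_t}+\epsilon_t\le x_t+a_t$ with a random $a_t$, apply Lemma~\ref{lem:phi_eps} pathwise ($x_t\le\phi_t+\sqrt{a_t\phi_t}$), and finish with $\expect\sqrt{a_t\phi_t}\le\sqrt{\expect[a_t]\,\expect[\phi_t]}$. This makes the stochastic proof a near-verbatim copy of the deterministic one and removes the quadratic inequality. The correlation between $a_t$ and $\phi_t$ that you flag is harmless: that step is plain Cauchy--Schwarz for random variables and needs no independence. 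The paper's route, for its part, never needs a pathwise bound on the denominator.

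Your first-moment treatment of $E_t$ is also slightly cleaner than the paper's: you use martingale-difference orthogonality for the noise and a deterministic bound for the drift. The paper's second-moment computation drops the noise--drift cross term even though the drift depends on earlier noise, which costs a factor $2$ there.

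The two repairs are as follows.

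First, in your Step 3 the order of operations matters. If you split off $\normF{E_t}$ while the denominator is still $\sqrt{\hat v_t}+\epsilon_t$, the error $\normF{E_t}\norm{\Grad\L(\Theta_{t-1})}_*/(\sqrt{\hat v_t}+\epsilon_t)$ is not controlled by Step 2. That ratio has no useful bound when the stochastic gradients happen to be small. Replace the denominator first. With $x_t=\normF{\Grad\L(\Theta_{t-1})}$, the bounds $\normF{\hat M_t}\ge(x_t-\normF{E_t})_+$ and $\norm{\Grad\L(\Theta_{t-1})}_*\ge x_t$ give, pathwise,
\[
\frac{x_t^2}{x_t+a_t}\le\frac{\normF{\hat M_t}\norm{\Grad\L(\Theta_{t-1})}_*}{\sqrt{\hat v_t}+\epsilon_t}+\normF{E_t},
\]
and then Step 2 applies.

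Second, the drift part of $a_t$ is of order $\eta/(1-\mu_2)=T^{-1/4}$, not $T^{-1/2}$. This does not affect the result, because $\sqrt{\Phi\cdot T^{-1/4}}=\bigO(T^{-1/4})$.

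As in the paper, the leftover cross term $\sigma b^{-1/2}T^{-1/8}$ is absorbed into $\bigO(\sqrt{\sigma}\,b^{-1/4}T^{-1/8})$ only by treating $\sigma$ as a constant.
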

\paragraph{Proof sketch.}
The proof again works with the convex-combination representations of the bias-corrected moment estimates $\hat M_t$ and $\hat v_t$ with weight coefficients
$w_{1,t,\tau}$ and $w_{2,t,\tau}$ respectively.
Let $\expect_t[\cdot]$ be the expectation conditioned on the  iterate $\Theta_{t-1}$,  and define the deviation term 
$E_t:=\hat M_t-\nabla L(\Theta_{t-1})$.
Using \citep[Lemma~B.1]{zhang2025adagrad} and taking conditional expectations yields an average bound on
$\expect[\alpha_t\|\hat M_t\|_*]$ in terms of $\expect\left[\|E_t\|_*\right]$ and $\expect\left[\alpha_t^2\right]$ (Appendix~\ref{pf:sm}, Step 2).
The quantity $\expect\left[\|E_t\|_*\right]$ is controlled by decomposing $E_t$ into a weighted sum of noise terms
$G_\tau-\nabla L(\Theta_{\tau-1})$ and a drift term $\nabla L(\Theta_{\tau-1})-\nabla L(\Theta_{t-1})$. 
The noise term is bounded above via Assumption~\ref{assum:noise}, the drift term is bounded above using Assumption~\ref{assum:func}
along with a telescoping argument, and then Lemma~\ref{lem:mut} is applied to bound the averaging over $t$ (Appendix~\ref{pf:sm}, Step 3).
In parallel, Minkowski and Jensen's inequalities yield an upper bound on the expected denominator term
$\expect\left[\sqrt{\hat v_t}+\epsilon/\sqrt{1-\mu_2^t}\right]$ by $\expect\left[\norm{\nabla L(\Theta_{t-1})}\right]$ plus an explicit
additive term containing $\sigma/\sqrt{b}$, then Lemma~\ref{lem:mutsqrt} is applied to bound the averaging over $t$ (Appendix~\ref{pf:sm}, Step 4).
Finally, a Cauchy--Schwarz argument relates $\expect[\alpha_t\|\hat M_t\|_F]$ to
$\expect[\|\nabla L(\Theta_{t-1})\|_F]$, leading to a quadratic inequality in
$\expect\left[\normF{\nabla L(\Theta_{t-1})}\right]$ that can be solved explicitly (Appendix~\ref{pf:sm}, Step 5). Substituting the averaged
bounds and the parameter choices yields the stated rate. 
The full proof is provided in Appendix~\ref{pf:sm}. \qed

This result indicates that the convergence rate of \NAMO is adaptive to the noise level of the stochastic gradients, and that choosing the batch size as
$b = \Omega(\sigma^2 \sqrt{T})$ recovers the optimal $\bigO(T^{-1/4})$ rate.

\subsection{Analysis of \NAMOD}\label{sec:namod_analyze}
NAMO-D introduces a column-wise diagonal scaling $D_t$, defined in Algorithm~2, instead of the scalar
$\alpha_t$. We first establish its optimal $\bigO\left(T^{-1/2}\right)$ rate for the deterministic case where gradients are evaluated exactly
in the theorem below.
\begin{thm}[\NAMOD in the deterministic case]\label{thm:ddm}
Suppose Assumption~\ref{assum:func} holds. Let $\Set{\Theta_t}\subset\Re^{m\times n}$ be the sequence of iterates generated by
Algorithm~\ref{alg:namod} using full batch with $0\le \mu_1\le \mu_2<1$.
If we choose $\eta=\bigO(T^{-\frac{1}{2}})$, $1-\mu_1=\Theta(1)$, 
$1-\mu_2=\Theta(1),$ $0\le \mu_1\le \mu_2<1$, $\epsilon =\bigO(T^{-\frac{1}{2}}n^{-1})$,
and $c=\Theta(1),$  then:
\begin{align*}
\frac{1}{T} \sum_{t=1}^T \normF{\Grad\L(\Theta_{t-1})}
\le  \bigO\left(T^{-\frac{1}{2}}\right),
\end{align*}
for large $T>0$.
\end{thm}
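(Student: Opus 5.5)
We follow the template of the proof of Theorem~\ref{thm:dm}, replacing the scalar $\alpha_t$ by the clamped diagonal matrix $D_t=\diag(\tilde\bd_t)$. Write $G_t=\Grad\L(\Theta_{t-1})$ in the full-batch case, and let $\hat M_t=\sum_{\tau}w_{1,t,\tau}G_\tau$ and $\hat\bv_t=\sum_\tau w_{2,t,\tau}\N_c(G_\tau)^{\odot 2}$.

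\emph{Step 1: bounds on $D_t$.} Applying Lemma~\ref{lem:snr} columnwise gives $\norm{\bd_t}_\infty\le\sqrt{(1-\mu_1)/(1-\mu_2)}=:\kappa=\bigO(1)$. Hence $\bar d_t\le\kappa$, and every clamped entry satisfies $c\bar d_t\le[\tilde\bd_t]_j\le\min\{\kappa,\bar d_t/c\}$.

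\emph{Step 2: descent inequality.} Assumption~\ref{assum:func} gives
\[
\L(\Theta_t)\le\L(\Theta_{t-1})-\eta\dotP{G_t}{O_tD_t}+\tfrac{L}{2}\eta^2\norm{O_tD_t}_2^2,
\]
with $\norm{O_tD_t}_2\le\kappa$. We then split $G_t=\hat M_t-E_t$. Because $O_t^T\hat M_t=V\Sigma V^T\succeq 0$, whose diagonal entries are at least $\norm{[\hat M_t]_{:j}}^2/\norm{\hat M_t}_2$, we get
\[
\dotP{\hat M_t}{O_tD_t}=\trace{O_t^T\hat M_t D_t}\ge c\,\bar d_t\norm{\hat M_t}_* .
\]
The lower clamp is exactly what makes this step work. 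The error term is bounded by $|\dotP{E_t}{O_tD_t}|\le\norm{E_t}_*\norm{D_t}_2\le \kappa\norm{E_t}_*$. Telescoping over $t$ yields
\[
\frac{c}{T}\sum_t\bar d_t\norm{\hat M_t}_*\le\frac{\L(\Theta_0)-\L^*}{\eta T}+\frac{\kappa}{T}\sum_t\norm{E_t}_*+\tfrac{L}{2}\eta\kappa^2 .
\]

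\emph{Step 3: momentum drift.} As in Step 3 of Appendix~\ref{pf:dm}, $\norm{E_t}_*\le L\sum_\tau w_{1,t,\tau}\norm{\Theta_{\tau-1}-\Theta_{t-1}}_2\le L\eta\kappa\sum_\tau w_{1,t,\tau}(t-\tau)=\bigO(\eta)$ since $\mu_1=\Theta(1)$. This also gives $\norm{\hat M_t-G_t}_*=\bigO(\eta)$.

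\emph{Step 4: lower bound on $\bar d_t$ (the main obstacle).} We need to relate $\bar d_t\norm{\hat M_t}_*$ to $\normF{G_t}$. First,
\[
\bar d_t=\frac1n\sum_j\frac{\norm{[\hat M_t]_{:j}}}{\sqrt{[\hat\bv_t]_j}+\epsilon_t}\ge\frac{1}{n}\sum_j\frac{\norm{[\hat M_t]_{:j}}}{S_t+\epsilon_t},\qquad S_t:=\max_j\sqrt{[\hat\bv_t]_j}\le\sqrt{\textstyle\sum_\tau w_{2,t,\tau}\normF{G_\tau}^2}.
\]
This loses a factor $n$, which is why the theorem takes $\epsilon=\bigO(T^{-1/2}n^{-1})$ and treats $n$ as a constant. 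Next, $\sum_j\norm{[\hat M_t]_{:j}}\ge\normF{\hat M_t}$. As in Step 4 of Appendix~\ref{pf:dm}, bound $S_t\le\normF{G_t}+C\eta$ using the geometric-series estimate with $\mu_1\le\mu_2$ and the Lipschitz drift. Combining these gives
\[
\bar d_t\norm{\hat M_t}_*\gtrsim\frac{\normF{\hat M_t}^2}{n(\normF{G_t}+C\eta+\epsilon_t)}\ge\frac{(\normF{G_t}-C'\eta)_+^2}{n(\normF{G_t}+C\eta+\epsilon_t)}.
\]
The right-hand side is a surrogate of the form $\phi_\epsilon(\normF{G_t})$ with $\epsilon$-scale $\bigO(\eta+\epsilon_t)=\bigO(T^{-1/2})$.

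\emph{Step 5: conclusion.} With $\eta=\Theta(T^{-1/2})$ the right side of Step 2 is $\bigO(T^{-1/2})$. Lemma~\ref{lem:phi_eps} (convexity/averaging of $\phi_\epsilon$) then converts the averaged surrogate bound into $\frac1T\sum_t\normF{G_t}\le\bigO(T^{-1/2})$. The delicate point is Step~4: the max over columns in the denominator must be controlled uniformly and the clamp constant $c$ carried through, and the $(\cdot)_+$ shifts must be absorbed into the $\bigO(T^{-1/2})$ slack.
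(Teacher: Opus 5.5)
Your proof is correct. It uses the same skeleton as the paper: a smoothness descent step in which the clamp controls the smallest diagonal entry through Lemma~\ref{lem:od}, the $\bigO(\eta)$ momentum-drift bound, the Lipschitz bound $\sqrt{\sum_\tau w_{2,t,\tau}\normF{G_\tau}^2}\le\normF{G_t}+\bigO(\eta)$, and a $\phi_\epsilon$ surrogate with Lemma~\ref{lem:phi_eps}. Two steps are organized differently.

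First, the paper switches to $\nabla\L(\Theta_{t-1})$ inside the pairing, obtaining $d_{t,\min}\norm{\nabla\L(\Theta_{t-1})}_*-2d_{t,\max}\norm{E_t}_*$. It then uses $d_{t,\min}\ge c^2 d_{t,\max}$ to average $d_{t,\max}\norm{\nabla\L(\Theta_{t-1})}_*$. You keep $\hat M_t$ in the pairing and use $d_{t,\min}\ge c\,\bar d_t$ directly, so you pay $1/c$ rather than $1/c^2$.

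Second, and more substantively, the paper's Step~5 asserts $d_{t,\max}\ge\normF{\hat M_t}/\bigl(\sqrt{\sum_j\hat\bv_t^j}+n\epsilon_t\bigr)$, which does not hold in general. If only the first column of the gradients is nonzero and $c=1$, every clamped entry equals $\bar d_t\approx 1/n$, while the right-hand side is $\approx 1$. Your bound $\bar d_t\ge\normF{\hat M_t}/\bigl(n(S_t+\epsilon_t)\bigr)$ pays this factor $n$ explicitly, and it is tight in that example, where the actual descent really is reduced by $1/n$. For fixed $n$ this does not affect the $T$-rate.

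A few small corrections:
\begin{itemize}
\item Your explanation of the $n^{-1}$ in the hypothesis on $\epsilon$ is misattributed. In your route $\epsilon$ enters only through the shift $C\eta+\epsilon_t$, not multiplied by $n$, so $\epsilon=\bigO(T^{-1/2})$ already suffices. The $n^{-1}$ is needed only for the paper's surrogate $\phi_{n\tilde\epsilon}$.
\item The drift constant requires $1-\mu_1=\Theta(1)$, not $\mu_1=\Theta(1)$.
\item The diagonal lower bound $\norm{[\hat M_t]_{:j}}^2/\norm{\hat M_t}_2$ is unnecessary. Nonnegativity of the diagonal of $V\Sigma V^T$ is all Lemma~\ref{lem:od} uses.
\item Lemma~\ref{lem:phi_eps} is a pointwise inequality, not a convexity statement.
\end{itemize}

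The shift is absorbed directly. Set $\delta=C'\eta$ and $e=C\eta+\epsilon/\sqrt{1-\mu_2}$. Then $\phi_{\delta+e}\bigl((x-\delta)_+\bigr)\le (x-\delta)_+^2/(x+e)$ and $x\le(x-\delta)_++\delta$. Apply Lemma~\ref{lem:phi_eps} pointwise and average the square-root term by Jensen. This gives $\frac1T\sum_t\normF{G_t}\le nB+\sqrt{(\delta+e)\,nB}+\delta=\bigO(T^{-1/2})$, where $B$ is the right-hand side of your Step~2 divided by $c$ and $\eta=\Theta(T^{-1/2})$.
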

\paragraph{Proof sketch.}
The proof works with the diagonal scaling matrix $D_t$ defined as in Section~\ref{sec:alg} and its extremal diagonal entries
$d_{t,\max}$ and $d_{t,\min}$.
Applying Lemma~\ref{lem:snr} column-wise yields a uniform upper bound on the entries of $D_t$ (Appendix~\ref{pf:ddm}, Step 1).
The clamping rule implies $d_{t,\min}\ge c^2\,d_{t,\max}$ and hence bounds the condition number of $D_t$.
Applying the orthogonalized descent inequality \citep[Lemma~B.1]{zhang2025adagrad} gives an average bound on
$d_{t,\max}\norm{\nabla L(\Theta_{t-1})}_*$ in terms of the deviation
$\norm{\nabla L(\Theta_{t-1})-\hat M_t}_*$ (Appendix~\ref{pf:ddm}, Step 2).
The deviation term and the term $\sqrt{\sum_{j=1}^n \hat v_t^j}$ are controlled using similar arguments as in the proof of Theorem~\ref{thm:dm}, together with geometric-series
bounds (Appendix~\ref{pf:ddm}, Steps 3--4). The result again follows by combining these bounds and applying Lemma~\ref{lem:phi_eps} (Appendix~\ref{pf:ddm}, Steps 5--6).
The full proof is provided in Appendix~\ref{pf:ddm}.\qed

\vspace{0.5em}
In the stochastic case, we show that the convergence of \NAMOD adapts to the noise level of
stochastic gradients, and achieves the optimal $\bigO\left(T^{-1/4}\right)$ rate when the batch size is sufficiently large.
\begin{thm}[\NAMOD in the stochastic case]\label{thm:dsm}
Suppose Assumptions~\ref{assum:func}--\ref{assum:noise} holds. Let $\Set{\Theta_t}\subset\Re^{m\times n}$ be the sequence of iterates generated by
Algorithm~\ref{alg:namod}.
For large $T>0,$ if we set $\eta=\bigO(T^{-\frac{3}{4}})$, $1-\mu_1=\Theta(T^{-\frac{1}{2}})$, 
$1-\mu_2=\Theta(T^{-\frac{1}{2}}),$ $0\le \mu_1\le \mu_2<1$, $\epsilon =\bigO(T^{-\frac{1}{2}})$, and 
$c =\Theta(1)$, then: \[
\frac{1}{T}\sum_{t=1}^T\expect\left[\normF{\Grad\L(\Theta_{t-1})}\right]\le \bigO\left(T^{-\frac{1}{4}}+\sqrt{\sigma}b^{-\frac{1}{4}}T^{-\frac{1}{8}}\right),
\] where $b$ is the batch size.
\end{thm}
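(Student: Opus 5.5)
The plan follows the template of Theorem~\ref{thm:sm}, replacing the scalar $\alpha_t$ by the clamped diagonal $D_t$ and using its bounded conditioning. Write $d_{t,\max}$ and $d_{t,\min}$ for the extremal diagonal entries of $D_t$.

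\emph{Step 1: bounds on $D_t$.} Applying Lemma~\ref{lem:snr} to each column sequence $\{[G_\tau]_{:j}\}$ gives $[\bd_t]_j\le\sqrt{(1-\mu_1)/(1-\mu_2)}$, hence $d_{t,\max}\le\sqrt{(1-\mu_1)/(1-\mu_2)}=\Theta(1)$ under the stated choices. The clamping rule puts every entry of $\tilde\bd_t$ in $[c\bar d_t,\bar d_t/c]$, so $d_{t,\min}\ge c^2 d_{t,\max}$.

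\emph{Step 2: descent inequality.} By Assumption~\ref{assum:func},
\[
\L(\Theta_t)\le\L(\Theta_{t-1})-\eta\langle\Grad\L(\Theta_{t-1}),O_tD_t\rangle+\tfrac{L}{2}\eta^2 d_{t,\max}^2.
\]
I will split $\Grad\L(\Theta_{t-1})=\hat M_t-E_t$ with $E_t:=\hat M_t-\Grad\L(\Theta_{t-1})$.
\begin{itemize}
\item Since $O_t=UV^T$ and $\hat M_t=U\Sigma V^T$, we get $\langle\hat M_t,O_tD_t\rangle=\trace{\Sigma V^TD_tV}\ge d_{t,\min}\norm{\hat M_t}_*$, because $V^TD_tV\succeq d_{t,\min}I$.
\item The error part satisfies $|\langle E_t,O_tD_t\rangle|\le d_{t,\max}\norm{E_t}_*$, since $\norm{O_tD_t}_2\le d_{t,\max}$.
\end{itemize}
Together this gives $d_{t,\min}\norm{\hat M_t}_*\ge d_{t,\min}\norm{\Grad\L(\Theta_{t-1})}_*-d_{t,\min}\norm{E_t}_*$. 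Telescoping and taking expectations then yields
\[
\frac1T\sum_t\expect\left[d_{t,\min}\norm{\Grad\L(\Theta_{t-1})}_*\right]\le\frac{\Delta}{\eta T}+\frac{C}{T}\sum_t\expect\norm{E_t}_*+\bigO(L\eta).
\]

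\emph{Step 3: the deviation term.} This is identical to Step~3 of Theorem~\ref{thm:sm}, with one change: the drift now uses $\norm{\Theta_\tau-\Theta_{\tau-1}}_2\le\eta d_{\tau,\max}=\bigO(\eta)$ in place of $\eta\alpha_\tau$. The noise part contributes $\sqrt{\min(m,n)}\,\sigma\sqrt{(1-\mu_1)/b}$, and the drift part contributes $\bigO(L\eta/(1-\mu_1))$. Lemma~\ref{lem:mut} controls the initial bias-correction transient.

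\emph{Step 4: lower-bounding $d_{t,\min}$ (the main obstacle).} Unlike \NAMO, the scaling is not a single ratio $\norm{\hat M_t}_F/(\sqrt{\hat v_t}+\epsilon_t)$, so the Cauchy--Schwarz step must be redone.
\begin{itemize}
\item First, $d_{t,\min}\ge c\bar d_t$ and $\bar d_t=\frac1n\sum_j\norm{[\hat M_t]_{:j}}/(\sqrt{[\hat\bv_t]_j}+\epsilon_t)$.
\item Lower-bound each denominator by $\sqrt{\sum_j[\hat\bv_t]_j}+\epsilon_t$. Then
\[
\bar d_t\ge\frac{\norm{\hat M_t}_{1,2}}{n\bigl(\sqrt{\hat v_t}+\epsilon_t\bigr)}\ge\frac{\normF{\hat M_t}}{n\bigl(\sqrt{\hat v_t}+\epsilon_t\bigr)},
\]
where $\hat v_t=\sum_j[\hat\bv_t]_j$ is the \NAMO second moment.
\item The factor $1/n$ is a dimensional constant, and $\epsilon$ is chosen small enough to absorb it.
\end{itemize}
This reduces to the \NAMO situation: $\expect[\normF{\hat M_t}\normF{\Grad\L}/(\sqrt{\hat v_t}+\epsilon_t)]$ is bounded by the right-hand side of Step~2.

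\emph{Step 5: conclusion.} As in Step~4 of Theorem~\ref{thm:sm}, use Minkowski and Jensen to bound $\expect[\sqrt{\hat v_t}+\epsilon_t]\le\expect\normF{\Grad\L(\Theta_{t-1})}+\bigO(\sigma/\sqrt b+\eta/(1-\mu_2)+\epsilon)$, averaged with Lemma~\ref{lem:mutsqrt}. Then apply Cauchy--Schwarz in the form $\expect[X]^2\le\expect[X^2/Y]\,\expect[Y]$, together with $\normF{\hat M_t}\ge\normF{\Grad\L}-\normF{E_t}$. This gives a quadratic inequality in $\frac1T\sum_t\expect\normF{\Grad\L(\Theta_{t-1})}$.

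Solving it and substituting $\eta=T^{-3/4}$, $1-\mu_i=T^{-1/2}$ and $\epsilon=T^{-1/2}$, the error terms are $\bigO(T^{-1/4})$ plus $\sigma b^{-1/2}T^{-1/4}$. Taking the square root of the cross term yields the $\sqrt{\sigma}b^{-1/4}T^{-1/8}$ contribution.

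Step~4 is where I expect the real difficulty. If the crude $1/n$ loss is unacceptable, I would instead apply Cauchy--Schwarz column-wise using the clamping floor $c\bar d_t$.
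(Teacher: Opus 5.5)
Your argument is correct. It shares the paper's skeleton: the orthogonalized descent inequality via Lemma~\ref{lem:od}, the variance-plus-drift bound on $E_t$ averaged with Lemma~\ref{lem:mut}, Minkowski/Jensen for the second moment averaged with Lemma~\ref{lem:mutsqrt}, and a Cauchy--Schwarz step giving a quadratic inequality. Where you differ is in how the diagonal scaling is tied to the \NAMO analysis.

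The paper keeps $d_{t,\min}\norm{\hat M_t}_*$ from the descent step. In its Step~5 it bounds the ratio $\normF{\hat M_t}/(\sqrt{\sum_j\hat\bv_t^j}+n\epsilon_t)$ from \emph{above} by the clamped maximum $d_{t,\max}$. It then returns to $d_{t,\min}$ through the condition-number bound $d_{t,\max}\le d_{t,\min}/c^2$.

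You instead bound $d_{t,\min}$ from \emph{below} by $c\bar d_t\ge\frac{c}{n}\alpha_t$, where $\alpha_t:=\normF{\hat M_t}/(\sqrt{\hat v_t}+\epsilon_t)$ is exactly the \NAMO stepsize. The rest is then the proof of Theorem~\ref{thm:sm} verbatim, with constants inflated by $n/c$. Only the lower clamp is used, and your $\epsilon$-term is $\epsilon_t$ rather than $n\epsilon_t$.

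Your route is the more robust of the two. The paper's comparison with the \emph{clamped} maximum is not automatic when the upper clamp is active. For example, take all gradients supported on a single column and $c>1/n$: that ratio is about $n\bar d_t$, while $d_{t,\max}=\bar d_t/c$. Repairing the step costs a factor of order $n$, which is the same price you pay explicitly. The paper's route would yield a dimension-free $1/c^2$ constant only if that step held. With $n$ and $c$ fixed, the two give the same $T$-rate.

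Three small repairs:
\begin{itemize}
\item In Step~4 you mean \emph{upper}-bounding each denominator, $\sqrt{[\hat\bv_t]_j}+\epsilon_t\le\sqrt{\hat v_t}+\epsilon_t$.
\item The factor $n/c$ simply goes into the $\bigO(\cdot)$ constant; $\epsilon$ plays no role in absorbing it.
\item Your Step~2 controls $\expect[\alpha_t\normF{\Grad\L(\Theta_{t-1})}]$ rather than $\expect[\alpha_t\normF{\hat M_t}]$. So in the Cauchy--Schwarz step take $X=(\normF{\hat M_t}\normF{\Grad\L(\Theta_{t-1})})^{1/2}$ and $Y=\sqrt{\hat v_t}+\epsilon_t$, and use $\sqrt{ab}\ge\min\{a,b\}\ge\normF{\Grad\L(\Theta_{t-1})}-\normF{E_t}$. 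Alternatively, keep $\norm{\hat M_t}_*$ in Step~2 as the paper does.
\end{itemize}
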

\paragraph{Proof sketch.}
Define $\hat M_t$, $\hat v_t$, and $D_t$ as before, and let $\expect_t\left[\cdot\right]$ denotes the expectation conditioned on iterate $\Theta_{t-1}$, and
$E_t:=\hat M_t-\nabla L(\Theta_{t-1})$. The expected descent step uses Lemma~\ref{lem:od} to lower bound
$\langle \hat M_t, O_tD_t\rangle$ by $d_{t,\min}\norm{\hat M_t}_*$, and the clamping rule controls the ratio of 
$d_{t,\max}$ and $d_{t,\min}$ (Appendix~\ref{pf:dsm}, Step 2). The averaged deviation term
$\frac{1}{T}\sum_{t=1}^T\expect\left[\|E_t\|_*\right]$ is bounded by the same variance-plus-drift decomposition as in the proof of Theorem~\ref{thm:sm}, 
using Assumption~\ref{assum:noise} to control the noise term and Assumption~\ref{assum:func} to control the drift term via a
telescoping argument; then averaging uses Lemma~\ref{lem:mut} (Appendix~\ref{pf:dsm}, Step 3).
For the denominator, Minkowski and Jensen's inequalities yield an upper bound on
$\expect[\sqrt{\sum_{j=1}^n \hat v_t^j}]$ by
$\expect\left[\norm{\nabla L(\Theta_{t-1})}\right]$ plus an explicit additive term containing $\sigma/\sqrt{b}$; then the averaging over $t$ is bounded using
Lemma~\ref{lem:mutsqrt} (Appendix~\ref{pf:dsm}, Step 4).
Finally, a Cauchy--Schwarz argument relate 
$\expect\left[d_{t,\max}\normF{\hat M_t}\right]$ to the expected gradient norm, yielding a quadratic inequality in
$\expect\left[\norm{\nabla L(\Theta_{t-1})}\right]$ that can be solved explicitly (Appendix~\ref{pf:dsm}, Step 5). Combining these bounds with the specified
hyperparameter choices gives the stated convergence rate (Appendix~\ref{pf:dsm}, Step 6). 
The full proof is provided in Appendix~\ref{pf:dsm}.\qed

This result indicates that the convergence rate of \NAMOD, like that of \NAMO, is adaptive to the noise level of the stochastic gradients, and that the optimal 
$\bigO(T^{-1/4})$ rate is recovered when choosing the batch size as $b=\Omega(\sigma^2 \sqrt{T}),$ 


\vspace{1em}
The results of this section demonstrate that both \NAMO and \NAMOD can achieve optimal convergence rates in terms of the order on $T$. This occurs in both the deterministic and stochastic settings under Assumptions~\ref{assum:func}-\ref{assum:noise}. In the deterministic setting, the $\bigO(T^{-1/2})$ bound matches the known lower complexity bound for smooth nonconvex optimization with first-order methods, thus demonstrating that utilizing orthogonalized update and adaptive scaling does not incur any deterioration in the convergence or complexity. In the stochastic regime, the bounds decompose naturally into an optimization term of order $\bigO(T^{-1/4})$ and an explicit variance-dependent term scaling as $\bigO\left(\sqrt{\sigma}\,b^{-1/4}T^{-1/8}\right)$, and thus quantifying the precise affect of gradient noise and batch size. Specifically, when $b=\Omega(\sigma^2 T^{1/2})$, the variance term becomes asymptotically dominated and the optimal $\bigO(T^{-1/4})$ rate is recovered. The analysis further shows that the orthogonalized descent inequality, combined with bias-corrected moment estimates and controlled adaptive scaling (scalar or diagonal), yields a unified proof strategy that is robust to both drift and stochastic perturbations. Altogether, these guarantees provide a rigorous complexity characterization of the proposed methods and confirm that their matrix-aware adaptive scaling preserves optimal first-order performance while maintaining stability under noise.

\section{Experiments}\label{sec:exp}
\subsection{Experimental setup}
\par\noindent\textbf{Baselines.}
We compare our proposed algorithms, \NAMO and \NAMOD, against two popular baselines: AdamW \citep{loshchilov2017decoupled} and Muon \citep{jordan2024muon}. 
Since Muon, \NAMO, and \NAMOD are designed specifically for matrix parameters, 
we use AdamW to optimize all remaining scalar and vector parameters across all models.
For brevity, we refer to these hybrid optimizers simply as Muon, \NAMO, and \NAMOD.

\paragraph{Model Architectures.} We evaluate all optimizers on GPT-2 pretraining experiments, which are based on the nanoGPT \citep{karpathy2022nanogpt}
implementation of the GPT-2 architecture \citep{radford2019language}. Two model sizes are considered: small (124M parameters), and
medium (355M parameters).
All experiments are conducted on $4\times$ NVIDIA H100 GPUs.

\paragraph{Dataset.}
All experiments are conducted on the OpenWebText dataset \citep{gokaslan2019openwebtext}, 
which contains approximately 9 billion training tokens and 4.4 million validation tokens.

\paragraph{Training details.}
We use standard hyperparameter settings for the baselines. For AdamW, we set the momentum coefficients to $\beta_1 = 0.9$ and $\beta_2 = 0.95$. 
For Muon, the momentum coefficient is set to $\beta=0.95$. For \NAMO and \NAMOD, the momentum coefficients are
set to $\mu_1=0.95$ and $\mu_2=0.99$. The weight decay coefficient is set to $\lambda=0.01$ for all optimizers. The Muon optimizer applies decoupled weight decay as AdamW: \[
\Theta_{t} = \Theta_{t-1}-\eta \left(O_t+\lambda \Theta_{t-1}  \right),
\]
where $\eta$ is the prescribed learning rate.
For \NAMO and \NAMOD, as part of the effective stepsize,  the adaptive scaling is applied to the decoupled weight decay as well. Specifically, \NAMO updates by:
\[
\Theta_{t} = \Theta_{t-1}-\eta \alpha_t\left(O_t+\lambda \Theta_{t-1}  \right),
\]
where $\alpha_t$ is given in Algorithm~\ref{alg:namo}, and \NAMOD updates by:
\[
\Theta_{t} = \Theta_{t-1}-\eta \left(O_t+\lambda \Theta_{t-1}  \right)D_t,
\]
where $D_t$ is given in Algorithm~\ref{alg:namod}.
For GPT-2 (124M) model pretraining, we train with context length $L=1024$ and an effective batch size of 480 sequences (491,520 tokens) per optimizer update for all optimizers (micro-batch size $B=60$ and gradient accumulation $K=8$). 
For GPT-2 (355M) model, we also train with context length $L=1024$ and an effective batch size of 480 sequences (491,520 tokens) per optimizer update for all optimizers (micro-batch size $B=40$ and gradient accumulation $K=12$). 
For both models and all optimziers, we use the following learning rate scheduler: $2000$-step linear warm-up followed by constant learning rate. Also, for both models, we sweep for optimal learning rate $\eta$ for each optimizer through a grid search. 
For \NAMOD, we fix the clamping hyperparameter $c=0.1$ for GPT-2 (124M) experiments,
and sweep for optimal $c$ for GPT-2 (355M) experiments.

\subsection{Empirical Results on GPT-2}
For pretraining the GPT-2 (124M) model, we perform a grid search for each optimizer to find the optimal learning rate (LR) that achieves the lowest validation loss after 
$10$K steps. The sweeping results are presented in Figure~\ref{fig:gpt2s_sweep}, which shows that \NAMO and \NAMOD achieve lower training and validation losses 
across a wider range of learning-rate choices, demonstrating both accelerated convergence and improved tuning robustness comparing to Muon and AdamW.

The optimal learning rate selected from the 10K-step sweep for each optimizer is listed in Table~\ref{tab:hyper}. 
While \NAMO and \NAMOD exhibit similar training and validation losses in the 10K-step learning-rate sweep, 
Figure~\ref{fig:gpt2s} shows that when training is extended to 50K steps using the selected optimal learning rate, 
\NAMOD attains lower training and validation losses than \NAMO, demonstrating advantages of its finer-grained neuron-wise adaptive scaling.
The training and validation losses at termination for each optimizer are reported in Table~\ref{tab:losses}.

We further conduct experiments on pretraining the GPT-2 (355M) model. For each optimizer, we sweep from five learning rates that are around the chosen optimal learning
rate for GPT-2 (124M) model listed in Table~\ref{tab:hyper}. Specifically, for Muon and AdamW, we sweep for the optimal LR from the following set: $$\Set{0.0006,0.0009,0.0013,0.0018,0.0025};$$
for \NAMO and for \NAMOD, we sweep for the optimal LR from the following set: $$\Set{0.005,0.007, 0.009,0.012, 0.015}.$$
The optimal LR is the one that achieves the lowest 
validation loss after $10$K steps.
Additionally, for \NAMOD, we observe that its performance on the GPT-2 (355M) model can vary for different choices of the clamping hyperparameter $c.$ Therefore, we also
sweep for an optimal $c$ from the following set: $$\Set{0.12, 0.40, 0.75, 0.90}.$$
The optimal hyperparameters for each optimizer for the GPT-2 (355M) model are listed in Table~\ref{tab:hyper}, and the training and validation losses at termination for each optimizer are reported in Table~\ref{tab:losses}, together with those for the GPT-2 (124M) model.
Figure~\ref{fig:gpt2m} show that \NAMO and \NAMOD outperforms both Muon and AdamW, with \NAMOD providing further gains over \NAMO through the additional hyperparameter $c.$
This observation is consistent with the algorithmic design of the two proposed algorithms. \NAMO augments Muon with adaptivity while preserving the orthogonality of the
update direction. On the other hand, \NAMOD enables neuron-wise adaptivity while no longer strictly preserving the orthogonalized direction, and a suitable choice 
of the clamping parameter $c$ balances the two competing goals of maintaining the structural advantages of a well-conditioned update direction and
benefiting from finer-grained noise adaptation.

\begin{table}[htp]
\caption{\textbf{Optimal hyperparameters for all optimizers and pretraining of two model sizes.} Grid search is used to determine the optimal choices. }
\vspace{0.5em}
\centering
\begin{tabular}{lcc}
\toprule
Optimizer & GPT-2 (124M) Hyperparameters & GPT-2 (355M) Hyperparameters\\
\midrule
AdamW & $\eta=0.0013$ & $\eta=0.0009$\\
Muon & $\eta=0.0013$ & $\eta=0.0009$\\
\NAMO & $\eta=0.012$ & $\eta=0.007$\\
\NAMOD & $\eta=0.009$, $c=0.1$ & $\eta=0.009$, $c=0.9$\\
\bottomrule
\end{tabular}
\label{tab:hyper}
\end{table}

\begin{figure}[thp]
  \centering
  \subfloat[Training loss\label{fig:gpt2s_train_lr_sweep}]{
    \includegraphics[width=0.48\linewidth]{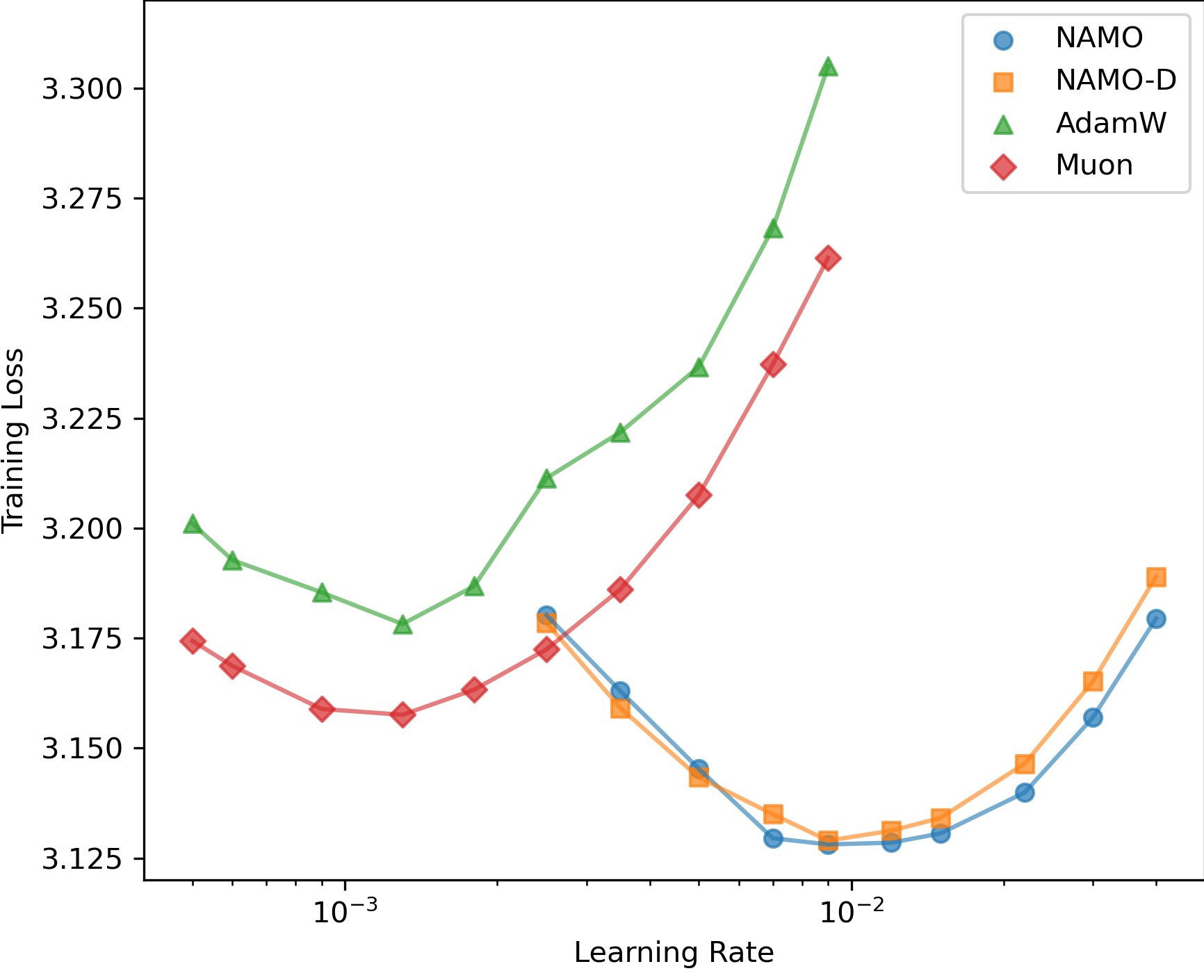}
  }
  \hfill
  \subfloat[Validation Loss\label{fig:gpt2s_val_lr_sweep}]{
    \includegraphics[width=0.48\linewidth]{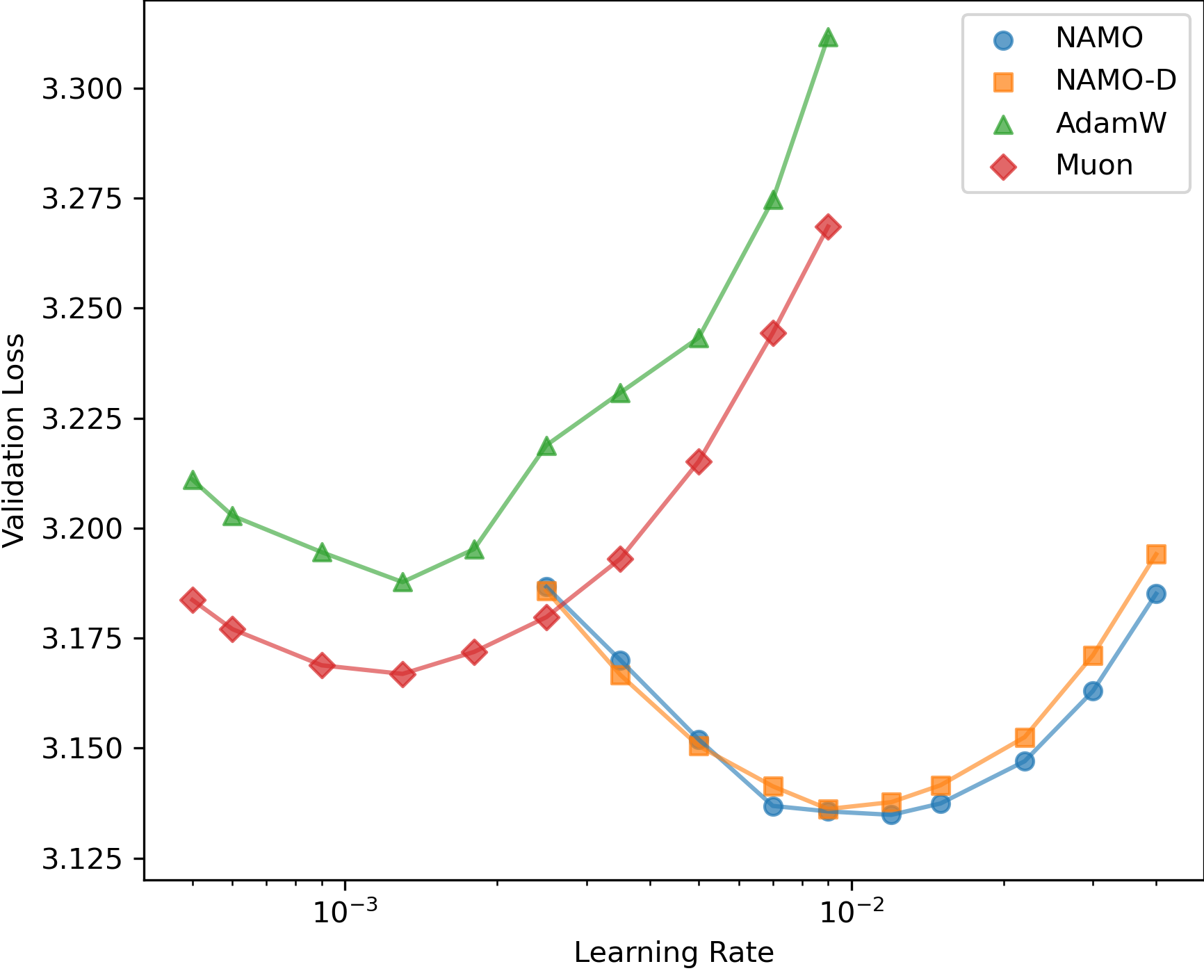}
  }
  \caption{\textbf{Hyperparameter sweeping results for GPT-2 (124M).} The training and validation losses at step 10K are reported, where the x-axis is the learning rate.}
\label{fig:gpt2s_sweep}
\end{figure}

\begin{figure}[thp]
  \centering
  \subfloat[Training loss\label{fig:gpt2s_train_50K}]{
    \includegraphics[width=0.48\linewidth]{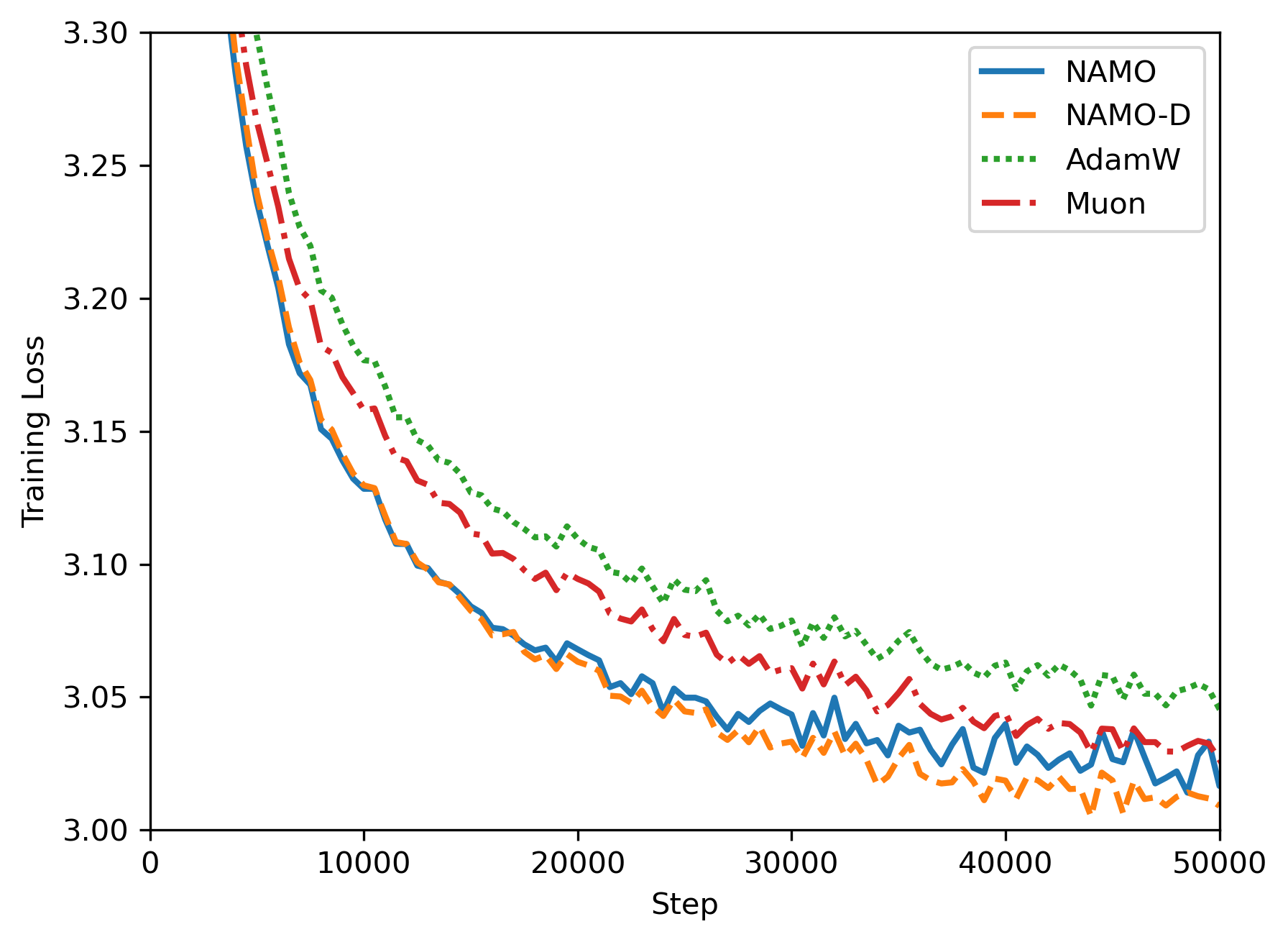}
  }
  \hfill
  \subfloat[Validation Loss\label{fig:gpt2s_val_50K}]{
    \includegraphics[width=0.48\linewidth]{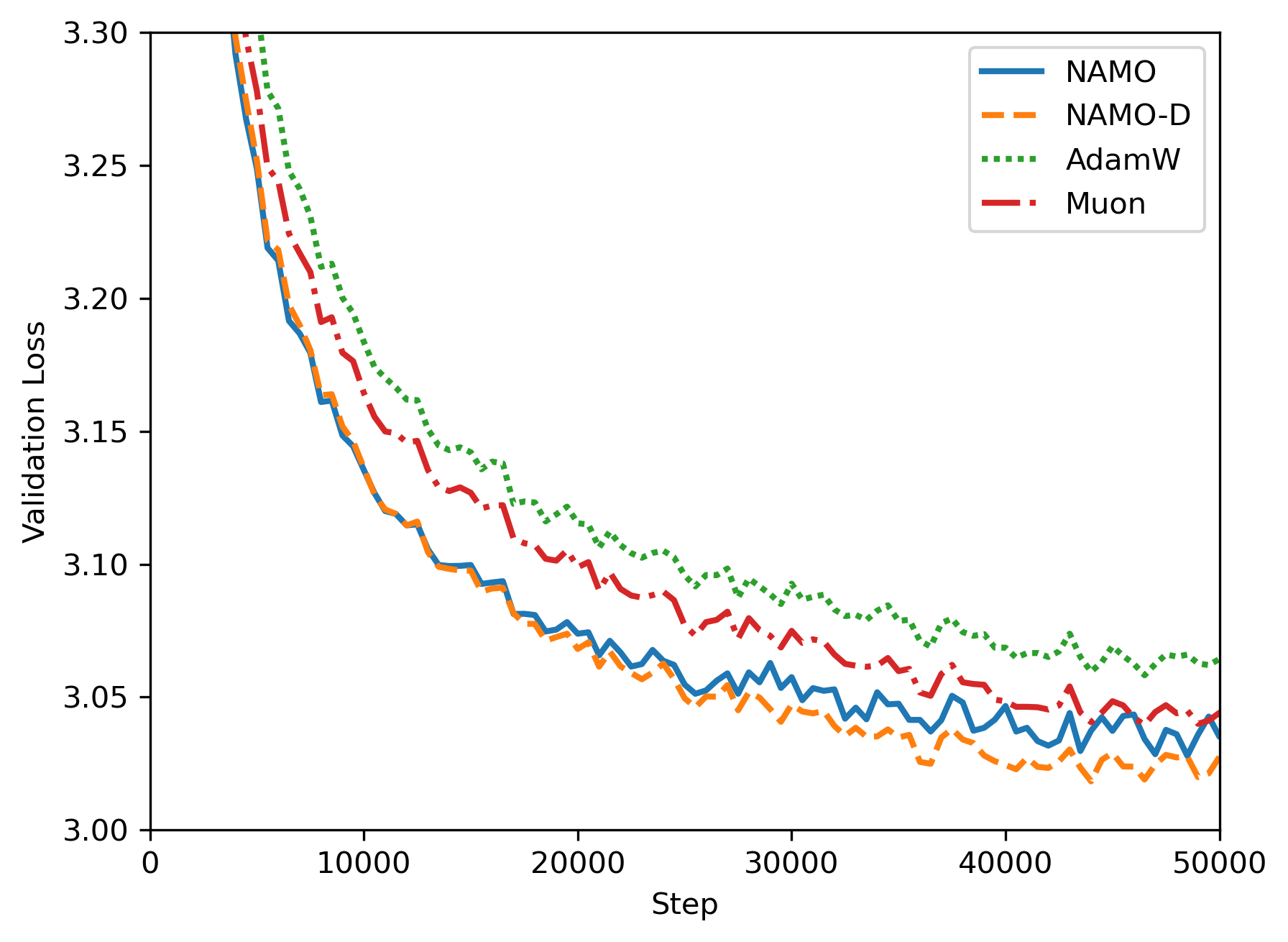}
  }
  \caption{\textbf{Pretraining GPT-2 (124M) for 50K steps.} The optimal LR from sweeping for 10K steps is used. }
\label{fig:gpt2s}
\end{figure}

\begin{table}[thp]
\centering
\caption{\textbf{Final training and validation losses for GPT-2 (124M) and GPT-2 (355M).} GPT-2 (124M) is trained for 50K steps, and GPT-2 (355M)
is trained for 10K steps.}
\label{tab:final-loss}
\vspace{0.5em}
\begin{tabular}{l cc cc}
\toprule
\multirow{2}{*}{Optimizer} & \multicolumn{2}{c}{GPT-2 (124M)} & \multicolumn{2}{c}{GPT-2 (355M)} \\
\cmidrule(lr){2-3}\cmidrule(lr){4-5}
 & Training Loss & Validation Loss & Training Loss & Validation Loss \\
\midrule
AdamW  & {3.0456} & {3.0643} & {2.9760} & {2.9914} \\
Muon   & {3.0265} & {3.0435} & {2.9524} & {2.9684} \\
\NAMO  & {2.9272} & {3.0351} & {2.9359} & {2.9516} \\
\NAMOD & {2.9167} & {3.0246} & {2.9351} & {2.9507} \\
\bottomrule
\end{tabular}
\label{tab:losses}
\end{table}

\begin{figure}[htp]
  \centering
  \subfloat[Training loss\label{fig:gpt2m_train_10K}]{
    \includegraphics[width=0.48\linewidth]{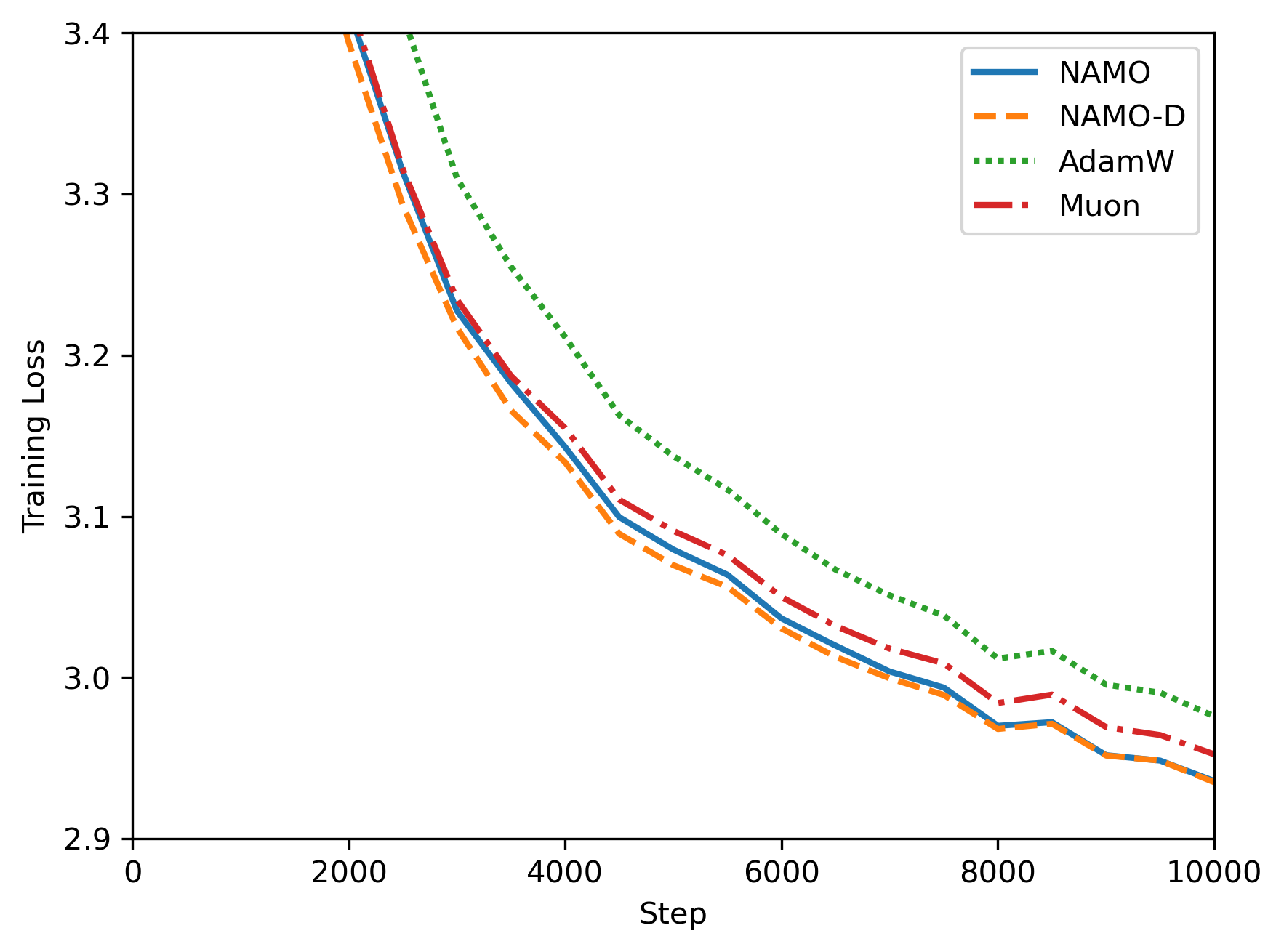}
  }
  \hfill
  \subfloat[Validation Loss\label{fig:gpt2m_val_10K}]{
    \includegraphics[width=0.48\linewidth]{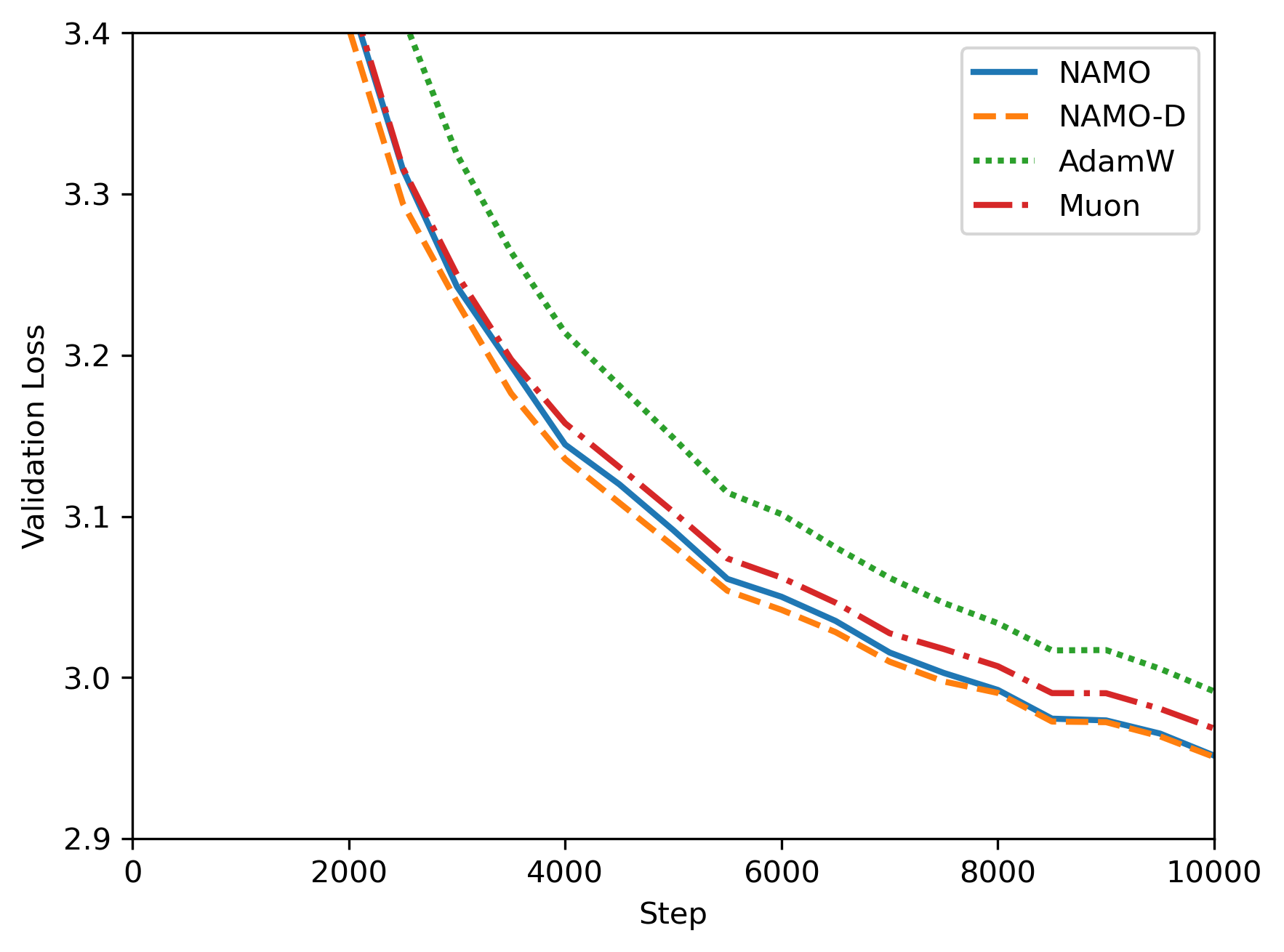}
  }
  \caption{\textbf{Pretraining GPT-2 (355M) for 10K steps.}
  The optimal LR (and optimal $c$ for \NAMOD) from sweeping for 10K steps are used.
  }
\label{fig:gpt2m}
\end{figure}

\section{Conclusions and Future Work}\label{sec:conclude}
In this work, we propose a new optimizer and a diagonal extension, \NAMO and \NAMOD, which provide the first theoretically principled integration of an orthogonalized update direction with norm-based 
adaptive moment estimation for noise adaptation. \NAMO rescales Muon’s orthogonalized momentum by a single adaptive scalar, thereby 
preserving the orthogonality of the update direction while yielding performance improvements over Muon at negligible additional computational cost. 
\NAMOD instead right-multiplies the orthogonalized momentum by a diagonal matrix, enabling finer-grained, neuron-wise noise adaptation for further 
improving performance, albeit without strictly preserving orthogonality of the update direction. Under standard smoothness and unbiased bounded-variance noise assumptions, we establish optimal convergence rates for both algorithms in the deterministic setting and show that, in the stochastic setting, their convergence guarantees 
adapt to the noise level of the stochastic gradients and attain the optimal rate when the batch size is sufficiently large. Experiments on pretraining GPT-2 (124M) and
GPT-2 (355M) models demonstrate the improved performance of both \NAMO and \NAMOD compared to the AdamW and Muon baselines. 
\NAMOD provides further gains over \NAMO through an additional clamping hyperparameter, which can be tuned to balance two competing goals: maintaining the structural advantages 
of a well-conditioned update direction and leveraging finer-grained noise adaptation. Future work includes evaluating \NAMO and \NAMOD on larger LLMs, 
developing tuning-light variants of \NAMOD, and further investigating theoretical and practical advantages of noise-adaptive scaling for orthogonalized updates.

\section*{Acknowledgement}
\paragraph{Funding.} This work was supported in part by NSF DMS 2502561.

\bibliography{ref}

\appendix
\section{Useful Lemmata}\label{appendix:lemma}
This section contains proofs of lemmata used in the analysis of \NAMO and \NAMOD.
The following lemma is used in the proofs of Theorems~\ref{thm:dm}--\ref{thm:dsm} for the derivation of a uniform upper bound of the adaptive stepsizes.
\begin{lem}\label{lem:snr}
Assume $\mu_1,\mu_2\in (0,1]$ with $\mu_1\le \mu_2$, $t>0,$ and $g_1,g_2,\cdots,g_t\in\Re^d.$ Define $m_t$ by $m_0=0$ and: \[
m_\tau = \mu_1 m_{\tau-1}+(1-\mu_1)g_\tau, ~\tau=1,2,\cdots,t-1.
\] Define $v_t$ by $v_0=0$ and: \[
v_\tau = \mu_2 m_{\tau-1}+(1-\mu_2)\norm{g_\tau}^2, ~\tau=1,2,\cdots,t-1.
\] Let $\hat m_t:=m_t/(1-\mu_1^t)$ and $\hat v_t:=v_t/(1-\mu_2^t).$
Then: \[\frac{\norm{\hat m_t}}{\sqrt{\hat v_t}}\le \sqrt{\frac{1-\mu_1}{1-\mu_2}}.\]
\end{lem}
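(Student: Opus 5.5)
We read the statement with its evident intended meaning: the recursions run for $\tau=1,\dots,t$, and the second one is $v_\tau=\mu_2 v_{\tau-1}+(1-\mu_2)\norm{g_\tau}^2$. We also assume $\mu_1,\mu_2\in[0,1)$, so that the bias corrections make sense.

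The plan is to unroll both recursions into explicit weighted sums, bound $\norm{\hat m_t}$ by Cauchy--Schwarz against $\hat v_t$, and reduce everything to a scalar inequality about geometric sums. Unrolling gives
\[
m_t=(1-\mu_1)\sum_{\tau=1}^t\mu_1^{t-\tau}g_\tau,\qquad v_t=(1-\mu_2)\sum_{\tau=1}^t\mu_2^{t-\tau}\norm{g_\tau}^2 .
\]
By the triangle inequality, $\norm{m_t}\le(1-\mu_1)\sum_\tau \mu_1^{t-\tau}\norm{g_\tau}$. We split each term as $\mu_1^{t-\tau}\norm{g_\tau}=\bigl(\mu_1^{t-\tau}\mu_2^{-(t-\tau)/2}\bigr)\bigl(\mu_2^{(t-\tau)/2}\norm{g_\tau}\bigr)$ and apply Cauchy--Schwarz:
\[
\norm{m_t}^2\le(1-\mu_1)^2\Bigl(\sum_{k=0}^{t-1}\bigl(\mu_1^2/\mu_2\bigr)^{k}\Bigr)\sum_{\tau}\mu_2^{t-\tau}\norm{g_\tau}^2=\frac{(1-\mu_1)^2}{1-\mu_2}\,S_t\,v_t ,
\]
where $S_t:=\sum_{k=0}^{t-1}(\mu_1^2/\mu_2)^k$. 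If $\mu_2=0$, then $\mu_1=0$ and the claim is $\norm{g_t}\le\norm{g_t}$, which is trivial.

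Inserting the bias corrections gives
\[
\frac{\norm{\hat m_t}^2}{\hat v_t}\le\frac{(1-\mu_1)^2(1-\mu_2^t)}{(1-\mu_1^t)^2(1-\mu_2)}\,S_t .
\]
It therefore suffices to prove the scalar inequality
\[
(1-\mu_1)(1-\mu_2^t)\,S_t\le(1-\mu_1^t)^2 .
\]
Write $r=\mu_1^2/\mu_2\le\mu_1$, which uses $\mu_1\le\mu_2$. Then $S_t\le\sum_{k<t}\mu_1^k=(1-\mu_1^t)/(1-\mu_1)$. The left side is thus at most $(1-\mu_2^t)(1-\mu_1^t)$, which unfortunately dominates $(1-\mu_1^t)^2$ because $\mu_2\ge\mu_1$. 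This crude bound is the main obstacle, and a finer estimate is needed.

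For the finer estimate we use the exact value $S_t=(1-r^t)/(1-r)$, so the inequality to prove becomes
\[
(1-\mu_1)(1-\mu_2^t)(1-r^t)\mu_2\le(1-\mu_1^t)^2(\mu_2-\mu_1^2).
\]
Our first attempt is a monotonicity argument in $\mu_2$ on $[\mu_1,1)$ at fixed $\mu_1$. At $\mu_2=\mu_1$ we have $r=\mu_1$, and both sides reduce to $\mu_1(1-\mu_1)(1-\mu_1^t)^2$, so equality holds. We would then check that the ratio LHS/RHS is nonincreasing in $\mu_2$. Equivalently, one can substitute $\mu_2=\mu_1/s$ with $s\in(\mu_1,1]$ and verify the resulting polynomial inequality in $s$ by comparing the expansions $1-x^t=(1-x)\sum_k x^k$ term by term.

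If that approach fails, the fallback is to avoid the triangle-inequality/Cauchy--Schwarz loss altogether. We would directly maximize $\norm{\sum_\tau w_{1,t,\tau}g_\tau}^2$ subject to $\sum_\tau w_{2,t,\tau}\norm{g_\tau}^2=1$. The maximum equals $\sum_\tau w_{1,t,\tau}^2/w_{2,t,\tau}$, which is exactly the quantity above, so any correct proof must establish the same scalar inequality. Either way, the key remaining step is this elementary inequality relating the geometric weight sums.
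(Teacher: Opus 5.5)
Your reduction is sound. Unrolling, the weighted Cauchy--Schwarz split, and the bias corrections correctly reduce the lemma to the scalar inequality $(1-\mu_1)(1-\mu_2^t)S_t\le(1-\mu_1^t)^2$. Your remark that $\sum_\tau w_{1,t,\tau}^2/w_{2,t,\tau}$ is the sharp constant is also right.

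The gap is that you abandon the argument exactly where it closes, because of a sign slip. From $r=\mu_1^2/\mu_2\le\mu_1$ you correctly obtain $(1-\mu_1)(1-\mu_2^t)S_t\le(1-\mu_2^t)(1-\mu_1^t)$. But $\mu_2\ge\mu_1\ge 0$ gives $\mu_2^t\ge\mu_1^t$, i.e.\ $1-\mu_2^t\le 1-\mu_1^t$, hence
\[
(1-\mu_1)(1-\mu_2^t)S_t\le(1-\mu_2^t)(1-\mu_1^t)\le(1-\mu_1^t)^2,
\]
which is precisely what you need. So the ``main obstacle'' does not exist, and your crude bound already finishes the proof.

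As submitted, however, the proposal declares this inequality the ``key remaining step'' and offers only an unverified monotonicity plan, so it is not yet a proof. That plan would also succeed trivially: both $1-\mu_2^t$ and $S_t=\sum_{k<t}(\mu_1^2/\mu_2)^k$ are nonincreasing in $\mu_2$, and the inequality is an equality at $\mu_2=\mu_1$.

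Once this line is corrected, your argument is complete and rests on the same two facts as the paper's: $(\mu_1/\mu_2)^{t-\tau}\le 1$ and $(1-\mu_2^t)/(1-\mu_1^t)\le 1$. The paper packages them more directly:
\begin{itemize}
\item it applies weighted Cauchy--Schwarz (Jensen), $\|\sum_\tau w_{1,t,\tau}g_\tau\|^2\le\sum_\tau w_{1,t,\tau}\|g_\tau\|^2$;
\item it then compares weights pointwise, $w_{1,t,\tau}\le\frac{1-\mu_1}{1-\mu_2}w_{2,t,\tau}$, so no geometric sum is ever computed.
\end{itemize}
Your route instead passes through the sharp constant $\sum_\tau w_{1,t,\tau}^2/w_{2,t,\tau}$ and then relaxes it.

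Your corrections of the statement's typos match how the lemma is used in the paper: recursions run through $\tau=t$, $v_{\tau-1}$ replaces $m_{\tau-1}$, and $\mu_i\in[0,1)$.
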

\begin{proof}
Write $w_{1,t,\tau}:=\frac{1-\mu_1}{1-\mu_1^t}\mu_1^{t-\tau}$ and $w_{2,t,\tau}:=\frac{1-\mu_2}{1-\mu_2^t}\mu_2^{t-\tau}$.
Then: \[
\hat m_t=\frac{1-\mu_1}{1-\mu_1^t}\sum_{\tau=1}^t\mu_1^{t-\tau}g_\tau=\sum_{\tau=1}^t w_{1,t,\tau}g_\tau,
\] and \[
\hat v_t=\frac{1-\mu_2}{1-\mu_2^t}\sum_{\tau=1}^t\mu_2^{t-\tau}\norm{g_\tau}^2=\sum_{\tau=1}^tw_{2,t,\tau}\norm{g_\tau}^2.
\]
Since $\sum_{\tau=1}^tw_{1,t,\tau}=\sum_{\tau=1}^tw_{2,t,\tau}=1.$ Under the assumption that $\mu_1\le\mu_2,$ \[
w_{1,t,\tau} = \frac{1-\mu_1}{1-\mu_2}\frac{1-\mu_2^t}{1-\mu_1^t}\left(\frac{\mu_1}{\mu_2}\right)^{t-\tau}w_{2,t,\tau}\le \frac{1-\mu_1}{1-\mu_2}w_{2,t,\tau}.
\] By Cauchy-Schwarz inequality, \begin{align*}
\norm{\hat m_t}^2 = & \norm{\sum_{\tau=1}^t w_{1,t,\tau}g_\tau}^2 \\
\le & \left(\sum_{\tau=1}^t w_{1,t,\tau}\right)\left(\sum_{\tau=1}^t w_{1,t,\tau}\norm{g_\tau}^2\right)\\
\le & \frac{1-\mu_1}{1-\mu_2}\sum_{\tau=1}^t w_{2,t,\tau}\norm{g_\tau}^2\\
=& \frac{1-\mu_1}{1-\mu_2}\hat v_t.
\end{align*} Hence, \begin{equation*}
\alpha_t\le \frac{\norm{\hat m_t}}{\sqrt{\hat v_t}}\le \sqrt{\frac{1-\mu_1}{1-\mu_2}}.
\end{equation*}
\end{proof}

The lemma below is used in the analysis of \NAMO and \NAMOD in the deterministic setting, i.e. in the proofs of Theorem~\ref{thm:dm} and Theorem~\ref{thm:ddm}.
\begin{lem}\label{lem:phi_eps}
For $\epsilon>0$ and $x\ge 0,$ define $\phi_\epsilon(x):=\frac{x^2}{x+\epsilon}.$ Then: \[
x\le  \phi_\epsilon(x)
+\sqrt{\epsilon\phi_\epsilon(x)}.
\]
\end{lem}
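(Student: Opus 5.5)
The plan is to treat the inequality as an elementary statement about a single nonnegative real $x$, writing $\phi:=\phi_\epsilon(x)=x^2/(x+\epsilon)$, and to reduce it to a quadratic inequality in $x$. The defining relation $\phi(x+\epsilon)=x^2$ says that $x$ is the nonnegative root of
\[
x^2-\phi x-\epsilon\phi=0.
\]
So I would solve this quadratic explicitly and bound the root by $\phi+\sqrt{\epsilon\phi}$.

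First, handle the trivial case $x=0$. Then $\phi=0$, and both sides of the claimed inequality are zero.

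For $x>0$, solve the quadratic to get
\[
x=\frac{\phi+\sqrt{\phi^2+4\epsilon\phi}}{2}.
\]
Then apply subadditivity of the square root, $\sqrt{a+b}\le\sqrt a+\sqrt b$ for $a,b\ge 0$:
\[
\sqrt{\phi^2+4\epsilon\phi}\le \phi+2\sqrt{\epsilon\phi}.
\]
Substituting this bound gives $x\le \phi+\sqrt{\epsilon\phi}$, which is the claim.

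Alternatively, I would argue directly by contradiction, avoiding the root formula. Suppose $x>\phi+\sqrt{\epsilon\phi}$. Then $x-\phi>\sqrt{\epsilon\phi}\ge 0$, so squaring is legitimate and gives
\[
(x-\phi)^2>\epsilon\phi.
\]
On the other hand, the quadratic relation gives $x^2-\phi x=\epsilon\phi$, so the left side satisfies $(x-\phi)^2=x(x-\phi)-\phi(x-\phi)=\epsilon\phi-\phi(x-\phi)\le\epsilon\phi$. This is a contradiction.

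There is no real obstacle here. The only points needing care are the sign conditions before squaring and the degenerate case $x=0$, and both are handled above.
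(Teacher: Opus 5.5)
Your proposal is correct and matches the paper's proof: the paper also rewrites the definition as the quadratic relation $x^2-\phi_\epsilon(x)x-\epsilon\phi_\epsilon(x)\le 0$, bounds $x$ by the positive root $\frac{\phi_\epsilon(x)+\sqrt{\phi_\epsilon(x)^2+4\epsilon\phi_\epsilon(x)}}{2}$, and finishes with subadditivity of the square root. Your separate treatment of $x=0$ and your contradiction variant are both valid, though neither is needed.
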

\begin{proof}
By the definition of $\phi_\epsilon(x)$, \[
x^2-\phi_\epsilon(x)x-\epsilon\phi_\epsilon(x)\le 0.
\] Solving for $x$ gives: \begin{equation*}
x\le \frac{\phi_\epsilon(x)+\sqrt{\phi_\epsilon(x)^2+4\epsilon\phi_\epsilon(x)}}{2}\le \phi_\epsilon(x)
+\sqrt{\epsilon\phi_\epsilon(x)}.
\end{equation*} 
\end{proof}



The following two lemmas are used in the proofs of Theorem~\ref{thm:sm} and Theorem~\ref{thm:dsm}.
\begin{lem}\label{lem:mut}
For $\mu\in (0,1)$ and $T>0,$ \[
\sum_{t=1}^T \frac{1}{1-\mu^t}\le T+\frac{\mu}{1-\mu}-\frac{1}{\ln\mu}\ln\left(\frac{1-\mu^T}{1-\mu}\right).
\]
\end{lem}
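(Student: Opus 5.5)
The plan is to split off the constant part and handle the remainder by comparison with an integral. Write
\[
\frac{1}{1-\mu^t}=1+\frac{\mu^t}{1-\mu^t},
\]
so that
\[
\sum_{t=1}^T \frac{1}{1-\mu^t}=T+\sum_{t=1}^T\frac{\mu^t}{1-\mu^t}.
\]
It then suffices to show
\[
\sum_{t=1}^T\frac{\mu^t}{1-\mu^t}\le \frac{\mu}{1-\mu}-\frac{1}{\ln\mu}\ln\left(\frac{1-\mu^T}{1-\mu}\right).
\]
The first term $\frac{\mu}{1-\mu}$ is exactly the $t=1$ summand, which I will peel off and keep as is. The remaining sum over $t=2,\dots,T$ I will bound by an integral.

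The function $f(s):=\frac{\mu^s}{1-\mu^s}$ is positive and decreasing on $s>0$, because $\mu^s$ decreases in $s$ and $x\mapsto x/(1-x)$ is increasing on $(0,1)$. For a decreasing function, $f(t)\le\int_{t-1}^{t}f(s)\,ds$, so
\[
\sum_{t=2}^T f(t)\le\int_1^T f(s)\,ds.
\]

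To compute the integral, substitute $u=\mu^s$, so that $du=\mu^s\ln\mu\,ds$. Then
\[
\int \frac{\mu^s}{1-\mu^s}\,ds=\frac{1}{\ln\mu}\int\frac{du}{1-u}=-\frac{1}{\ln\mu}\ln(1-\mu^s)+C.
\]
Evaluating from $1$ to $T$ gives
\[
-\frac{1}{\ln\mu}\Big[\ln(1-\mu^T)-\ln(1-\mu)\Big]=-\frac{1}{\ln\mu}\ln\left(\frac{1-\mu^T}{1-\mu}\right).
\]
This quantity is nonnegative, since $\ln\mu<0$ and $\frac{1-\mu^T}{1-\mu}\ge1$. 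Adding back the $t=1$ term yields the claim, and the case $T=1$ is immediate.

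None of the steps is hard. The one point that needs care is the direction of the integral comparison for a decreasing function: the integral must run over $[t-1,t]$ rather than $[t,t+1]$. This forces the $t=1$ term to be handled separately, since $f$ blows up at $s=0$, and that separately handled term is exactly the source of the $\frac{\mu}{1-\mu}$ in the bound.
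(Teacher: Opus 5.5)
Your proposal is correct and follows essentially the same route as the paper: split $\frac{1}{1-\mu^t}=1+\frac{\mu^t}{1-\mu^t}$, bound $\sum_{t=1}^T f(t)\le f(1)+\int_1^T f(x)\,dx$ using that $f(x)=\frac{\mu^x}{1-\mu^x}$ is decreasing, and evaluate the integral with the substitution $y=\mu^x$. Your explicit justification of the comparison over $[t-1,t]$ for $t\ge 2$, which the paper simply calls the integral test, is a small but welcome addition.
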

\begin{proof}
For $x\ge 1,$ define: \[
f(x):=\frac{\mu^x}{1-\mu^x}.
\] Since: \[
f'(x) = \frac{\mu^x\ln\mu}{(1-\mu^x)^2}<0,
\] the integral test gives: \[
\sum_{t=1}^T \frac{\mu^t}{1-\mu^t} = \sum_{t=1}^T f(t)\le f(1)+\int_{1}^T f(x) dx.
\] To compute the integral, write $y:=\mu^x, $ then $dy=\mu^x\ln\mu dx=y\ln\mu dx,$ and \[
\int_{1}^T \frac{\mu^x}{1-\mu^x} dx =\int_{\mu}^{\mu^T} \frac{y}{1-y}\frac{dy}{y\ln\mu}=-\frac{1}{\ln\mu}\int_{\mu}^{\mu^T}d\ln (1-y)
=-\frac{1}{\ln\mu}\ln\left(\frac{1-\mu^T}{1-\mu}\right).
\] Hence, \[
\sum_{t=1}^T \frac{\mu^t}{1-\mu^t}\le \frac{\mu}{1-\mu}-\frac{1}{\ln\mu}\ln\left(\frac{1-\mu^T}{1-\mu}\right).
\] It then follows that: \[
\sum_{t=1}^T \frac{1}{1-\mu^t} = \sum_{t=1}^T \left(1+\frac{\mu^t}{1-\mu^t}\right)\le T+\frac{\mu}{1-\mu}-\frac{1}{\ln\mu}\ln\left(\frac{1-\mu^T}{1-\mu}\right).
\]
\end{proof}

\begin{lem}\label{lem:mutsqrt}
For $\mu\in (0,1)$ and $T>0,$ \[
\sum_{t=1}^T \frac{1}{\sqrt{1-\mu^t}}\le T-\frac{2\ln(1+\sqrt{1-\mu^T})}{\ln\mu}.
\]
\end{lem}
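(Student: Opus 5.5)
The plan is to follow the proof of Lemma~\ref{lem:mut}: split off the constant part of each summand, bound the remaining decreasing part by an integral, and evaluate that integral in closed form. Write
\[
\frac{1}{\sqrt{1-\mu^t}} = 1 + f(t), \qquad f(x):=\frac{1}{\sqrt{1-\mu^x}}-1 \quad (x>0).
\]
Then $\sum_{t=1}^T \frac{1}{\sqrt{1-\mu^t}} = T + \sum_{t=1}^T f(t)$, so it suffices to show
\[
\sum_{t=1}^T f(t)\le -\frac{2\ln\left(1+\sqrt{1-\mu^T}\right)}{\ln\mu}.
\]

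\textbf{Monotonicity and the integral test.} Since $\mu\in(0,1)$, the map $x\mapsto\mu^x$ is decreasing. Hence $1-\mu^x$ is increasing, and $f$ is positive and strictly decreasing on $(0,\infty)$. Unlike Lemma~\ref{lem:mut}, there is no separate $f(1)$ term in the target bound. This suggests comparing with the integral starting at $0$ instead of $1$: for decreasing $f$ we have $f(t)\le\int_{t-1}^{t}f(x)\,dx$, and therefore
\[
\sum_{t=1}^T f(t)\le\int_0^T f(x)\,dx.
\]
The integral is improper at $0$. Near $x=0$ we have $1-\mu^x\approx -x\ln\mu$, so $f(x)\sim(-x\ln\mu)^{-1/2}$, which is integrable. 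I will either note this explicitly or take the lower limit to be $\delta$ and let $\delta\to0^+$.

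\textbf{Computing the integral.} This is the only step requiring real care. Substitute $y=\mu^x$, so that $dx=\frac{dy}{y\ln\mu}$ and the limits $x:0\to T$ become $y:1\to\mu^T$. This gives
\[
\int_0^T f(x)\,dx=\frac{1}{\ln\mu}\int_1^{\mu^T}\left(\frac{1}{y\sqrt{1-y}}-\frac{1}{y}\right)dy .
\]
Next set $s=\sqrt{1-y}$. A standard partial-fractions computation gives the antiderivative of $\frac{1}{y\sqrt{1-y}}$ as $\ln\frac{1-s}{1+s}$, and $\frac1y$ integrates to $\ln y$. Since $y=(1-s)(1+s)$, the difference simplifies:
\[
\ln\frac{1-s}{1+s}-\ln\big((1-s)(1+s)\big) = -2\ln(1+s).
\]
Differentiating $-2\ln(1+\sqrt{1-y})$ with respect to $y$ directly confirms this antiderivative. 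At $y=1$ we have $s=0$, so the boundary term vanishes; at $y=\mu^T$ we have $s=\sqrt{1-\mu^T}$. Therefore
\[
\int_0^T f(x)\,dx=\frac{1}{\ln\mu}\left(-2\ln\left(1+\sqrt{1-\mu^T}\right)\right),
\]
which is exactly the required bound, and it is nonnegative because $\ln\mu<0$.

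\textbf{Conclusion and main obstacle.} Adding $T$ to this integral bound completes the proof. The main points to handle carefully are the choice of $0$ as the lower integration limit, together with the justification of the improper integral, and the simplification $\ln\frac{1-s}{1+s}-\ln y=-2\ln(1+s)$, which is what produces the clean closed form.
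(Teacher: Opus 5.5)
Your proof is correct and is essentially the paper's argument: compare the decreasing summand with $\int_0^T$ and evaluate in closed form. Your two substitutions $y=\mu^x$ and $s=\sqrt{1-y}$ compose to the paper's substitution $y=\sqrt{1-\mu^x}$, and subtracting the constant $1$ before integrating is cosmetic, since the paper instead extracts $T$ from $\ln\frac{(1+s)^2}{\mu^T}$ at the end. You also handle a point the paper glosses over: its comparison with $\int_0^1 f$ needs monotonicity on $(0,1]$ and convergence of the improper integral at $0$, whereas the paper only states monotonicity on $[1,\infty)$.
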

\begin{proof}
For $x\ge 1,$ define: \[
f(x):=\frac{1}{\sqrt{1-\mu^x}}.
\] Since: \[
f'(x) = \frac{\mu^x\ln\mu}{2(1-\mu^x)^{3/2}}<0,
\] f is strictly decreasing on $[1,\infty)$. Hence, \[
\sum_{t=1}^T \frac{1}{\sqrt{1-\mu^t}}\le \int_0^T f(x) dx.
\] Let $y:=\sqrt{1-\mu^x},$ then $dy = \frac{-\mu^x\ln\mu}{2\sqrt{1-\mu^x}} dx = \frac{(y^2-1)\ln\mu}{2y}dx.$ It follows that: \[
\sum_{t=1}^T \frac{1}{\sqrt{1-\mu^t}}\le \int_0^T \frac{1}{\sqrt{1-\mu^x}} dx =\frac{-2}{\ln\mu}\int_0^{\sqrt{1-\mu^T}} \frac{1}{1-y^2} dy =\frac{1}{-\ln\mu}
\ln\left(\frac{1+\sqrt{1-\mu^T}}{1-\sqrt{1-\mu^T}}\right).
\] Since:\[
\frac{1}{-\ln\mu}\ln\left(\frac{1+\sqrt{1-\mu^T}}{1-\sqrt{1-\mu^T}}\right) 
=\frac{1}{-\ln\mu} \ln\left(\frac{\left(1+\sqrt{1-\mu^T}\right)^2}{1-\left(1-\mu^T\right)}\right) 
=T-\frac{2\ln(1+\sqrt{1-\mu^T})}{\ln\mu},
\] Hence, \[
\sum_{t=1}^T \frac{1}{\sqrt{1-\mu^t}}\le T-\frac{2\ln(1+\sqrt{1-\mu^T})}{\ln\mu}.
\]

\end{proof}

The lemma below is used for the analysis for \NAMOD, i.e., in the proofs of Theorem~\ref{thm:ddm} and Theorem~\ref{thm:dsm}.
\begin{lem}\label{lem:od}
Let $M\in\Re^{m\times n}$ have reduced singular value decomposition (SVD) $M=U\Sigma V^T$,
and let $O=UV^T.$ Let $D=\diag\left(d_1,\cdots,dn\right)$ with $d_i\ge 0$ for all $i,$ and write
$d_{\min}:=\min_i d_i.$ Then $$\dotP{M}{OD}\ge d_{\min}\norm{M}_*.$$
\end{lem}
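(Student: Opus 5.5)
The approach is to compute the inner product explicitly using the reduced SVD. Everything then reduces to the diagonal of a positive semidefinite $n\times n$ matrix, whose trace is exactly the nuclear norm.

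\textbf{Rewrite the inner product as a trace.} Write $M=U\Sigma V^T$ with $U\in\Re^{m\times r}$, $V\in\Re^{n\times r}$ having orthonormal columns and $\Sigma=\diag(\sigma_1,\dots,\sigma_r)$ with $\sigma_k>0$. Using $\dotP{A}{B}=\trace{A^TB}$ and $U^TU=I_r$,
\[
\dotP{M}{OD}=\trace{V\Sigma U^T U V^T D}=\trace{V\Sigma V^T D}.
\]

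\textbf{Expand against the diagonal matrix.} Set $P:=V\Sigma V^T\in\Re^{n\times n}$. Since $D$ is diagonal, $\trace{PD}=\sum_{i=1}^n d_i P_{ii}$.

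\textbf{Bound the diagonal entries.} The matrix $P$ is symmetric positive semidefinite, because $x^TPx=\sum_k\sigma_k (v_k^Tx)^2\ge 0$. Hence each $P_{ii}\ge 0$. Since also $d_i\ge d_{\min}\ge 0$,
\[
\sum_i d_iP_{ii}\ge d_{\min}\sum_i P_{ii}=d_{\min}\trace{V\Sigma V^T}.
\]
By cyclicity and $V^TV=I_r$, $\trace{V\Sigma V^T}=\trace{\Sigma}=\norm{M}_*$, which is the claim.

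\textbf{Main obstacle.} There is no real difficulty. The one point to get right is noticing that the $U$ factors cancel, so only the PSD matrix $V\Sigma V^T$ interacts with $D$; nonnegativity of its diagonal is what allows the lower bound by $d_{\min}$. The case $M=0$ is trivial, since both sides vanish.
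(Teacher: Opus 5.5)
Your proof is correct and follows essentially the same route as the paper. After the $U$ factors cancel, the paper writes $\dotP{M}{OD}=\trace{\Sigma V^TDV}$ and pairs $V^T(D-d_{\min}I_n)V\succeq 0$ with the nonnegative diagonal $\Sigma$, while you use the cyclically equivalent $\trace{V\Sigma V^TD}$ with the roles swapped, pairing the nonnegative diagonal of the PSD matrix $V\Sigma V^T$ with $d_i\ge d_{\min}$ (which, incidentally, does not require $d_{\min}\ge 0$).
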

\begin{proof}
Note that: \[
\dotP{M}{OD} = \trace{M^T OD} = \trace{(U\Sigma V^T)^TUV^TD}=\trace{V\Sigma V^T D}=\trace{\Sigma V^TDV}.
\] Since $D \succeq d_{\min} I_n$, \[
V^T(D-d_{\min}I_n )V\succeq 0. 
\] It follows that: \[
\dotP{M}{OD}=d_{\min}\trace{\Sigma}+\trace{\Sigma V^T(D-d_{\min}I_n )V}\ge d_{\min}\norm{M}_*.
\]
\end{proof}

\section{Proof of Theorem~\ref{thm:dm}}\label{pf:dm}
This section contains the detailed proof of Theorem~\ref{thm:dm} for the convergence of \NAMO in the deterministic setting.
\begin{proof}
\par\noindent\textbf{Step 1: A uniform upper bound on stepsize.} For each $t\ge 0$ and $\tau\le t,$  define: \[
w_{1,t,\tau}:=\frac{1-\mu_1}{1-\mu_1^t}\mu_1^{t-\tau} ~\textrm{ and }~ w_{2,t,\tau}:=\frac{1-\mu_2}{1-\mu_2^t}\mu_2^{t-\tau}.
\]
It satisfies that $\sum_{\tau=1}^tw_{1,t,\tau}=\sum_{\tau=1}^tw_{2,t,\tau}=1.$
Let $\hat M_t:=  M_t/(1-\mu_1^t)$ and $\hat v_t:=v_t/(1-\mu_2^t),$ then:
\[
\hat M_t=\frac{1-\mu_1}{1-\mu_1^t}\sum_{\tau=1}^t\mu_1^{t-\tau}\Grad\L(\Theta_{\tau-1})=\sum_{\tau=1}^t w_{1,t,\tau}\Grad\L(\Theta_{\tau-1}),
\] and \[
\hat v_t=\frac{1-\mu_2}{1-\mu_2^t}\sum_{\tau=1}^t\mu_2^{t-\tau}\norm{\Grad\L(\Theta_{\tau-1})}^2=\sum_{\tau=1}^tw_{2,t,\tau}\norm{\Grad\L(\Theta_{\tau-1})}^2,
\]
Define:
\[
\alpha_t:= \frac{\sqrt{1-\mu_2^t}}{1-\mu_1^t}\frac{\normF{M_t}}{\sqrt{v_t}+\epsilon}=\frac{\normF{\hat M_t}}{\sqrt{\hat v_t}+\epsilon/\sqrt{1-\mu_2^t}}.
\]
Since the Frobenius norm of a matrix coincides with the Euclidean norm of its vectorization, by Lemma~\ref{lem:snr},
\begin{equation}\label{eq:alphat}
\alpha_t\le \frac{\normF{\hat M_t}}{\sqrt{\hat v_t}}=\sqrt{\frac{1-\mu_1}{1-\mu_2}}.
\end{equation}

\par\noindent\textbf{Step 2: A descent inequality and averaging.}
By \citep[Lemma B.1]{zhang2025adagrad},
\begin{align*}
 & \L(\Theta_{t})-\L(\Theta_{t-1})\\
\le & -\dotP{\Grad\L(\Theta_{t-1})}{\eta\alpha_t O_t} +\frac{ L}{2}\eta^2\alpha_t^2\\
\le &-\eta\alpha_t\norm{\Grad\L(\Theta_{t-1})}_* +2\eta\alpha_t \norm{\Grad\L(\Theta_{t-1})-\hat M_t}_*+\frac{ L}{2}\eta^2\alpha_t^2\\
= & -\eta\frac{\norm{\hat M_t}\norm{\Grad\L(\Theta_{t-1})}_*}
{\sqrt{\hat v_t}+\epsilon/\sqrt{1-\mu_2}}+2\eta\alpha_t \norm{\Grad\L(\Theta_{t-1})-\hat M_t}_*+\frac{ L}{2}\eta^2\alpha_t^2.
\end{align*}
Rearranging the terms gives: \begin{align}\label{eq:phiTm}
\frac{1}{T}\sum_{t=1}^T \frac{\norm{\hat M_t}\norm{\Grad\L(\Theta_{t-1})}_*}
{\sqrt{\hat v_t}+\epsilon/\sqrt{1-\mu_2}}\le & \frac{\Delta}{\eta T}+\frac{2}{T}\sum_{t=1}^T\alpha_t\norm{\Grad\L(\Theta_{t-1})-\hat M_t}_*+\frac{\eta L}{2T}\sum_{t=1}^T\alpha_t^2\nonumber\\
\le &  \frac{\Delta}{\eta T}+\frac{2}{T}\sqrt{\frac{1-\mu_1}{1-\mu_2}}\sum_{t=1}^T\norm{\Grad\L(\Theta_{t-1})-\hat M_t}_*
+\frac{\eta L}{2}\left(\frac{1-\mu_1}{1-\mu_2}\right),
\end{align}
where $\Delta:=\L(\Theta_0)-\min_\Theta \L(\Theta).$

\par\noindent\textbf{Step 3: Bounding the distance between bias-corrected momentum and true gradient.}
The difference between the scaled momentum $\hat M_t$ and the gradient $\Grad\L(\Theta_{t-1})$ can be bounded by: \begin{align*}
\norm{\hat M_t-\Grad\L(\Theta_{t-1})}_*\le & \sum_{\tau=1}^tw_{1,t,\tau}\norm{\Grad\L(\Theta_{\tau-1})-\Grad\L(\Theta_{t-1})}_*\\
\le &  \sum_{\tau=1}^tw_{1,t,\tau}L\norm{\Theta_{\tau-1}-\Theta_{t-1}}_2\\
\le & \sum_{\tau=1}^tw_{1,t,\tau}L\norm{\sum_{s=\tau}^{t-1}\eta\alpha_sO_s}_2\\
\le &\sum_{\tau=1}^tw_{1,t,\tau}\eta L\left(\sum_{s=\tau}^{t-1}\alpha_s\right).
\end{align*} Then by \eqref{eq:alphat} and the definition of $w_{1,t,\tau}$, \begin{align}\label{eq:Mtilde}
\norm{\hat M_t-\Grad\L(\Theta_{t-1})}_*\le & \sum_{\tau=1}^tw_{1,t,\tau}\eta L\left(t-\tau\right)\sqrt{\frac{1-\mu_1}{1-\mu_2}}\nonumber\\
= &  \sum_{\tau=1}^t \eta L\left(t-\tau\right)\mu_1^{t-\tau}\frac{1-\mu_1}{1-\mu_1^t}\sqrt{\frac{1-\mu_1}{1-\mu_2}}\nonumber\\
\le & \eta L \frac{1-\mu_1}{1-\mu_1^{t}}\sqrt{\frac{1-\mu_1}{1-\mu_2}}\sum_{j=1}^\infty j\mu_1^j\nonumber\\
\le & \eta L\sqrt{\frac{1-\mu_1}{1-\mu_2}}\frac{\mu_1}{(1-\mu_1)^2}.
\end{align}

\par\noindent\textbf{Step 4: Bounding $\sqrt{\hat v_t}$.}
For $\hat v_t$, it satisfies that: \begin{align*}
\hat v_t = & \sum_{\tau=1}^t w_{2,t,\tau}\normF{\Grad\L(\Theta_{\tau-1})}^2\\
\le & \sum_{\tau=1}^t w_{2,t,\tau}\left(\norm{\Grad\L(\Theta_{t-1})}_*+L\norm{\Theta_{\tau-1}-\Theta_{t-1}}_2\right)^2\\
\le & \sum_{\tau=1}^t w_{2,t,\tau}\left(\norm{\Grad\L(\Theta_{t-1})}_*+\eta L\left(\sum_{s=\tau}^{t-1}\alpha_s\right)\right)^2\\
\end{align*}
Then by \eqref{eq:alphat} and the definition of $w_{2,t,\tau}$, \begin{align*}
\hat v_t \le &\sum_{\tau=1}^t w_{2,t,\tau}\left(\norm{\Grad\L(\Theta_{t-1})}_*+\eta L\left(t-\tau\right)\sqrt{\frac{1-\mu_1}{1-\mu_2}}\right)^2\\
= &\norm{\Grad\L(\Theta_{t-1})}_*^2+2\eta L\norm{\Grad\L(\Theta_{t-1})}_*\sqrt{\frac{1-\mu_1}{1-\mu_2}}\sum_{\tau=1}^t w_{2,t,\tau}(t-\tau)\\
& + \eta^2 L^2\left(\frac{1-\mu_1}{1-\mu_2}\right)\sum_{\tau=1}^t w_{2,t,\tau}(t-\tau)^2.
\end{align*}
By Cauchy-Schwarz inequality, it follows that: \begin{align*}
\hat v_t  \le & \norm{\Grad\L(\Theta_{t-1})}_*^2+2\eta L\norm{\Grad\L(\Theta_{t-1})}_*\sqrt{\frac{1-\mu_1}{1-\mu_2}}\sqrt{\sum_{\tau=1}^t w_{2,t,\tau}(t-\tau)^2}\\
& + \eta^2 L^2\left(\frac{1-\mu_1}{1-\mu_2}\right)\left(\sum_{\tau=1}^t w_{2,t,\tau}(t-\tau)^2\right)\\
\le & \left(\norm{\Grad\L(\Theta_{t-1})}_*+\eta L\sqrt{\frac{1-\mu_1}{1-\mu_2}}\sqrt{\sum_{\tau=1}^t w_{2,t,\tau}(t-\tau)^2}\right)^2\\
=& \left(\norm{\Grad\L(\Theta_{t-1})}_*+a_t\right)^2,
\end{align*} where \[
a_t:= \eta L\sqrt{\frac{1-\mu_1}{1-\mu_2}}\sqrt{\sum_{\tau=1}^t w_{2,t,\tau}(t-\tau)^2}.
\] Since \begin{align}\label{eq:csquared}
a_t^2 \le 
& \eta^2 L^2\left(\frac{1-\mu_1}{1-\mu_2}\right)\left(\frac{1-\mu_2}{1-\mu_2^t}\right)\left(\sum_{\tau=1}^t \mu_2^{t-\tau}(t-\tau)^2\right)\nonumber\\
\le & \eta^2 L^2\left(\frac{1-\mu_1}{1-\mu_2}\right)\left(\sum_{\tau=1}^\infty \mu_2^{\tau}\tau^2\right)\nonumber\\
\le & \eta^2 L^2\left(\frac{1-\mu_1}{1-\mu_2}\right)\left(\frac{\mu_2(1+\mu_2)}{(1-\mu_2)^3}\right):=a^2.
\end{align} Hence, \begin{equation}\label{eq:vtilde}
\sqrt{\hat v_t}\le \norm{\Grad\L(\Theta_{t-1})}_*+a.
\end{equation}

\par\noindent\textbf{Step 5: Relating $\alpha_t\|\nabla\L(\Theta_{t-1})\|_*$ to $\|\nabla\L(\Theta_{t-1})\|_*$.}
Combining \eqref{eq:Mtilde} and \eqref{eq:vtilde} gives: \begin{align*}
\alpha_t\norm{\Grad\L(\Theta_{t-1})}_*= & \frac{\normF{\hat M_t}\norm{\Grad\L(\Theta_{t-1}}_*}{\sqrt{\hat v_t}+\epsilon/\sqrt{1-\mu_2^t}}\\
\ge & \frac{\left(\normF{\Grad\L(\Theta_{t-1}}-\normF{\hat M_t-\Grad\L(\Theta_{t-1})}\right)\norm{\Grad\L(\Theta_{t-1}}_*}
{\norm{\Grad\L(\Theta_{t-1})}_*+\epsilon/\sqrt{1-\mu_2^t}+a}\\
\ge & \frac{\normF{\Grad\L(\Theta_{t-1}}\norm{\Grad\L(\Theta_{t-1}}_*}{\norm{\Grad\L(\Theta_{t-1})}_*+\epsilon/\sqrt{1-\mu_2^t}+a} 
-\normF{\hat M_t-\Grad\L(\Theta_{t-1})}\\
\ge & \frac{\normF{\Grad\L(\Theta_{t-1}}\norm{\Grad\L(\Theta_{t-1}}_*}{\norm{\Grad\L(\Theta_{t-1})}_*+\epsilon/(1-\mu_2^t)}\left(1-\frac{a}
{\norm{\Grad\L(\Theta_{t-1})}_*+\epsilon/(1-\mu_2^t)+a}\right)-\normF{\hat M_t-\Grad\L(\Theta_{t-1})}\\
\ge & \frac{\normF{\Grad\L(\Theta_{t-1}}\norm{\Grad\L(\Theta_{t-1}}_*}{\norm{\Grad\L(\Theta_{t-1})}_*+\epsilon/(1-\mu_2^t)}
-a-\normF{\hat M_t-\Grad\L(\Theta_{t-1})}\\
\ge & \frac{\normF{\Grad\L(\Theta_{t-1}}\norm{\Grad\L(\Theta_{t-1}}_*}{\norm{\Grad\L(\Theta_{t-1})}_*+\epsilon/(1-\mu_2^t)}-a-\eta L\sqrt{\frac{1-\mu_1}{1-\mu_2}}\frac{\mu_1}{(1-\mu_1)^2}\\
\ge & \frac{\norm{\Grad\L(\Theta_{t-1}}_*^2/\sqrt{r}}{\norm{\Grad\L(\Theta_{t-1})}_*+\tilde \epsilon}-a-\eta L\sqrt{\frac{1-\mu_1}{1-\mu_2}}\frac{\mu_1}{(1-\mu_1)^2}
\end{align*}
where $\tilde \epsilon := \epsilon/(1-\mu_2)$, $r:=\min\{m,n\}$
and the constant $a$ is given in \eqref{eq:csquared}.
Define $\phi_{\tilde \epsilon}(x):=\frac{x^2}{x+\tilde \epsilon}$ for $x\ge 0,$ it then follows from \eqref{eq:phiTm} and \eqref{eq:Mtilde} that: \begin{align*}
&\frac{1}{T}\sum_{t=1}^T \phi_{\tilde \epsilon}(\norm{\Grad\L(\Theta_{t-1}}_*)\\
\le & \sqrt{r}\left(\frac{1}{T}\sum_{t=1}^T 
\frac{\norm{\hat M_t}\norm{\Grad\L(\Theta_{t-1}}_*}{\sqrt{\hat v_t}+\epsilon/(1-\mu_2^t)}+a+\eta L\sqrt{\frac{1-\mu_1}{1-\mu_2}}\frac{\mu_1}{(1-\mu_1)^2}\right)\\
\le & \sqrt{r}\left(\frac{\Delta}{\eta T}+\frac{2}{T}\sqrt{\frac{1-\mu_1}{1-\mu_2}}\sum_{t=1}^T\norm{\Grad\L(\Theta_{t-1})-\hat M_t}_*
+\frac{\eta L}{2}\left(\frac{1-\mu_1}{1-\mu_2}\right)
+a+\eta L\sqrt{\frac{1-\mu_1}{1-\mu_2}}\frac{\mu_1}{(1-\mu_1)^2}\right)\\
\le &  \sqrt{r}\left(\frac{\Delta}{\eta T}+\frac{2}{T}\sqrt{\frac{1-\mu_1}{1-\mu_2}}\sum_{t=1}^T\norm{\Grad\L(\Theta_{t-1})-\hat M_t}_*
+\frac{\eta L}{2}\left(\frac{1-\mu_1}{1-\mu_2}\right)
+a+\eta L\sqrt{\frac{1-\mu_1}{1-\mu_2}}\frac{\mu_1}{(1-\mu_1)^2}\right)\\
\le & \sqrt{r}\left(\frac{\Delta}{\eta T}+\left(\frac{1-\mu_1}{1-\mu_2}\right)\frac{2\eta L\mu_1}{(1-\mu_1)^2}
+\frac{\eta L}{2}\left(\frac{1-\mu_1}{1-\mu_2}\right)
+\eta L\frac{\sqrt{\mu_2(1-\mu_1)(1+\mu_2)}}{(1-\mu_2)^2}+\sqrt{\frac{1-\mu_1}{1-\mu_2}}\frac{\eta L\mu_1}{(1-\mu_1)^2}\right)\\
=& \frac{\Delta \sqrt{r}}{\eta T}+\eta L C_{\mu}\sqrt{r},
\end{align*} where \[
C_\mu:= \left(\frac{1-\mu_1}{1-\mu_2}\right)\frac{2\mu_1}{(1-\mu_1)^2}
+\frac{1}{2}\left(\frac{1-\mu_1}{1-\mu_2}\right)
+\frac{\sqrt{\mu_2(1-\mu_1)(1+\mu_2)}}{(1-\mu_2)^2}+\sqrt{\frac{1-\mu_1}{1-\mu_2}}\frac{\mu_1}{(1-\mu_1)^2}
\] is a constant that depends on $\mu_1$ and $\mu_2$. 

\par\noindent\textbf{Step 6: Deriving the convergence rate.}
By Lemma~\ref{lem:phi_eps}, \begin{align*}
\frac{1}{T}\sum_{t=1}^T \norm{\Grad\L(\Theta_{t-1}}_*\le & \frac{1}{T}\sum_{t=1}^T \phi_{\tilde\epsilon}\left(\norm{\Grad\L(\Theta_{t-1}}_*\right)
+\frac{1}{T}\sum_{t=1}^T \sqrt{\tilde \epsilon\phi_{\tilde\epsilon}\left(\norm{\Grad\L(\Theta_{t-1}}_*\right)}\\
\le & \frac{1}{T}\sum_{t=1}^T \phi_{\tilde\epsilon}\left(\norm{\Grad\L(\Theta_{t-1}}_*\right)
+\frac{\tilde\epsilon^{\frac{1}{2}}}{\sqrt{T}}\sqrt{\sum_{t=1}^T \phi_{\tilde\epsilon}\left(\norm{\Grad\L(\Theta_{t-1}}_*\right)}\\
\le & \frac{\Delta\sqrt{r}}{\eta T}+\eta L C_{\mu}\sqrt{r}  + \tilde \epsilon^{\frac{1}{2}}\sqrt{\frac{\Delta\sqrt{r}}{\eta T}+\eta L C_{\mu}\sqrt{r}}.
\end{align*}

In particular, if choosing $\eta = \bigO\left(T^{-\frac{1}{2}}\right)$, $\mu_1=\Theta(1),$ $\mu_2=\Theta(1)$, and $\epsilon=\bigO(T^{-\frac{1}{2}}),$ then: \[
\frac{1}{T}\sum_{t=1}^T\normF{\Grad\L(\Theta_{t-1})}\le\frac{1}{T}\sum_{t=1}^T\norm{\Grad\L(\Theta_{t-1})}_*\le\bigO\left(T^{-\frac{1}{2}}\right)
\]
for large $T>0$. The proof is thus completed.
\end{proof}

\section{Proof of Theorem~\ref{thm:sm}}\label{pf:sm}
This section contains the detailed proof of Theorem~\ref{thm:sm} for the convergence of \NAMO in the stochastic setting.
\begin{proof}
\par\noindent\textbf{Step 1: A uniform upper bound on stepsize}
For each $t\ge 0$ and $\tau\le t,$  define: \[
w_{1,t,\tau}:=\frac{1-\mu_1}{1-\mu_1^t}\mu_1^{t-\tau} ~\textrm{ and }~ w_{2,t,\tau}:=\frac{1-\mu_2}{1-\mu_2^t}\mu_2^{t-\tau}.
\]
It satisfies that $\sum_{\tau=1}^tw_{1,t,\tau}=\sum_{\tau=1}^tw_{2,t,\tau}=1.$
Let $\hat M_t:=  M_t/(1-\mu_1^t)$ and $\hat v_t:=v_t/(1-\mu_2^t),$ then:
\[
\hat M_t=\frac{1-\mu_1}{1-\mu_1^t}\sum_{\tau=1}^t\mu_1^{t-\tau}G_\tau=\sum_{\tau=1}^t w_{1,t,\tau}G_\tau,
\] and \[
\hat v_t=\frac{1-\mu_2}{1-\mu_2^t}\sum_{\tau=1}^t\mu_2^{t-\tau}\normF{G_\tau}^2=\sum_{\tau=1}^tw_{2,t,\tau}\normF{G_\tau}^2,
\]
Define: 
\[
\alpha_t:= \frac{\sqrt{1-\mu_2^t}}{1-\mu_1^t}\frac{\normF{M_t}}{\sqrt{v_t}+\epsilon}=\frac{\normF{\hat M_t}}{\sqrt{\hat v_t}+\epsilon/\sqrt{1-\mu_2^t}}.
\]

Since the Frobenius norm of a matrix coincides with the Euclidean norm of its vectorization, by Lemma~\ref{lem:snr},
\begin{equation}\label{eq:alphats}
\alpha_t\le \frac{\normF{\hat M_t}}{\sqrt{\hat v_t}}=\sqrt{\frac{1-\mu_1}{1-\mu_2}}.
\end{equation}

\par\noindent\textbf{Step 2: Expected descent and an average bound on $\alpha_t\|\hat M_t\|_*$.}
Let $\expect_t[\cdot]:=\expect[\cdot|\Theta_{t-1}]$ denote the conditional expectation given the previous iterates
$\Theta_0,\cdots,\Theta_{t-1},$ and write $E_t:=\hat M_t-\Grad\L(\Theta_{t-1})$. Then by \citep[Lemma B.1]{zhang2025adagrad},
\begin{align*}
&\expect_t\left[\L(\Theta_{t})-\L(\Theta_{t-1})\right]\\
\le &\expect_t\left[-\dotP{\Grad\L(\Theta_{t-1})}
{ \eta\alpha_t O_t}\right]+  \frac{\eta^2L}{2}\expect_t\left[ \alpha_t^2\right]\\
= &\expect_t\left[-\dotP{\Grad\L(\Theta_{t-1})-\hat M_t}{ \eta\alpha_t O_t}\right] 
-\expect_t\left[\eta\alpha_t\norm{\hat M_t}_*\right]+
\frac{\eta^2 L}{2}\expect_t\left[ \alpha_t^2\right]\\
\le & \Biggl(\expect_t\left[ \eta\alpha_t\norm{\Grad\L(\Theta_{t-1})-\hat M_t}_*\right] 
-\expect_t\left[\eta\alpha_t\norm{\hat M_t}_*\right] \Biggr)
+\frac{\eta^2 L}{2}\expect_t\left[ \alpha_t^2\right]\\
\le &  -\expect_t\left[\eta\alpha_t\norm{\hat M_t}_*\right] 
+\expect_t\left[ \eta\alpha_t\norm{E_t}_*\right] 
+\frac{\eta^2 L}{2}\expect_t\left[ \alpha_t^2\right].
\end{align*}
Rearranging the terms gives: \[
\expect_t\left[\alpha_t\norm{\hat M_t}_*\right]\le \expect_t\left[\L(\Theta_{t-1})-\L(\Theta_{t})\right]+
\expect_t\left[\alpha_t\norm{E_t}_*\right] 
+\frac{\eta L}{2}\expect_t\left[ \alpha_t^2\right].
\]
Then by the law of total expectation and \eqref{eq:alphats},
\begin{align}\label{eq:totalM}
\frac{1}{T}\sum_{t=1}^T\expect\left[\alpha_t\norm{\hat M_t}_*\right]\nonumber
\le & \frac{\Delta}{\eta T}+ \frac{1}{T}\sum_{t=1}^T\expect\left[\alpha_t\norm{E_t}_*\right] 
+\frac{\eta L}{2T}\sum_{t=1}^T \expect\left[ \alpha_t^2\right]\\
\le & \frac{\Delta}{\eta T}+ \frac{1}{T}\sqrt{\frac{1-\mu_1}{1-\mu_2}}\sum_{t=1}^T\expect\left[\norm{E_t}_*\right] 
+\frac{\eta L (1-\mu_1)}{2 (1-\mu_2)}.
\end{align}

\par\noindent\textbf{Step 3: Bounding the distance between bias-corrected momentum and true gradient.}
For each $t$, it satisfies: \begin{align*}
E_t = & \hat M_t - \expect\left[\hat M_t\right] + \expect\left[\hat M_t\right] -\Grad\L(\Theta_{t-1})\\
= & \sum_{\tau=1}^t w_{1,t,\tau} \left(G_\tau-\Grad\L(\Theta_{\tau-1})\right)+\sum_{\tau=1}^t w_{1,t,\tau} 
\left(\Grad\L(\Theta_{\tau-1})-\Grad\L(\Theta_{t-1})\right)
\end{align*} Hence, by \eqref{eq:alphats}, \begin{align*}
\expect\left[\normF{E_t}^2\right] \le & \expect\left[\normF{\sum_{\tau=1}^t w_{1,t,\tau} \left(G_\tau-\Grad\L(\Theta_{\tau-1})\right)}^2\right] 
+\normF{\sum_{\tau=1}^t w_{1,t,\tau} \left(\Grad\L(\Theta_{\tau-1})-\Grad\L(\Theta_{t-1})\right)}^2\\
\le & \left(\frac{1-\mu_1}{1-\mu_1^t}\right)^2\left(\sum_{\tau=1}^t\mu_1^{2(t-\tau)}\right)\frac{\sigma^2}{b}
+\sum_{\tau=1}^t w_{1,t,\tau}\norm{\Grad\L(\Theta_{\tau-1})-\Grad\L(\Theta_{t-1})}_*^2\\
\le & \left(\frac{1-\mu_1}{1-\mu_1^t}\right)^2\left(\frac{1-\mu_1^{2t}}{1-\mu_1^2}\right)\frac{\sigma^2}{b}
+\left(\frac{1-\mu_1}{1-\mu_1^t}\right)\sum_{\tau=1}^t\mu_1^{t-\tau}L^2\norm{\Theta_{\tau-1}-\Theta_{t-1}}_2^2\\
\le & \left(\frac{1-\mu_1}{1-\mu_1^t}\right)\left(\frac{1+\mu_1^{t}}{1+\mu_1}\right)\frac{\sigma^2}{b}
+ \left(\frac{1-\mu_1}{1-\mu_1^t}\right)L^2\eta^2\sum_{\tau=1}^t\mu_1^{t-\tau}\left(\sum_{s=\tau}^{t-1}\alpha_s\right)^2\\
\le & \left(\frac{1-\mu_1}{1-\mu_1^t}\right)\left[\left(\frac{1+\mu_1^{t}}{1+\mu_1}\right)\frac{\sigma^2}{b}
+ \left(\frac{1-\mu_1}{1-\mu_2}\right)L^2\eta^2\sum_{\tau=1}^t\mu_1^{t-\tau}\left(t-\tau\right)^2\right]\\
\le & \left(\frac{1-\mu_1}{1-\mu_1^t}\right)\left[\left(\frac{1+\mu_1^{t}}{1+\mu_1}\right)\frac{\sigma^2}{b}
+ \left(\frac{1-\mu_1}{1-\mu_2}\right)L^2\eta^2\sum_{\tau=1}^\infty\mu_1^{\tau}\tau^2\right]\\
= & \left(\frac{1-\mu_1}{1-\mu_1^t}\right)\left[\left(\frac{1+\mu_1^{t}}{1+\mu_1}\right)\frac{\sigma^2}{b}
+ \left(\frac{\mu_1(1+\mu_1)}{\left(1-\mu_2\right)(1-\mu_1)^2}\right)L^2\eta^2\right]\\
\le & \left(\frac{1-\mu_1}{1-\mu_1^t}\right)\frac{\sigma^2}{b}+\frac{\mu_1(1+\mu_1)L^2\eta^2}{\left(1-\mu_2\right)(1-\mu_1)(1-\mu_1^t)}.
\end{align*}
Then by Lemma~\ref{lem:mut}, it follows that: \begin{align*}
\frac{1}{T}\sum_{t=1}^T\expect\left[\normF{E_t}^2\right] \le & \left(\frac{1}{T}\sum_{t=1}^T\frac{1}{1-\mu_1^t}\right)
\left(\frac{\sigma^2 (1-\mu_1)}{b}+\frac{\mu_1(1+\mu_1)L^2\eta^2}{(1-\mu_1)(1-\mu_2)}\right)\\
\le & \left(1+\frac{\mu_1}{(1-\mu_1)T}-\frac{1}{T\ln\mu_1}\ln\left(\frac{1-\mu_1^T}{1-\mu_1}\right)\right)
\left(\frac{\sigma^2 (1-\mu_1)}{b}+\frac{\mu_1(1+\mu_1)L^2\eta^2}{(1-\mu_1)(1-\mu_2)}\right).
\end{align*}
By Cauchy-Schwarz inequality and Jensen's inequality, \begin{align}\label{eq:sumEt}
\frac{1}{T}\sum_{t=1}^T\expect\left[\norm{E_t}_*\right]\le &\sqrt{\frac{r}{T}\sum_{t=1}^T\expect\left[\normF{E_t}^2\right]}\nonumber\\
\le & \sqrt{1+\frac{\mu_1}{(1-\mu_1)T}-\frac{1}{T\ln\mu_1}\ln\left(\frac{1-\mu_1^T}{1-\mu_1}\right)} 
\left(\frac{\sigma\sqrt{r(1-\mu_1)}}{\sqrt{b}}+L\eta 
\sqrt{\frac{r\mu_1(1+\mu_1)}{\left(1-\mu_2\right)(1-\mu_1)}}\right).
\end{align}

\par\noindent\textbf{Step 4: Bounding $\expect[\sqrt{\hat v_t}]$.}
For $\hat v_t,$ by Minkowski inequality and Jensen's inequality, \begin{align*}
\expect\left[\sqrt{\hat v_t}\right] = & \expect\left[\sqrt{\sum_{\tau=1}^tw_{2,t,\tau}\normF{G_\tau}^2}\right]\\
\le & \expect\left[\sqrt{\sum_{\tau=1}^tw_{2,t,\tau}\normF{\Grad\L(\Theta_{\tau-1})-G_\tau}^2}\right]
+\expect\left[\sqrt{\sum_{\tau=1}^tw_{2,t,\tau}\normF{\Grad\L(\Theta_{\tau-1})}^2}\right]\\
\le &  \frac{\sigma}{\sqrt{b}}+\expect\left[\normF{\Grad\L(\Theta_{t-1})}\right]
+\expect\left[\sqrt{\sum_{\tau=1}^tw_{2,t,\tau}\normF{\Grad\L(\Theta_{\tau-1})-\Grad\L(\Theta_{t-1})}^2}\right]\\
\le & \frac{\sigma}{\sqrt{b}}
+\expect\left[\normF{\Grad\L(\Theta_{t-1})}\right]+\eta L\expect\left[\sqrt{\sum_{\tau=1}^tw_{2,t,\tau}\left(\sum_{\tau=1}^t\alpha_s\right)^2}\right]\\
\stackrel{\eqref{eq:alphats}}{\le} & \frac{\sigma}{\sqrt{b}}+\expect\left[\normF{\Grad\L(\Theta_{t-1})}\right]+\eta L\sqrt{\frac{1-\mu_1}{1-\mu_2^t}}\sqrt{\sum_{\tau=1}^t\mu_2^{t-\tau}\left(t-\tau\right)^2}\\
\le & \frac{\sigma}{\sqrt{b}}+\expect\left[\normF{\Grad\L(\Theta_{t-1})}\right]
+\eta L\sqrt{\frac{1-\mu_1}{1-\mu_2^t}}\sqrt{\sum_{\tau=1}^\infty\mu_2^{t-\tau}\left(t-\tau\right)^2}\\
\le & \frac{\sigma}{\sqrt{b}}+\expect\left[\normF{\Grad\L(\Theta_{t-1})}\right]
+\eta L\sqrt{\frac{1-\mu_1}{1-\mu_2^t}}\sqrt{\sum_{\tau=1}^\infty\mu_2^{\tau}\tau^2}\\
\le & \frac{\sigma}{\sqrt{b}}+\expect\left[\normF{\Grad\L(\Theta_{t-1})}\right]+ \eta L 
\sqrt{\frac{(1-\mu_1)\mu_2(1+\mu_2)}{(1-\mu_2^t)(1-\mu_2)^3}}\\
\le & \frac{\sigma}{\sqrt{b}}+\expect\left[\normF{\Grad\L(\Theta_{t-1})}\right]+ \sqrt{2}\eta L 
\sqrt{\frac{(1-\mu_1)}{(1-\mu_2^t)(1-\mu_2)^3}}.
\end{align*}
Hence, \begin{equation}\label{eq:svt}
\expect\left[\sqrt{\hat v_t}+\frac{\epsilon}{\sqrt{1-\mu_2^t}}\right]\le \expect\left[\norm{\Grad\L(\Theta_{t-1})}\right]+a_t,
\end{equation} where \begin{equation*}
a_t: = \frac{\sigma}{\sqrt{b}}
+ \sqrt{2}\eta L \sqrt{\frac{(1-\mu_1)}{(1-\mu_2^t)(1-\mu_2)^3}}+\frac{\epsilon}{\sqrt{1-\mu_2^t}}.
\end{equation*} Then by Lemma~\ref{lem:mutsqrt},\begin{equation}\label{eq:sctsum}
\frac{1}{T}\sum_{t=1}^T a_t\le \frac{\sigma}{T}\sum_{t=1}^T \frac{1}{\sqrt{b}}+\left(\sqrt{2}\eta L \sqrt{\frac{(1-\mu_1)}{(1-\mu_2)^3}}+\epsilon\right)
\left(1-\frac{2\ln(1+\sqrt{1-\mu_2^T})}{T\ln\mu_2}\right).
\end{equation}

\par\noindent\textbf{Step 5: Relating $\expect\left[\alpha_t\normF{\hat M_t}\right]$ to $\expect\left[\normF{\Grad\L(\Theta_{t-1})}\right]$.}
By Cauchy-Schwarz inequality, \begin{align*}
\left(\expect\left[\normF{\hat M_t}\right]\right)^2= & \left(\expect\left[\frac{\normF{\hat M_t}}
{\left(\sqrt{\hat v_t}+\epsilon_t\right)^{\frac{1}{2}}}
\cdot \left(\sqrt{\hat v_t}+\epsilon_t\right)^{\frac{1}{2}}\right]\right)^2 \\
\le & \expect\left[\alpha_t\normF{\hat M_t}\right]\expect\left[\sqrt{\hat v_t}+\epsilon_t\right],
\end{align*}
where $\epsilon_t:=\epsilon/\sqrt{1-\mu_2^t}.$ Combining the above with \eqref{eq:svt} gives: \[
\expect\left[\alpha_t\normF{\hat M_t}\right]\ge \frac{\left(\expect\left[\normF{\hat M_t}\right]\right)^2}
{\expect\left[\normF{\Grad\L(\Theta_{t-1})}\right]+a_t}
\ge \frac{\left(\expect\left[\normF{\Grad\L(\Theta_{t-1})}\right]-
\expect\left[\normF{E_t}\right]\right)^2}
{\expect\left[\normF{\Grad\L(\Theta_{t-1})}\right]+a_t}.
\]
Rearranging the terms gives: \[
\left(\expect\left[\normF{\Grad\L(\Theta_{t-1})}\right]\right)^2-
\left(2\expect\left[\normF{E_t}\right]+\expect\left[\alpha_t\normF{\hat M_t}\right]\right)
\expect\left[\normF{\Grad\L(\Theta_{t-1})}\right]
-a_t\expect\left[\alpha_t\normF{\hat M_t}\right] +\expect\left[\normF{E_t}\right]^2\le 0.
\]
Then solving for $\expect\left[\normF{\Grad\L(\Theta_{t-1})}\right]$ gives: \begin{align*}
&\expect\left[\normF{\Grad\L(\Theta_{t-1})}\right]\\
\le & \frac{2\expect\left[\normF{E_t}\right]+\expect\left[\alpha_t\normF{\hat M_t}\right]
+\sqrt{\left(2\expect\left[\normF{E_t}\right]+\expect\left[\alpha_t\normF{\hat M_t}\right]\right)^2
+4a_t\expect\left[\alpha_t\normF{\hat M_t}\right]-4\expect\left[\normF{E_t}\right]^2}}{2}\\
\le & \expect\left[\normF{E_t}\right]+\frac{1}{2}\expect\left[\alpha_t\normF{\hat M_t}\right]
+\sqrt{\frac{1}{4}\expect\left[\alpha_t\normF{\hat M_t}\right]^2
+\left(\expect\left[\normF{E_t}\right]+a_t\right)\expect\left[\alpha_t\normF{\hat M_t}\right]}\\
\le & \expect\left[\normF{E_t}\right]+\expect\left[\alpha_t\normF{\hat M_t}\right]
+\sqrt{\left(\expect\left[\normF{E_t}\right]+a_t\right)\expect\left[\alpha_t\normF{\hat M_t}\right]}
\end{align*} 

\par\noindent\textbf{Step 6: Deriving the convergence rate.}
By Cauchy-Schwarz inequality, \begin{align*}
&\frac{1}{T}\sum_{t=1}^T\expect\left[\normF{\Grad\L(\Theta_{t-1})}\right]\\
\le  &
\frac{1}{T}\sum_{t=1}^T\expect\left[\normF{E_t}\right]+
\frac{1}{T}\sum_{t=1}^T\expect\left[\alpha_t\normF{\hat M_t}\right] +
\sqrt{\frac{1}{T}\sum_{t=1}^T\left(\expect\left[\normF{E_t}\right]+a_t\right)}
\sqrt{\frac{1}{T}\sum_{t=1}^T\expect\left[\alpha_t\normF{\hat M_t}\right]}.
\end{align*}
Combining the above with \eqref{eq:totalM} and \eqref{eq:sctsum} gives: 
\begin{align*}
&\frac{1}{T}\sum_{t=1}^T\expect\left[\normF{\Grad\L(\Theta_{t-1})}\right]\\
\le &\frac{1}{T}\sum_{t=1}^T\expect\left[\norm{E_t}_*\right]+  \frac{\Delta}{\eta T}+\frac{\eta L (1-\mu_1)}{2 (1-\mu_2)}
+ \frac{1}{T}\sqrt{\frac{1-\mu_1}{1-\mu_2}}\sum_{t=1}^T\expect\left[\norm{E_t}_*\right] \\
& +\sqrt{\frac{1}{T}\sum_{t=1}^T\left(\expect\left[\normF{E_t}\right]+a_t\right)} \cdot \sqrt{\frac{\Delta}{\eta T}
+ \frac{1}{T}\sqrt{\frac{1-\mu_1}{1-\mu_2}}\sum_{t=1}^T\expect\left[\norm{E_t}_*\right] 
+\frac{\eta L (1-\mu_1)}{2 (1-\mu_2)}}\\
\le & \frac{1}{T}\sum_{t=1}^T\expect\left[\norm{E_t}_*\right]+  \frac{\Delta}{\eta T}+\frac{\eta L (1-\mu_1)}{2 (1-\mu_2)}
+ \frac{1}{T}\sqrt{\frac{1-\mu_1}{1-\mu_2}}\sum_{t=1}^T\expect\left[\norm{E_t}_*\right] \\
& +\sqrt{\frac{1}{T}\sum_{t=1}^T \expect\left[\normF{E_t}\right]+
\frac{\sigma}{\sqrt{b}}+\left(\sqrt{2}\eta L \sqrt{\frac{(1-\mu_1)}{(1-\mu_2)^3}}+\epsilon\right)
\left(1-\frac{2\ln(1+\sqrt{1-\mu_2^T})}{T\ln\mu_2}\right)}\\
& \cdot \sqrt{\frac{\Delta}{\eta T}
+ \frac{1}{T}\sqrt{\frac{1-\mu_1}{1-\mu_2}}\sum_{t=1}^T\expect\left[\norm{E_t}_*\right] 
+\frac{\eta L (1-\mu_1)}{2 (1-\mu_2)}}.
\end{align*}
In particular, for large $T>0,$ if choosing
$\eta=\bigO(T^{-\frac{3}{4}})$, $1-\mu_1=\Theta(T^{-\frac{1}{2}})$, 
$1-\mu_2=\Theta(T^{-\frac{1}{2}}),$ $0\le \mu_1\le \mu_2<1$, and $\epsilon =\bigO(T^{-\frac{1}{2}})$,
then, by \eqref{eq:sumEt},\[
\frac{1}{T}\sum_{t=1}^T\expect\left[\norm{E_t}_*\right]
\le\bigO\left(T^{-\frac{1}{4}}\right),
\]
and it follows that:
\[
\frac{1}{T}\sum_{t=1}^T\expect\left[\normF{\Grad\L(\Theta_{t-1})}\right]\le \bigO\left(T^{-\frac{1}{4}}+\sqrt{\sigma}b^{-\frac{1}{4}}T^{-\frac{1}{8}}\right),
\] where $b$ is the batch size. 
The proof is thus completed.
\end{proof}

\section{Proof of Theorem~\ref{thm:ddm}}\label{pf:ddm}
This section contains the detailed proof of Theorem~\ref{thm:ddm} for the convergence of \NAMOD in the deterministic setting.
\begin{proof}
\par\noindent\textbf{Step 1: A uniform upper bound on stepsize.}
For each $t\ge 0$ and $\tau\le t,$  define: \[
w_{1,t,\tau}:=\frac{1-\mu_1}{1-\mu_1^t}\mu_1^{t-\tau} ~\textrm{ and }~ w_{2,t,\tau}:=\frac{1-\mu_2}{1-\mu_2^t}\mu_2^{t-\tau}.
\]
It satisfies that $\sum_{\tau=1}^tw_{1,t,\tau}=\sum_{\tau=1}^tw_{2,t,\tau}=1.$
For each $j=1,\cdots, n$ and $t>0,$ let $M_t^j$ denote the $j$-th column of $M_t$, and $\bv_t^j$ denote the $j$-th element of $\bv_t.$
Write $\hat M_t:= \frac{1}{1-\mu_1^t} M_t,$ $\hat \bv_t:=\frac{1}{1-\mu_2^t}\bv_t,$ and \[
D_t:= \diag\left(\min\left\{\max\left\{\bd_t,c \bar d_t \mathbf{1}\right\},\frac{1}{c}\bar d_t \mathbf{1}\right\}\right),
\] where $\bd_t$ and $\bar d_t$ are given in Algorithm~\ref{alg:namod}. 
Then: \[
\hat M_t=\frac{1-\mu_1}{1-\mu_1^t}\sum_{\tau=1}^t\mu_1^{t-\tau}\Grad\L(\Theta_{\tau-1})=\sum_{\tau=1}^t w_{1,t,\tau}\Grad\L(\Theta_{\tau-1}),
\] and \[
\hat \bv_t=\frac{1-\mu_2}{1-\mu_2^t}\sum_{\tau=1}^t\mu_2^{t-\tau}\N_c(\Grad\L(\Theta_{\tau-1}))\odot 
\N_c(\Grad\L(\Theta_{\tau-1}))=\sum_{\tau=1}^tw_{2,t,\tau}\N_c(\Grad\L(\Theta_{\tau-1}))\odot \N_c(\Grad\L(\Theta_{\tau-1})).
\]
By Lemma~\ref{lem:snr}, \begin{equation}\label{eq:ddtj}
\comp{D_t}_{jj}\le \frac{\norm{\hat M^j_t}}{\sqrt{\hat \bv^j_t}}\le \sqrt{\frac{1-\mu_1}{1-\mu_2}}, \quad\forall j.
\end{equation}
Write: \begin{equation}\label{eq:ddtm}
d_{t, \max} := \max_j \comp{D_t}_{jj}, 
~\textrm{ and } d_{t, \min} := \min_j \comp{D_t}_{jj}.
\end{equation} Then the condition number of $D_t$ is given by: \begin{equation}\label{eq:ddcond}
\kappa\left(D_t\right) =\frac{d_{t, \max}}{d_{t, \min}}\le \min\left\{\kappa_t, \frac{1}{c^2}\right\},
\end{equation}
where $c\in (0,1]$ is the fixed clamping hyperparameter, $\kappa_t:=\kappa\left(\diag\left(\bd_t\right)\right)$ denotes the condition number of $\diag\left(\bd_t\right).$

\par\noindent\textbf{Step 2: Descent inequality and averaging.}
By \citep[Lemma B.1]{zhang2025adagrad},
\begin{align*}
 & \L(\Theta_{t})-\L(\Theta_{t-1})\\
\le & -\dotP{\Grad\L(\Theta_{t-1})}{\eta O_t D_t} +\frac{ L}{2}\eta^2\norm{D_t}_2^2\\
\le &-\eta d_{t,\min}\norm{\Grad\L(\Theta_{t-1})}_* +2\eta d_{t,\max} \norm{\Grad\L(\Theta_{t-1})-\hat M_t}_*
+\frac{ L}{2}\eta^2\norm{D_t}_2^2\\
= & -\eta\max\{\kappa_t^{-1}, c^2\}d_{t,\max}\norm{\Grad\L(\Theta_{t-1})}_*
+2\eta d_{t,\max} \norm{\Grad\L(\Theta_{t-1})-\hat M_t}_*
+\frac{ L}{2}\eta^2\norm{D_t}_2^2.
\end{align*}
Rearranging the terms gives: \begin{align}\label{eq:dphiTm}
\frac{1}{T}\sum_{t=1}^T d_{t,\max}\norm{\Grad\L(\Theta_{t-1})}_*
\le & \frac{\Delta}{\eta T c^2}+\frac{2}{Tc^2}\sum_{t=1}^T d_{t,\max}\norm{\Grad\L(\Theta_{t-1})-\hat M_t}_*+\frac{\eta L}{2T c^2}\sum_{t=1}^Td_{t,\max}^2\nonumber\\
\le &  \frac{\Delta}{\eta Tc^2}+\frac{2}{Tc^2}\sqrt{\frac{1-\mu_1}{1-\mu_2}}\sum_{t=1}^T\norm{\Grad\L(\Theta_{t-1})-\hat M_t}_*
+\frac{\eta L}{2c^2}\left(\frac{1-\mu_1}{1-\mu_2}\right).
\end{align}

\par\noindent\textbf{Step 3: Bounding $\|\hat M_t-\nabla\L(\Theta_{t-1})\|_*$.}
The difference between the scaled momentum $\hat M_t$ and the gradient $\Grad\L(\Theta_{t-1})$ can be bounded by: \begin{align*}
\norm{\hat M_t-\Grad\L(\Theta_{t-1})}_*\le & \sum_{\tau=1}^tw_{1,t,\tau}\norm{\Grad\L(\Theta_{\tau-1})-\Grad\L(\Theta_{t-1})}_*\\
\le &  \sum_{\tau=1}^tw_{1,t,\tau}L\norm{\Theta_{\tau-1}-\Theta_{t-1}}_2\\
\le & \sum_{\tau=1}^tw_{1,t,\tau}L\norm{\sum_{s=\tau}^{t-1}\eta O_s D_s}_2\\
\le &\sum_{\tau=1}^tw_{1,t,\tau}\eta L\left(\sum_{s=\tau}^{t-1}d_{s,\max}\right).
\end{align*} Then by \eqref{eq:alphat} and the definition of $w_{1,t,\tau}$, \begin{align}\label{eq:dMtilde}
\norm{\hat M_t-\Grad\L(\Theta_{t-1})}_*\le & \sum_{\tau=1}^tw_{1,t,\tau}\eta L\left(t-\tau\right)\sqrt{\frac{1-\mu_1}{1-\mu_2}}\nonumber\\
= &  \sum_{\tau=1}^t \eta L\left(t-\tau\right)\mu_1^{t-\tau}\frac{1-\mu_1}{1-\mu_1^t}\sqrt{\frac{1-\mu_1}{1-\mu_2}}\nonumber\\
\le & \eta L \frac{1-\mu_1}{1-\mu_1^{t}}\sqrt{\frac{1-\mu_1}{1-\mu_2}}\sum_{j=1}^\infty j\mu_1^j\nonumber\\
\le & \eta L\sqrt{\frac{1-\mu_1}{1-\mu_2}}\frac{\mu_1}{(1-\mu_1)^2}.
\end{align}

\par\noindent\textbf{Step 4: Bounding $\sqrt{\sum_{j=1}^n \hat \bv_t^j}$.}
For $\hat \bv_t$, it satisfies that: \begin{align*}
\sum_{j=1}^n\hat \bv_t^j = & \sum_{\tau=1}^t w_{2,t,\tau}\normF{\Grad\L(\Theta_{\tau-1})}^2\\
\le & \sum_{\tau=1}^t w_{2,t,\tau}\left(\normF{\Grad\L(\Theta_{t-1}}+L\norm{\Theta_{\tau-1}-\Theta_{t-1}}_2\right)^2\\
\le & \sum_{\tau=1}^t w_{2,t,\tau}\left(\normF{\Grad\L(\Theta_{t-1}}+\eta L\left(\sum_{s=\tau}^{t-1} d_{s,\max}\right)\right)^2\\
\end{align*}
Then by \eqref{eq:alphat} and the definition of $w_{2,t,\tau}$, \begin{align*}
\sum_{j=1}^n\hat \bv_t^j \le &\sum_{\tau=1}^t w_{2,t,\tau}\left(\normF{\Grad\L(\Theta_{t-1}}+\eta L\left(t-\tau\right)\sqrt{\frac{1-\mu_1}{1-\mu_2}}\right)^2\\
= &\normF{\Grad\L(\Theta_{t-1})}^2+2\eta L\normF{\Grad\L(\Theta_{t-1})}\sqrt{\frac{1-\mu_1}{1-\mu_2}}\sum_{\tau=1}^t w_{2,t,\tau}(t-\tau)\\
& + \eta^2 L^2\left(\frac{1-\mu_1}{1-\mu_2}\right)\sum_{\tau=1}^t w_{2,t,\tau}(t-\tau)^2.
\end{align*}
By Cauchy-Schwarz inequality, it follows that: \begin{align*}
\sum_{j=1}^n\hat \bv_t^j  \le & \normF{\Grad\L(\Theta_{t-1})}^2+2\eta L\normF{\Grad\L(\Theta_{t-1})}\sqrt{\frac{1-\mu_1}{1-\mu_2}}\sqrt{\sum_{\tau=1}^t w_{2,t,\tau}(t-\tau)^2}\\
& + \eta^2 L^2\left(\frac{1-\mu_1}{1-\mu_2}\right)\left(\sum_{\tau=1}^t w_{2,t,\tau}(t-\tau)^2\right)\\
\le & \left(\normF{\Grad\L(\Theta_{t-1})}+\eta L\sqrt{\frac{1-\mu_1}{1-\mu_2}}\sqrt{\sum_{\tau=1}^t w_{2,t,\tau}(t-\tau)^2}\right)^2\\
=& \left(\normF{\Grad\L(\Theta_{t-1})}+a_t\right)^2,
\end{align*} where \[
a_t:= \eta L\sqrt{\frac{1-\mu_1}{1-\mu_2}}\sqrt{\sum_{\tau=1}^t w_{2,t,\tau}(t-\tau)^2}.
\] Since: \begin{align}\label{eq:dcsquared}
a_t^2 \le & \eta^2 L^2\left(\frac{1-\mu_1}{1-\mu_2}\right)
\left(\frac{1-\mu_2}{1-\mu_2^t}\right)\left(\sum_{\tau=1}^t \mu_2^{t-\tau}(t-\tau)^2\right)\nonumber\\
\le & \eta^2 L^2\left(\frac{1-\mu_1}{1-\mu_2}\right)\left(\sum_{\tau=1}^\infty \mu_2^{\tau}\tau^2\right)\nonumber\\
\le & \eta^2 L^2\left(\frac{1-\mu_1}{1-\mu_2}\right)\left(\frac{\mu_2(1+\mu_2)}{(1-\mu_2)^3}\right):=a^2.
\end{align} Hence, \begin{equation}\label{eq:dvtilde}
\sqrt{\sum_{j=1}^n\hat \bv_t^j}\le \normF{\Grad\L(\Theta_{t-1})}+a_t
\le \normF{\Grad\L(\Theta_{t-1})}+a.
\end{equation}

\par\noindent\textbf{Step 5: Lower-bounding $d_{t,\max}\|\nabla\L(\Theta_{t-1})\|_*$.}
Combining \eqref{eq:dMtilde} and \eqref{eq:dvtilde} gives: \begin{align*}
 & d_{t,\max}\norm{\Grad\L(\Theta_{t-1})}_*\\
 \ge & \frac{\norm{\hat M_t}\norm{\Grad\L(\Theta_{t-1}}_*}
{\sqrt{\sum_{j=1}^n \hat \bv_t^j}+n\epsilon/\sqrt{1-\mu_2^t}}\\
\ge & \frac{\left(\normF{\Grad\L(\Theta_{t-1}}-\norm{\hat M_t-\Grad\L(\Theta_{t-1})}\right)\norm{\Grad\L(\Theta_{t-1}}_*}
{\normF{\Grad\L(\Theta_{t-1})}+n\epsilon/\sqrt{1-\mu_2^t}+a}\\
\ge & \frac{\normF{\Grad\L(\Theta_{t-1}}^2}{\normF{\Grad\L(\Theta_{t-1})}+n\epsilon/\sqrt{1-\mu_2^t}+a} 
-\norm{\hat M_t-\Grad\L(\Theta_{t-1})}\\
\ge & \frac{\normF{\Grad\L(\Theta_{t-1}}^2}{\normF{\Grad\L(\Theta_{t-1})}+n\epsilon/\sqrt{1-\mu_2^t}}\left(1-\frac{a}
{\normF{\Grad\L(\Theta_{t-1})}+n\epsilon/\sqrt{1-\mu_2^t}+a}\right)-\norm{\hat M_t-\Grad\L(\Theta_{t-1})}\\
\ge & \frac{\normF{\Grad\L(\Theta_{t-1}}^2}{\normF{\Grad\L(\Theta_{t-1})}+n\epsilon/\sqrt{1-\mu_2^t}}-a-\norm{\hat M_t-\Grad\L(\Theta_{t-1})}\\
\ge & \frac{\normF{\Grad\L(\Theta_{t-1}}^2}{\normF{\Grad\L(\Theta_{t-1})}+n\epsilon/\sqrt{1-\mu_2^t}}-a-\eta L\sqrt{\frac{1-\mu_1}{1-\mu_2}}\frac{\mu_1}{(1-\mu_1)^2}\\
\ge & \frac{\normF{\Grad\L(\Theta_{t-1}}^2}{\normF{\Grad\L(\Theta_{t-1})}+n\tilde\epsilon}-a-\eta L\sqrt{\frac{1-\mu_1}{1-\mu_2}}\frac{\mu_1}{(1-\mu_1)^2}
\end{align*}
where $\tilde \epsilon := \epsilon/\sqrt{1-\mu_2}$ and $a$ is given in \eqref{eq:dcsquared}.

\par\noindent\textbf{Step 6: Upper-bounding the gradient norm and deriving rate.}
Now define $\phi_{n\tilde \epsilon}(x):=\frac{x^2}{x+n\tilde \epsilon},$ it then follows from \eqref{eq:dphiTm} and \eqref{eq:dMtilde} that: \begin{align*}
\frac{1}{T}\sum_{t=1}^T \phi_{n\tilde \epsilon}(\normF{\Grad\L(\Theta_{t-1}})\le & \frac{1}{T}\sum_{t=1}^T 
d_{t,\max}\norm{\Grad\L(\Theta_{t-1}}_*+a+\eta L\sqrt{\frac{1-\mu_1}{1-\mu_2}}\frac{\mu_1}{(1-\mu_1)^2}\\
\le & \frac{\Delta}{\eta Tc^2}+\frac{2}{Tc^2}\sqrt{\frac{1-\mu_1}{1-\mu_2}}\sum_{t=1}^T\norm{\Grad\L(\Theta_{t-1})-\hat M_t}_*
+\frac{\eta L}{2c^2}\left(\frac{1-\mu_1}{1-\mu_2}\right)\\
& +a+\eta L\sqrt{\frac{1-\mu_1}{1-\mu_2}}\frac{\mu_1}{(1-\mu_1)^2}\\
\le &  \frac{\Delta}{\eta Tc^2}+\frac{2}{Tc^2}\sqrt{\frac{1-\mu_1}{1-\mu_2}}\sum_{t=1}^T\norm{\Grad\L(\Theta_{t-1})-\hat M_t}_*
+\frac{\eta L}{2c^2}\left(\frac{1-\mu_1}{1-\mu_2}\right)\\
& +a+\eta L\sqrt{\frac{1-\mu_1}{1-\mu_2}}\frac{\mu_1}{(1-\mu_1)^2}\\
\le & \frac{\Delta}{\eta Tc^2}+\left(\frac{1-\mu_1}{1-\mu_2}\right)\frac{2\eta L\mu_1}{(1-\mu_1)^2c^2}
+\frac{\eta L}{2c^2}\left(\frac{1-\mu_1}{1-\mu_2}\right)\\
& +\eta L\frac{\sqrt{\mu_2(1-\mu_1)(1+\mu_2)}}{(1-\mu_2)^2}+\sqrt{\frac{1-\mu_1}{1-\mu_2}}\frac{\eta L\mu_1}{(1-\mu_1)^2}\\
=& \frac{\Delta}{\eta Tc^2}+\frac{\eta L C_{\mu}}{c^2},
\end{align*} where \[
C_\mu:= \left(\frac{1-\mu_1}{1-\mu_2}\right)\frac{2\mu_1}{(1-\mu_1)^2}
+\frac{1}{2}\left(\frac{1-\mu_1}{1-\mu_2}\right)
+\frac{c^2\sqrt{\mu_2(1-\mu_1)(1+\mu_2)}}{(1-\mu_2)^2}+\sqrt{\frac{1-\mu_1}{1-\mu_2}}\frac{\mu_1c^2}{(1-\mu_1)^2}
\] is a constant that depends on $\mu_1$ and $\mu_2$. Then by Lemma~\ref{lem:phi_eps}, \begin{align*}
\frac{1}{T}\sum_{t=1}^T \normF{\Grad\L(\Theta_{t-1}}\le & \frac{1}{T}\sum_{t=1}^T \phi_{n\tilde\epsilon}\left(\normF{\Grad\L(\Theta_{t-1}}\right)
+(n \tilde \epsilon)^{\frac{1}{2}}\frac{1}{T}\sum_{t=1}^T \sqrt{\phi_{n\tilde\epsilon}\left(\normF{\Grad\L(\Theta_{t-1}}\right)}\\
\le & \frac{1}{T}\sum_{t=1}^T \phi_{n\tilde\epsilon}\left(\normF{\Grad\L(\Theta_{t-1}}\right)
+\frac{(n\tilde\epsilon)^{\frac{1}{2}}}{\sqrt{T}}\sqrt{\sum_{t=1}^T \phi_{n\tilde\epsilon}\left(\normF{\Grad\L(\Theta_{t-1}}\right)}\\
\le & \frac{\Delta}{\eta Tc^2}+\frac{\eta L C_{\mu}}{c^2}  + (n\tilde \epsilon)^{\frac{1}{2}}\sqrt{\frac{\Delta}{\eta Tc^2}
+\frac{\eta L C_{\mu}}{c^2}}.
\end{align*}

In particular, if choosing 
$\eta=\bigO(T^{-\frac{1}{2}})$, $1-\mu_1=\Theta(1)$, 
$1-\mu_2=\Theta(1),$ $0\le \mu_1\le \mu_2<1$, $\epsilon =\bigO(T^{-\frac{1}{2}}n^{-1})$,
and $c=\Theta(1),$ then: \[
\frac{1}{T}\sum_{t=1}^T\norm{\Grad\L(\Theta_{t-1})}\le\bigO\left(T^{-\frac{1}{2}}\right)
\]
for large $T>0$. The proof is thus completed.
\end{proof}

\section{Proof of Theorem~\ref{thm:dsm}}\label{pf:dsm}
This section contains the detailed proof of Theorem~\ref{thm:dsm} for the convergence of \NAMOD in the stochastic setting.
\begin{proof}
\par\noindent\textbf{Step 1: A uniform upper bound on stepsize.}
For each $t\ge 0$ and $\tau\le t,$  define: \[
w_{1,t,\tau}:=\frac{1-\mu_1}{1-\mu_1^t}\mu_1^{t-\tau} ~\textrm{ and }~ w_{2,t,\tau}:=\frac{1-\mu_2}{1-\mu_2^t}\mu_2^{t-\tau}.
\]
It satisfies that $\sum_{\tau=1}^tw_{1,t,\tau}=\sum_{\tau=1}^tw_{2,t,\tau}=1.$
For each $j=1,\cdots, n$ and $t>0,$ let $M_t^j$ denote the $j$-th column of $M_t$, and $\bv_t^j$ denote the $j$-th element of $\bv_t.$
Write $\hat M_t:= \frac{1}{1-\mu_1^t} M_t,$ $\hat \bv_t:=\frac{1}{1-\mu_2^t}\bv_t,$ and \[
D_t:= \diag\left(\min\left\{\max\left\{\bd_t,c \bar d_t \mathbf{1}\right\},\frac{1}{c}\bar d_t \mathbf{1}\right\}\right),
\] where  $c\in (0,1]$ is the fixed clamping hyperparameter, $\bd_t$ and $\bar d_t$ are given in Algorithm~\ref{alg:namod}. 
Then: \[
\hat M_t=\frac{1-\mu_1}{1-\mu_1^t}\sum_{\tau=1}^t\mu_1^{t-\tau}G_\tau=\sum_{\tau=1}^t w_{1,t,\tau}G_\tau,
\] and \[
\hat \bv_t=\frac{1-\mu_2}{1-\mu_2^t}\sum_{\tau=1}^t\mu_2^{t-\tau}\N_c(G_\tau)\odot \N_c(G_\tau)=\sum_{\tau=1}^tw_{2,t,\tau}\N_c(G_\tau)\odot \N_c(G_\tau).
\]
By Lemma~\ref{lem:snr}, \begin{equation}\label{eq:dtj}
\comp{D_t}_{jj}\le \frac{\norm{\hat M^j_t}}{\sqrt{\hat \bv^j_t}}\le \sqrt{\frac{1-\mu_1}{1-\mu_2}}, \quad\forall j.
\end{equation}
Write: \begin{equation}\label{eq:dtm}
d_{t, \max} := \max_j \comp{D_t}_{jj}, 
~\textrm{ and } d_{t, \min} := \min_j \comp{D_t}_{jj}.
\end{equation} Then the condition number of $D_t$ is given by: \begin{equation}\label{eq:dcond}
\kappa\left(D_t\right) =\frac{d_{t, \max}}{d_{t, \min}}\le \min\left\{\kappa_t, \frac{1}{c^2}\right\},
\end{equation}
where $\kappa_t:=\kappa\left(\diag(\bd_t)\right)$ denotes the condition number of $\diag(\bd_t)$.

\par\noindent\textbf{Step 2: Expected descent inequality and averaging.}
Let $\expect_t[\cdot]:=\expect[\cdot|\Theta_{t-1}]$ denote the conditional expectation given the previous iterates
$\Theta_0,\cdots,\Theta_{t-1},$ and write $E_t:=\hat M_t-\Grad\L(\Theta_{t-1})$. Then, by Lemma~\ref{lem:od},
\begin{align*}
&\expect_t\left[\L(\Theta_{t})-\L(\Theta_{t-1})\right]\\
\le &\expect_t\left[-\dotP{\Grad\L(\Theta_{t-1})}
{ \eta O_tD_t}\right]+  \frac{\eta^2L}{2}\expect_t\left[ \norm{D_t}_2^2\right]\\
= &\expect_t\left[-\dotP{\Grad\L(\Theta_{t-1})-\hat M_t}{ \eta O_tD_t}\right] 
-\expect_t\left[\eta\dotP{\hat M_t}{O_tD_t}\right]+
\frac{\eta^2 L}{2}\expect_t\left[\norm{D_t}_2^2\right]\\
\le & \Biggl(\expect_t\left[ \eta d_{t,\max}\norm{\Grad\L(\Theta_{t-1})-\hat M_t}_*\right] 
-\expect_t\left[\eta d_{t,\min}\norm{\hat M_t}_*\right] \Biggr)
+\frac{\eta^2 L}{2}\expect_t\left[d_{t,\max}^2\right]\\
\le &  -\expect_t\left[\eta d_{t,\min}\norm{\hat M_t}_*\right] 
+\expect_t\left[ \eta d_{t,\max}\norm{E_t}_*\right] 
+\frac{\eta^2 L}{2}\expect_t\left[ d_{t,\max}^2\right],
\end{align*}
where $d_{t,\max}$ and $d_{t,\min}$ are defined in \eqref{eq:dtm}.
Rearranging the terms gives: \[
\expect_t\left[d_{t,\min}\norm{\hat M_t}_*\right]\le \expect_t\left[\L(\Theta_{t-1})-\L(\Theta_{t})\right]+
\expect_t\left[d_{t,\max}\norm{E_t}_*\right] 
+\frac{\eta L}{2}\expect_t\left[d_{t,\max}^2\right].
\]
Then by the law of total expectation and \eqref{eq:dtj},
\begin{align}\label{eq:dtotalM}
\frac{1}{T}\sum_{t=1}^T\expect\left[d_{t,\min}\norm{\hat M_t}_*\right]\nonumber
\le & \frac{\Delta}{\eta T}+ \frac{1}{T}\sum_{t=1}^T\expect\left[d_{t,\max}\norm{E_t}_*\right] 
+\frac{\eta L}{2T}\sum_{t=1}^T \expect\left[d_{t,\max}^2\right]\\
\le & \frac{\Delta}{\eta T}+ \frac{1}{T}\sqrt{\frac{1-\mu_1}{1-\mu_2}}\sum_{t=1}^T\expect\left[\norm{E_t}_*\right] 
+\frac{\eta L (1-\mu_1)}{2 (1-\mu_2)}.
\end{align}

\par\noindent\textbf{Step 3: Bounding the distance between bias-corrected momentum and true gradient.}
For each $t$, it satisfies: \begin{align*}
E_t = & \hat M_t - \expect\left[\hat M_t\right] + \expect\left[\hat M_t\right] -\Grad\L(\Theta_{t-1})\\
= & \sum_{\tau=1}^t w_{1,t,\tau} \left(G_\tau-\Grad\L(\Theta_{\tau-1})\right)+\sum_{\tau=1}^t w_{1,t,\tau} 
\left(\Grad\L(\Theta_{\tau-1})-\Grad\L(\Theta_{t-1})\right)
\end{align*} Hence, by \eqref{eq:dtj}, \begin{align*}
\expect\left[\normF{E_t}^2\right] \le & \expect\left[\normF{\sum_{\tau=1}^t w_{1,t,\tau} \left(G_\tau-\Grad\L(\Theta_{\tau-1})\right)}^2\right] 
+\normF{\sum_{\tau=1}^t w_{1,t,\tau} \left(\Grad\L(\Theta_{\tau-1})-\Grad\L(\Theta_{t-1})\right)}^2\\
\le & \left(\frac{1-\mu_1}{1-\mu_1^t}\right)^2\left(\sum_{\tau=1}^t\mu_1^{2(t-\tau)}\right)\frac{\sigma^2}{b}
+\sum_{\tau=1}^t w_{1,t,\tau}\normF{\Grad\L(\Theta_{\tau-1})-\Grad\L(\Theta_{t-1})}^2\\
\le & \left(\frac{1-\mu_1}{1-\mu_1^t}\right)^2\left(\frac{1-\mu_1^{2t}}{1-\mu_1^2}\right)\frac{\sigma^2}{b}
+\left(\frac{1-\mu_1}{1-\mu_1^t}\right)\sum_{\tau=1}^t\mu_1^{t-\tau}L^2\norm{\Theta_{\tau-1}-\Theta_{t-1}}_2^2\\
\le & \left(\frac{1-\mu_1}{1-\mu_1^t}\right)\left(\frac{1+\mu_1^{t}}{1+\mu_1}\right)\frac{\sigma^2}{b}
+ \left(\frac{1-\mu_1}{1-\mu_1^t}\right)L^2\eta^2\sum_{\tau=1}^t\mu_1^{t-\tau}\left(\sum_{s=\tau}^{t-1}\norm{D_s}_2\right)^2\\
\le & \left(\frac{1-\mu_1}{1-\mu_1^t}\right)\left[\left(\frac{1+\mu_1^{t}}{1+\mu_1}\right)\frac{\sigma^2}{b}
+ \left(\frac{1-\mu_1}{1-\mu_2}\right)L^2\eta^2\sum_{\tau=1}^t\mu_1^{t-\tau}\left(t-\tau\right)^2\right]\\
\le & \left(\frac{1-\mu_1}{1-\mu_1^t}\right)\left[\left(\frac{1+\mu_1^{t}}{1+\mu_1}\right)\frac{\sigma^2}{b}
+ \left(\frac{1-\mu_1}{1-\mu_2}\right)L^2\eta^2\sum_{\tau=1}^\infty\mu_1^{\tau}\tau^2\right]\\
= & \left(\frac{1-\mu_1}{1-\mu_1^t}\right)\left[\left(\frac{1+\mu_1^{t}}{1+\mu_1}\right)\frac{\sigma^2}{b}
+ \left(\frac{\mu_1(1+\mu_1)}{\left(1-\mu_2\right)(1-\mu_1)^2}\right)L^2\eta^2\right]\\
\le & \left(\frac{1-\mu_1}{1-\mu_1^t}\right)\frac{\sigma^2}{b}+\frac{\mu_1(1+\mu_1)L^2\eta^2}{\left(1-\mu_2\right)(1-\mu_1)(1-\mu_1^t)}.
\end{align*}
Then by Lemma~\ref{lem:mut}, it follows that: \begin{align*}
\frac{1}{T}\sum_{t=1}^T\expect\left[\normF{E_t}^2\right] \le & \left(\frac{1}{T}\sum_{t=1}^T\frac{1}{1-\mu_1^t}\right)
\left(\frac{\sigma^2 (1-\mu_1)}{b}+\frac{\mu_1(1+\mu_1)L^2\eta^2}{(1-\mu_1)(1-\mu_2)}\right)\\
\le & \left(1+\frac{\mu_1}{(1-\mu_1)T}-\frac{1}{T\ln\mu_1}\ln\left(\frac{1-\mu_1^T}{1-\mu_1}\right)\right)
\left(\frac{\sigma^2 (1-\mu_1)}{b}+\frac{\mu_1(1+\mu_1)L^2\eta^2}{(1-\mu_1)(1-\mu_2)}\right).
\end{align*}
By Cauchy-Schwarz inequality and Jensen's inequality, \begin{align}\label{eq:dsumEt}
\frac{1}{T}\sum_{t=1}^T\expect\left[\norm{E_t}_*\right]\le &\sqrt{\frac{r}{T}\sum_{t=1}^T\expect\left[\normF{E_t}^2\right]}\nonumber\\
\le & \sqrt{1+\frac{\mu_1}{(1-\mu_1)T}-\frac{1}{T\ln\mu_1}\ln\left(\frac{1-\mu_1^T}{1-\mu_1}\right)} 
\left(\frac{\sigma\sqrt{r(1-\mu_1)}}{\sqrt{b}}+L\eta 
\sqrt{\frac{r\mu_1(1+\mu_1)}{\left(1-\mu_2\right)(1-\mu_1)}}\right).
\end{align}

\par\noindent\textbf{Step 4: Bounding $\expect\!\left[\sqrt{\sum_{j=1}^n\hat \bv_t^j}\right]$ .}
For each $j$, let $G^j_\tau$ denote the $j$-th column of $G_\tau.$  By Minkowski inequality and Jensen's inequality,\begin{align*}
\expect\left[\sqrt{\sum_{j=1}^n\hat \bv^j_t}\right] = & \expect\left[\sqrt{\sum_{j=1}^n\sum_{\tau=1}^tw_{2,t,\tau}\norm{G^j_\tau}^2}\right]=
\expect\left[\sqrt{\sum_{\tau=1}^tw_{2,t,\tau}\normF{G_\tau}^2}\right]\\
\le & \expect\left[\sqrt{\sum_{\tau=1}^tw_{2,t,\tau}\normF{\Grad\L(\Theta_{\tau-1})-G_\tau}^2}\right]
+\expect\left[\sqrt{\sum_{\tau=1}^tw_{2,t,\tau}\normF{\Grad\L(\Theta_{\tau-1})}^2}\right]\\
\le &  \frac{\sigma}{\sqrt{b}}+\expect\left[\normF{\Grad\L(\Theta_{t-1})}\right]
+\expect\left[\sqrt{\sum_{\tau=1}^tw_{2,t,\tau}\normF{\Grad\L(\Theta_{\tau-1})-\Grad\L(\Theta_{t-1})}^2}\right]\\
\le & \frac{\sigma}{\sqrt{b}}
+\expect\left[\normF{\Grad\L(\Theta_{t-1})}\right]+\eta L\expect\left[\sqrt{\sum_{\tau=1}^tw_{2,t,\tau}\left(\sum_{s=\tau}^{t-1}
\norm{D_s}_2\right)^2}\right]\\
\stackrel{\eqref{eq:dtj}}{\le} & \frac{\sigma}{\sqrt{b}}+\expect\left[\normF{\Grad\L(\Theta_{t-1})}\right]+\eta L\sqrt{\frac{1-\mu_1}{1-\mu_2^t}}\sqrt{\sum_{\tau=1}^t\mu_2^{t-\tau}\left(t-\tau\right)^2}\\
\le & \frac{\sigma}{\sqrt{b}}+\expect\left[\normF{\Grad\L(\Theta_{t-1})}\right]
+\eta L\sqrt{\frac{1-\mu_1}{1-\mu_2^t}}\sqrt{\sum_{\tau=1}^\infty\mu_2^{t-\tau}\left(t-\tau\right)^2}\\
\le & \frac{\sigma}{\sqrt{b}}+\expect\left[\normF{\Grad\L(\Theta_{t-1})}\right]
+\eta L\sqrt{\frac{1-\mu_1}{1-\mu_2^t}}\sqrt{\sum_{\tau=1}^\infty\mu_2^{\tau}\tau^2}\\
\le & \frac{\sigma}{\sqrt{b}}+\expect\left[\normF{\Grad\L(\Theta_{t-1})}\right]+ \eta L 
\sqrt{\frac{(1-\mu_1)\mu_2(1+\mu_2)}{(1-\mu_2^t)(1-\mu_2)^3}}\\
\le & \frac{\sigma}{\sqrt{b}}+\expect\left[\normF{\Grad\L(\Theta_{t-1})}\right]+ \sqrt{2}\eta L 
\sqrt{\frac{(1-\mu_1)}{(1-\mu_2^t)(1-\mu_2)^3}}.
\end{align*}
Hence, \begin{equation}\label{eq:dsvt}
\expect\left[\sqrt{\sum_{j=1}^n \hat \bv^j_t}+\frac{n\epsilon}{\sqrt{1-\mu_2^t}}\right]\le \expect\left[\norm{\Grad\L(\Theta_{t-1})}\right]+a_t,
\end{equation} where \begin{equation*}
a_t: = \frac{\sigma}{\sqrt{b}}
+ \sqrt{2}\eta L \sqrt{\frac{(1-\mu_1)}{(1-\mu_2^t)(1-\mu_2)^3}}+\frac{n\epsilon}{\sqrt{1-\mu_2^t}}.
\end{equation*} Then by Lemma~\ref{lem:mutsqrt},\begin{equation}\label{eq:dsctsum}
\frac{1}{T}\sum_{t=1}^T a_t\le \frac{\sigma}{\sqrt{b}}+\left(\sqrt{2}\eta L \sqrt{\frac{(1-\mu_1)}{(1-\mu_2)^3}}+n\epsilon\right)
\left(1-\frac{2\ln(1+\sqrt{1-\mu_2^T})}{T\ln\mu_2}\right).
\end{equation}

\par\noindent\textbf{Step 5: Lower-bounding $\expect\left[d_{t,\max}\normF{\hat M_t}\right]$ and relating to 
$\expect\left[\normF{\Grad\L(\Theta_{t-1})}\right]$ .}
By Cauchy-Schwarz inequality, \begin{align*}
\left(\expect\left[\normF{\hat M_t}\right]\right)^2= & \left(\expect\left[\frac{\normF{\hat M_t}}
{\left(\sqrt{\sum_{j=1}^n \hat \bv^j_t}+n\epsilon_t\right)^{\frac{1}{2}}}
\cdot \left(\sqrt{\sum_{j=1}^n \hat \bv^j_t}+n \epsilon_t\right)^{\frac{1}{2}}\right]\right)^2 \\
\le & \expect\left[\frac{\normF{\hat M_t}^2}
{\sqrt{\sum_{j=1}^n \hat \bv^j_t}+n\epsilon_t}\right]
\expect\left[\sqrt{\sum_{j=1}^n \hat \bv^j_t}+n \epsilon_t\right] \\
\le & \expect\left[\sqrt{\frac{\sum_{j=1}^n\norm{\hat M^j_t}^2}
{\sum_{j=1}^n \hat \bv^j_t+n\epsilon_t}}\cdot\normF{\hat M_t}\right]
\expect\left[\sqrt{\sum_{j=1}^n \hat \bv^j_t}+n \epsilon_t\right]\\
\le & \expect\left[d_{t,\max}\normF{\hat M_t}\right]\expect\left[\sqrt{\sum_{j=1}^n \hat \bv^j_t}+n \epsilon_t\right],
\end{align*}
where $\epsilon_t:=\epsilon/\sqrt{1-\mu_2^t}.$ Combining the above with \eqref{eq:dsvt} gives: \[
\expect\left[d_{t,\max}\normF{\hat M_t}\right]\ge  \frac{\left(\expect\left[\normF{\hat M_t}\right]\right)^2}
{\expect\left[\normF{\Grad\L(\Theta_{t-1})}\right]+a_t}
\ge \frac{\left(\expect\left[\normF{\Grad\L(\Theta_{t-1})}\right]-
\expect\left[\normF{E_t}\right]\right)^2}
{\expect\left[\normF{\Grad\L(\Theta_{t-1})}\right]+a_t}.
\]
Rearranging the terms gives: \[
\left(\expect\left[\normF{\Grad\L(\Theta_{t-1})}\right]\right)^2-
\left(2\expect\left[\normF{E_t}\right]+\expect\left[d_{t,\max}\normF{\hat M_t}\right]\right)
\expect\left[\normF{\Grad\L(\Theta_{t-1})}\right]
-a_t\expect\left[d_{t,\max}\normF{\hat M_t}\right] +\expect\left[\normF{E_t}\right]^2\le 0.
\]
Then solving for $\expect\left[\norm{\Grad\L(\Theta_{t-1})}\right]$ gives \begin{align*}
&\expect\left[\normF{\Grad\L(\Theta_{t-1})}\right]\\
\le & \frac{2\expect\left[\normF{E_t}\right]+\expect\left[d_{t,\max}\normF{\hat M_t}\right]
+\sqrt{\left(2\expect\left[\normF{E_t}\right]+\expect\left[d_{t,\max}\normF{\hat M_t}\right]\right)^2
+4a_t\expect\left[d_{t,\max}\normF{\hat M_t}\right]-4\expect\left[\normF{E_t}\right]^2}}{2}\\
\le & \expect\left[\normF{E_t}\right]+\frac{1}{2}\expect\left[d_{t,\max}\normF{\hat M_t}\right]
+\sqrt{\frac{1}{4}\expect\left[d_{t,\max}\normF{\hat M_t}\right]^2
+\left(\expect\left[\normF{E_t}\right]+a_t\right)\expect\left[d_{t,\max}\normF{\hat M_t}\right]}\\
\le & \expect\left[\normF{E_t}\right]+\expect\left[d_{t,\max}\normF{\hat M_t}\right]
+\sqrt{\left(\expect\left[\normF{E_t}\right]+a_t\right)\expect\left[d_{t,\max}\normF{\hat M_t}\right]}\\
\le & \expect\left[\normF{E_t}\right]+\expect\left[d_{t,\max}\normF{\hat M_t}\right]
+\sqrt{\left(\expect\left[\normF{E_t}\right]+a_t\right)\expect\left[d_{t,\max}\normF{\hat M_t}\right]}.
\end{align*} 

\par\noindent\textbf{Step 6: Deriving the convergence rate}
By Cauchy-Schwarz inequality and \eqref{eq:dcond}, it then follows that: \begin{align*}
\frac{1}{T}\sum_{t=1}^T\expect\left[\normF{\Grad\L(\Theta_{t-1})}\right]  
\le & \frac{1}{T}\sum_{t=1}^T\expect\left[\normF{E_t}\right]+
\frac{1}{T}\min\left\{\kappa_t,\frac{1}{c^2}\right\}\sum_{t=1}^T\expect\left[d_{t,\min}\normF{\hat M_t}\right]\\
& + \sqrt{\frac{1}{T}\sum_{t=1}^T\left(\expect\left[\normF{E_t}\right]+a_t\right)}
\sqrt{\frac{1}{T}\min\left\{\kappa_t,\frac{1}{c^2}\right\}\sum_{t=1}^T\expect\left[d_{t,\max}\normF{\hat M_t}\right]}.
\end{align*}
Combining the above with \eqref{eq:dtotalM} and \eqref{eq:dsctsum} gives:
\begin{align*}
&\frac{1}{T}\sum_{t=1}^T\expect\left[\normF{\Grad\L(\Theta_{t-1})}\right]\\
\le &\frac{1}{T}\sum_{t=1}^T\expect\left[\norm{E_t}_*\right]+  \frac{\Delta}{\eta Tc^2}+\frac{\eta L (1-\mu_1)}{2c^2 (1-\mu_2)}
+ \frac{1}{Tc^2}\sqrt{\frac{1-\mu_1}{1-\mu_2}}\sum_{t=1}^T\expect\left[\norm{E_t}_*\right] \\
& +\sqrt{\frac{1}{T}\sum_{t=1}^T\left(\expect\left[\normF{E_t}\right]+a_t\right)} \cdot \sqrt{\frac{\Delta}{\eta Tc^2}
+ \frac{1}{Tc^2}\sqrt{\frac{1-\mu_1}{1-\mu_2}}\sum_{t=1}^T\expect\left[\norm{E_t}_*\right] 
+\frac{\eta L (1-\mu_1)}{2c^2 (1-\mu_2)}}\\
\le & \frac{1}{T}\sum_{t=1}^T\expect\left[\norm{E_t}_*\right]+  \frac{\Delta}{\eta Tc^2}+\frac{\eta L (1-\mu_1)}{2c^2 (1-\mu_2)}
+ \frac{1}{Tc^2}\sqrt{\frac{1-\mu_1}{1-\mu_2}}\sum_{t=1}^T\expect\left[\norm{E_t}_*\right] \\
& +\sqrt{\frac{1}{T}\sum_{t=1}^T \expect\left[\normF{E_t}\right]+
\frac{\sigma}{\sqrt{b}}+\left(\sqrt{2}\eta L \sqrt{\frac{(1-\mu_1)}{(1-\mu_2)^3}}+n\epsilon\right)
\left(1-\frac{2\ln(1+\sqrt{1-\mu_2^T})}{T\ln\mu_2}\right)}\\
& \cdot \sqrt{\frac{\Delta}{\eta Tc^2}
+ \frac{1}{Tc^2}\sqrt{\frac{1-\mu_1}{1-\mu_2}}\sum_{t=1}^T\expect\left[\norm{E_t}_*\right] 
+\frac{\eta L (1-\mu_1)}{2c^2 (1-\mu_2)}}.
\end{align*}
In particular, for large $T>0,$ if choosing $\eta=\bigO(T^{-\frac{3}{4}})$, $1-\mu_1=\Theta(T^{-\frac{1}{2}})$,
$1-\mu_2=\Theta(T^{-\frac{1}{2}}),$ $\epsilon =\bigO(T^{-\frac{1}{2}})$, and $c=\Theta(1)$, then, 
by \eqref{eq:dsumEt},\[
\frac{1}{T}\sum_{t=1}^T\expect\left[\norm{E_t}_*\right]
\le\bigO\left(T^{-\frac{1}{4}}\right),
\]
and it follows that:
\[
\frac{1}{T}\sum_{t=1}^T\expect\left[\norm{\Grad\L(\Theta_{t-1})}\right]\le \bigO\left(T^{-\frac{1}{4}}+\sqrt{\sigma}b^{-\frac{1}{4}}T^{-\frac{1}{8}}\right),
\] where $b$ is the batch size. 
The proof is thus completed.
\end{proof}
\end{document}